\newtcolorbox{mybox}[2][]
{colback = red!5!white, colframe = red!75!black, fonttitle = \bfseries,
colbacktitle = red!85!black, enhanced,
attach boxed title to top center={yshift=-2mm},
title=#2,#1}
\newtheorem{theorem}{Theorem}[section]
\newtheorem{lemma}[theorem]{Lemma}
\newtheorem{definition}[theorem]{Definition}
\newtheorem{corollary}[theorem]{Corollary}
\newtheorem{assumption}[theorem]{Assumption}
\numberwithin{equation}{section}
 \def\M{\mathcal{M}} 
\def\Z{\mathcal{Z}} 
\def\lan{\operatorname{lan}}
\title{Landmark Alternating Diffusion}
\author[S.-Y. Yeh]{Sing-Yuan Yeh}
\address{Data Science Degree Program, National Taiwan University and Academia Sinica, Taipei, 106, Taiwan}
\email{d10948003@ntu.edu.tw}
\author[H.-T. Wu]{Hau-Tieng Wu}
\address{Courant Institute of Mathematical Sciences, New York University, New York, NY, 10012 USA}
\email{hauwu@cims.nyu.edu}
\author[R. Talmon]{Ronen Talmon}
\address{Viterbi Faculty of Electrical and Computer Engineering, Technion - Israel Institute of Technology, Haifa, 3200003, Israel}
\email{ronen@ef.technion.ac.il}
\author[M.-P. Tsui]{Mao-Pei Tsui}
\address{Department of Mathematics, National Taiwan University, Taipei, 106, Taiwan;
National Center for Theoretical Sciences, Mathematics Division, Taipei, 106, Taiwan}
\email{maopei@math.ntu.edu.tw}
\begin{document}

\begin{abstract}
Alternating Diffusion (AD) is a commonly applied diffusion-based sensor fusion algorithm. While it has been successfully applied to various problems, its computational burden remains a limitation. Inspired by the landmark diffusion idea considered in the Robust and Scalable Embedding via Landmark Diffusion (ROSELAND), we propose a variation of AD, called Landmark AD (LAD), which captures the essence of AD while offering superior computational efficiency. We provide a series of theoretical analyses of LAD under the manifold setup and apply it to the automatic sleep stage annotation problem with two electroencephalogram channels to demonstrate its application.
\end{abstract}

\maketitle


\section{Introduction}

With the advancement of sensing technology, it is becoming increasingly common to collect data from multiple sensors simultaneously. Sensor fusion is a technique that involves combining data from multiple sensors to obtain an accurate, reliable, and comprehensive understanding of the underlying phenomenon or environment being monitored \cite{7214350,gustafsson2012statistical}. This process aims to overcome limitations inherent in the usage of individual sensors, such as noise, bias, or limited coverage, by leveraging complementary information from multiple sources. 
Various sensor fusion techniques and algorithms exist, including statistical methods, machine learning algorithms, Bayesian inference, and mathematical modeling. 
Specific examples include linear algorithms such as
the canonical correlation analysis (CCA) and its variants \cite{hotelling1936relations,Horst1961,6788402,Hwang2013}, nonparametric canonical correlation analysis (NCCA) \cite{NCCApaper}, alternating diffusion (AD) \cite{lederman2018learning, talmon2019} and its generalizations \cite{katz2019alternating,shnitzer2019recovering,shnitzer2024spatiotemporal}, time-coupled diffusion maps \cite{marshall2018time}, and multiview diffusion maps \cite{lindenbaum2020multi}, to mention just a few. 
See \cite{shnitzer2019recovering,zhuang2020technical,shnitzer2024spatiotemporal} and the references therein for a more comprehensive literature review.
Some of these algorithms aim at extracting common information shared by the datasets acquired by the different sensors while others attempt to identify the different and unique information among them.

Our focus in this paper is centered on the diffusion-based approach, particularly the AD algorithm. 
AD has shown remarkable efficacy in solving a diverse array of problems, producing significant results across various fields. These applications include, but are not limited to, audio-visual voice activity detection \cite{8281539}, sequential audio-visual correspondence \cite{7558246}, automatic sleep stage identification using multiple sensors \cite{lederman2015}, and advanced interpretation of electroencephalogram (EEG) signals \cite{liu2020diffuse,liu2020explore}. Furthermore, AD has been instrumental in seismic event modeling \cite{lindenbaum2018multiview}, predicting Intelligence Quotient (IQ) using two functional Magnetic Resonance Imaging (fMRI) paradigms \cite{xiao2019manifold}, multi-microphone speaker localization \cite{laufer2020data}, fetal electrocardiogram analysis \cite{shnitzer2019recovering}, and the development of multiview spectral clustering methods \cite{roman2023multiview}. This diverse range of applications highlights AD's versatility and its potential for driving significant advancements in various scientific and technological domains.
Additionally, various theoretical results have been established to support the efficacy of AD. For example, in \cite{lederman2018learning}, it is shown that AD extracts the common information under the general metric space model. Under the manifold setup, it is demonstrated in \cite{talmon2019} that AD asymptotically converges to the Laplace-Beltrami operator of the common manifold, which is insusceptible to the sensor-specific nuisance information modeled by a product manifold structure. The Laplace-Beltrami operator might be deformed if the common information shared by the two sensors is not identical but diffeomorphically related. The behavior of AD under the null hypothesis has been reported in \cite{DW1} using random matrix theory and free probability framework, showing that asymptotically the spectrum converges to a deterministic distribution. Under the same framework as in \cite{DW1}, various noisy setups under the nonnull hypothesis have been studied in \cite{ding2024kernel}, including scenarios that model situations when one sensor malfunctions. 

Although AD has been extensively researched and applied in various fields, a notable limitation arises from its computational demands. This is primarily due to the eigendecomposition, which is a critical yet resource-intensive step in the algorithm. In this paper, we present a new variation of AD that is theoretically sound and retains the core principles of the original method. This variation is designed to significantly improve computational efficiency. Our approach makes use of the landmark diffusion concept originally introduced in the algorithm \emph{Robust and Scalable Embedding via Landmark Diffusion} (ROSELAND) \cite{shen2020,shen2022} to expedite the diffusion maps algorithm, which is a central manifold learning method \cite{coifman2006}. The concept of landmark diffusion involves constraining diffusion from one point to another using a pre-designed landmark set. By leveraging this landmark set, one can alleviate the computational demand of the eigendecomposition of a large kernel matrix of the entire dataset using the eigendecomposition of the much smaller kernel matrix of the landmark set. Specifically, by forcing the diffusion on each sensor to go through a predetermined landmark set, the AD algorithm can be sped up. We refer to the proposed algorithm as \emph{Landmark Alternating Diffusion} (LAD). Additionally, besides demonstrating its computational superiority, we provide a series of theoretical analyses under the manifold framework. Furthermore, we showcase the application of LAD to the automatic sleep stage annotation problem, illustrating its practical utility.

The remainder of the paper is organized as follows. In Section \ref{review section for AD}, we review AD. In Section \ref{sec:mainalg}, we present the proposed LAD algorithm. Justification of LAD on the linear algebra level is given in Section \ref{section justification of LAD on linear algebra level}. In Section \ref{sec:main_thm}, we analyze the asymptotic behavior of LAD under a manifold setting, which equates LAD and AD on the asymptotical region. This analysis also provides us with a method to choose the hyperparameters of LAD. In Section \ref{sec:simu}, we apply Landmark AD to several simulation datasets to validate the theory proposed in Section \ref{sec:main_thm}. In Section \ref{sec:sleep}, LAD is applied to the sleep dataset by fusing signals from two channels to analyze sleep stages. In Section \ref{sec:proof}, we provide the proof of the main theorem in Section \ref{sec:main_thm}.



\section{Background on Alternating Diffusion (AD)} \label{review section for AD}
We start with a quick review of the AD algorithm. Suppose we have $n$ pairs of measurements recorded from two sensors simultaneously, denoted as $\mathcal{X}:=\{(s_i,r_i)\}_{i=1}^n\subset(\mathfrak{M}_1,d_1)\times(\mathfrak{M}_2,d_2)$, where $(\mathfrak{M}_1,d_1)$ and $(\mathfrak{M}_2,d_2)$ are metric spaces; that is, the dataset $\mathcal{X}_1:=\{s_i\}_{i=1}^p\subset \mathfrak{M}_1$ is from the first sensor, and $\mathcal{X}_2:=\{r_i\}_{i=1}^p\subset \mathfrak{M}_2$ is from the second sensor. Take two non-negative smooth kernel functions $\tilde{K}^{(1)},\,\tilde{K}^{(2)}: \mathbb{R}_{\geq 0} \rightarrow \mathbb{R}_{+}$ that decay sufficiently fast. To simplify the discussion, we assume $\tilde{K}^{(1)}$ and $\tilde{K}^{(2)}$ are Gaussian functions $\frac{1}{\sqrt{2 \pi}} e^{-t^2 / 2}$, but other kernels can be considered. Define the affinity matrices $\mathbf{W}^{(1)},\,\mathbf{W}^{(2)}\in \mathbb{R}^{n\times n}$ for the first and second sensors as
\begin{equation*}
    \mathbf{W}^{(1)}_{ij}=\tilde{K}^{(1)}\left(\frac{d^{(1)}(s_i,s_j)^2}{\epsilon_1}\right)\ \ \mbox{and}\ \  \mathbf{W}^{(2)}_{ij}=\tilde{K}^{(2)}\left(\frac{d^{(2)}(r_i,r_j)^2}{\epsilon_2}\right)\,,
    \end{equation*}
where $\epsilon_1, \epsilon_2>0$ are kernel bandwidths that can be chosen by the user. Define the associated diagonal degree matrices by 
\begin{equation*}
    \mathbf{D}^{(\ell)}_{ii}=\sum_{j=1}^n \mathbf{W}_{ij}^{(\ell)}
\end{equation*}
where $\ell=1,2$, and the diffusion matrices 
\begin{equation*}
\mathbf{M}^{(\ell)}={\mathbf{D}^{(\ell)}}^{-1}\mathbf{W}^{(\ell)}\,.
\end{equation*}
The \textit{alternating diffusion matrix} introduced in \cite{lederman2018learning,talmon2019} is given by
\begin{equation}\label{eq:ad_matrix}
\mathbf{M}=\mathbf{M}^{(1)}\mathbf{M}^{(2)}\in \mathbb{R}^{n\times n}\,.
\end{equation}
$\mathbf{M}$ is also a diffusion matrix in the sense that the sums of rows equal $1$ and the spectral norm is bounded by $1$, but in general, it might not be diagonalizable. 

Suppose the eigenvalue decomposition (EVD) of $\mathbf{M}=\mathbf{U}\mathbf{\Lambda} \mathbf{U}^{-1}$ exists, where $\mathbf{U}\in Gl(n)$ and $\mathbf{\Lambda}$ is a diagonal matrix. Denote the eigenvalues satisfying $1=\lambda_1\geq\lambda_2\geq\cdots\geq \lambda_n$ and the associated eigenvectors $u_1,u_2,\ldots,u_n$. Choose $1\leq q\leq n$, and denote $\mathbf{U}_q=[u_2,\ldots,u_{q+1}]\in \mathbb{R}^{n\times q}$ and corresponding eigenvalues $\mathbf{\Lambda}_q=\texttt{diag}(\lambda_2,\lambda_3,\ldots,\lambda_{q+1})$. We then embed the dataset into a $q$-dim Euclidean space via $\Psi_{t}$ by
\begin{equation*}
\Psi_{t}:\left(s_i,r_i\right)\mapsto e^\top_i\mathbf{U}_q\mathbf{\Lambda}_q^t\in\mathbb{R}^q
\end{equation*}
where $e_j$ is a unit column vector with 1 in the $j$-th entry, $\top$ means the transpose and $t>0$ is the diffusion time chosen by the user. $\Psi_{t}$ is the AD {\em with the initial diffusion on the second sensor}, or in short the AD. While the AD has been applied to a broad range of problems, e.g.,  \cite{lederman2015,katz2019alternating,liu2020diffuse,laufer2020data}, it is limited by its computational time, which is governed by the eigenvalue decomposition. Note that even if $\mathbf{M}^{(1)}$ and $\mathbf{M}^{(2)}$ are both sparse, which could make the eigenvalue decomposition of each more efficient, the product in \eqref{eq:ad_matrix} hinders the sparsity structure.

\section{Proposed Landmark Alternating Diffusion (LAD) Algorithm}\label{section LAD}
In this section, we present the proposed LAD algorithm with initial theoretical support from a linear algebra viewpoint. 

\subsection{Landmark alternating diffusion}\label{sec:mainalg}
To alleviate the computational burden and speed up AD, inspired by a recent landmark diffusion algorithm ROSELAND \cite{shen2020,shen2022}, we propose a new algorithm, called {\em landmark AD} (LAD). 

Take a set $\tilde{\mathcal{Z}}:=\{(a_k,b_k)\}_{k=1}^m\subset (\mathfrak{M}_1,d_1)\times  (\mathfrak{M}_2,d_2)$, where $m<n$. We call $\tilde{\mathcal{Z}}$ the {\em landmark set}, $\tilde{\mathcal{Z}}_1:=\{a_k\}_{k=1}^m\subset (\mathfrak{M}_1,d_1)$ the landmark set of the first sensor, and $\tilde{\mathcal{Z}}_2:=\{b_k\}_{k=1}^m\subset (\mathfrak{M}_2,d_2)$ the landmark set of the second sensor. In practice, $\tilde{\mathcal{Z}}$ could be a subset of $\tilde{\mathcal{X}}$. Construct two \textit{landmark affinity matrices} $\bar{\mathbf{W}}^{(1)},\,\bar{\mathbf{W}}^{(2)}\in \mathbb{R}^{n\times m}$, which are defined by
\begin{equation}
\bar{\mathbf{W}}^{(1)}_{ik}=\tilde{K}^{(1)}\left(\frac{d^{(1)}(s_i,a_k)^2}{\epsilon_1}\right)\ \ \mbox{and}\ \ \bar{\mathbf{W}}^{(2)}_{ik}=\tilde{K}^{(2)}\left(\frac{d^{(2)}(r_i,b_k)^2}{\epsilon_2}\right)\,.
\end{equation}
Construct a diagonal matrix $\bar{\mathbf{D}}^{(2)}$ as follows:
\begin{equation}
\bar{\mathbf{D}}^{(2)}=\texttt{diag}\big(\bar{\mathbf{W}}^{(2)\top}\bar{\mathbf{W}}^{(2)}\mathbf{1}_{m}\big)\,,
\end{equation}
where $\mathbf{1}_{m}\in \mathbb{R}^{m\times 1}$ is a column vector with all entries 1. Then, fix a {\em normalization factor} $\alpha\in[0,1]$ and define the \textit{landmark diffusion matrix} of the second sensor 
\begin{equation}
\bar{\mathbf{M}}^{(2)}_\alpha:=\bar{\mathbf{W}}^{(2)}(\bar{\mathbf{D}}^{(2)})^{-\alpha}\in \mathbb{R}^{n\times m}\,.
\end{equation}
Next, construct a diagonal matrix $\bar{\mathbf{D}}^{(1)}_\alpha$ as \begin{equation}
\bar{\mathbf{D}}^{(1)}_{\alpha}=\texttt{diag}\big(\bar{\mathbf{W}}^{(1)}\bar{\mathbf{M}}^{(2)\top}_\alpha\mathbf{1}_{n}\big)\in \mathbb{R}^{n\times n}\,.
\label{eq:diag1}
\end{equation}
Then, the \textit{landmark diffusion matrix} of the first sensor are defined as
\begin{equation}
\bar{\mathbf{M}}^{(1)}_\alpha=(\bar{\mathbf{D}}^{(1)}_\alpha)^{-1}\bar{\mathbf{W}}^{(1)}\in \mathbb{R}^{n\times m}\,.
\label{eq:markov1}
\end{equation}
The \textit{landmark AD matrix} is defined as 
\[
\bar{\mathbf{M}}_\alpha=\bar{\mathbf{M}}^{(1)}_\alpha\bar{\mathbf{M}}^{(2)\top}_\alpha\in \mathbb{R}^{n\times n}\,.
\]

To speed up the computation, consider the matrix $\bar{\mathbf{M}}^{(2)\top}_\alpha\bar{\mathbf{M}}^{(1)}_\alpha\in \mathbb{R}^{m\times m}$, compute the eigenstructure of $\bar{\mathbf{M}}^{(2)\top}_\alpha\bar{\mathbf{M}}^{(1)}_\alpha$ on $m$ landmarks and extrapolate to the whole database in the following way. 
Denote the EVD of $\bar{\mathbf{M}}^{(2)}_\alpha\bar{\mathbf{M}}^{(1)}_\alpha$ by
\begin{equation}
\bar{\mathbf{M}}^{(2)\top}_\alpha\bar{\mathbf{M}}^{(1)}_\alpha=\bar{\mathbf{V}}\bar{\mathbf{\Lambda}}\bar{\mathbf{V}}^{-1}\in \mathbb{R}^{m\times m}\,,
\end{equation}
where the diagonal entries of $\bar{\mathbf{\Lambda}}$ are $\bar{\lambda}_1\geq\bar{\lambda}_2\geq\cdots\geq \bar{\lambda}_{m'}>\bar{\lambda}_{m'+1}=\ldots=\bar{\lambda}_{m}=0$. Set $\bar{\mathbf{U}}=\bar{\mathbf{M}}^{(1)}_\alpha\bar{\mathbf{V}}\in \mathbb{R}^{n\times m}$. Choose $1\leq q\leq m'$. Let $\bar{\mathbf{U}}_q\in \mathbb{R}^{n\times q}$ be a matrix that contains the first $q$ columns of $\bar{\mathbf{U}}$ and $\bar{\mathbf{\Lambda}}_q$ a diagonal matrix containing the associated eigenvalues. The $\alpha$-normalized LAD {\em with the initial diffusion on the second sensor}, or $\alpha$-LAD in short, is defined as
\begin{equation*}
\bar{\Psi}_{t}:\left(s_i,r_i\right)\mapsto e^\top_i \bar{\mathbf{U}}_q\,(\bar{\mathbf{\Lambda}}_q)^t\in\mathbb{R}^q\,,
\end{equation*}
where $t>0$ is the diffusion time  chosen by the user. The output is the diffusion result on $\{r_i\}_{i=1}^n$. The pseudocode of $\alpha$-LAD is summarized in Algorithm \ref{alg:lead}.

\begin{algorithm}[ht]
\caption{Landmark Alternating Diffusion (LAD) }
\begin{flushleft}
\hspace*{\algorithmicindent} \textbf{Input:} Two datasets, $\{s_i\}_{i=1}^n\subset (\mathfrak{M}_1,d_1)$ and $\{r_i\}_{i=1}^n\subset (\mathfrak{M}_2,d_2)$, are sampled simultaneously from two sensors. Choose $\epsilon_1,\epsilon_2>0$, embedding dimension $0<q< m$, diffusion time $t>0$, $\alpha\in[0,1]$ and landmark size $0<m\leq n$.\\
\end{flushleft}
\label{alg:lead}
\begin{algorithmic}[1]
\State{Choose the landmark set $\{(a_k, b_k)\}_{k=1}^m$;} 
\State{Build the $n\times m$ landmark affinity matrix $\bar{\mathbf{W}}^{(1)}$ by $\bar{\mathbf{W}}^{(1)}_{ij}=\tilde{K}^{(1)}_{\epsilon_1}(s_i, a_k)$;}
\State{Build the $n\times m$ landmark affinity matrix $\bar{\mathbf{W}}^{(2)}$ by $\bar{\mathbf{W}}^{(2)}_{ij}=\tilde{K}^{(2)}_{\epsilon_2}(r_i, b_k)$;}
\State{Construct $\bar{\mathbf{D}}^{(2)}=\texttt{diag}\big(\bar{\mathbf{W}}^{(2)\top}\bar{\mathbf{W}}^{(2)}\mathbf{1}_{m}\big)$;}
\State{Construct the landmark diffusion matrix $\bar{\mathbf{M}}^{(2)}_\alpha=\bar{\mathbf{W}}^{(2)}(\bar{\mathbf{D}}^{(2)})^{-\alpha}$;}
\State{Construct the $n\times n$ diagonal landmark degree matrices $\bar{\mathbf{D}}^{(1)}_\alpha$ by $\bar{\mathbf{D}}^{(1)}_{\alpha}=\texttt{diag}\big(\bar{\mathbf{W}}^{(1)}\bar{\mathbf{M}}^{(2)\top}_\alpha\mathbf{1}_{n}\big)$;}
\State{Construct the landmark diffusion matrix $\bar{\mathbf{M}}^{(1)}_\alpha=(\bar{\mathbf{D}}^{(1)}_\alpha)^{-1}\bar{\mathbf{W}}^{(1)}$;}
\State{Apply EVD on the matrix $\bar{\mathbf{M}}^{(2)\top}_\alpha\bar{\mathbf{M}}^{(1)}_\alpha=\mathbf{V}\bar{\mathbf{\Lambda}} \mathbf{V}^{-1}$ where eigenvalues are $\bar{\lambda}_1\geq \cdots\geq \bar{\lambda}_m$;}
\State{Set $\bar{\mathbf{U}}=\mathbf{M}^{(1)}_\alpha\mathbf{V}=[\bar{u}_1\cdots\bar{u}_{m}]$; \Comment{Approximate $\mathbf{U}$}}
\State{Set $\bar{\mathbf{U}}_q=[\bar{u}_2\cdots\bar{u}_{q+1}]$ and $\bar{\mathbf{\Lambda}}_q=[\bar{\lambda}_2\cdots\bar{\lambda}_{q+1}]$;}\end{algorithmic}
\textbf{Output:} $e^\top_i \bar{\mathbf{U}}_q\,\bar{\mathbf{\Lambda}}_q^t$ as the embedded point of $(s_i, r_i)$, where $i=1,\ldots,n$.
\end{algorithm}

The computational complexity of evaluating the landmark affinity matrices and landmark diffusion matrices, obtaining $\bar{\mathbf{M}}^{(2)\top}_\alpha\bar{\mathbf{M}}^{(1)}_\alpha$ and the EVD of $\bar{\mathbf{M}}^{(2)\top}_\alpha\bar{\mathbf{M}}^{(1)}_\alpha$ are $\Theta(nm)$, $O(nm^2)$ and $O(m^{2.81})$ respectively. Therefore, the overall computational complexity of $\alpha$-LAD is $\Theta(nm)+O(nm^2)$ since we assume $m<n$. Eventually, if $m<n^{1/2}$, $\alpha$-LAD has a preferable computational complexity compared with AD.

\subsection{Justification of LAD from a linear algebra perspective}\label{section justification of LAD on linear algebra level}

Suppose we have a vector $v \in \mathbb{R}_{\ge 0}^n$ of mass distribution on the measurements from the second sensor such that $v^{\top} \mathbf{1}_{n}=1$. 
Geometrically, we could understand AD as two diffusion steps of $v$. 
The first step is a diffusion on the second dataset, i.e., $\mathbf{M}^{(2)}v$. Then, the result is bijectively transferred to the first dataset by considering the range of $\mathbf{M}^{(2)}$ as the domain of $\mathbf{M}^{(1)}$). The second diffusion step is then applied on the first dataset, i.e., $\mathbf{M}^{(1)}\mathbf{M}^{(2)}v$). It is noteworthy that when $\mathcal{X}_1=\mathcal{X}_2$, AD reduces to a single diffusion step with double diffusion time. 

To speed up the computation time, $\alpha$-LAD attempts to approximate the effective diffusion process above by incorporating intermittent diffusion steps on the landmarks.
The initial diffusion is transferred from ${\mathcal{X}}_2$ to $\tilde{\mathcal{Z}}_2$ by considering $\bar{\mathbf{M}}_\alpha^{(2)}v$. Then, the result is bijectively transferred from $\tilde{\mathcal{Z}}_2$ to $\tilde{\mathcal{Z}}_1$. Finally, diffusion from $\tilde{\mathcal{Z}}_1$ to ${\mathcal{X}}_1$ is applied, i.e., $\bar{\mathbf{M}}_\alpha^{(1)}\bar{\mathbf{M}}_\alpha^{(2)}v$. Overall, we obtain a diffusion from ${\mathcal{X}}_2$ to ${\mathcal{X}}_1$ as in AD, but the intermittent diffusion steps are performed on the landmarks.

First, we show that the landmark AD matrix $\bar{\mathbf{M}}_\alpha$ is a diffusion matrix. By construction, based on the kernels, all the entries of $\bar{\mathbf{M}}_\alpha$ are nonnegative. By Equations (\ref{eq:diag1}) and (\ref{eq:markov1}), we have
\begin{align}
\bar{\mathbf{M}}^{(1)}_\alpha\bar{\mathbf{M}}^{(2)\top}_\alpha \mathbf{1}_n
&\,=\texttt{diag}\big(\bar{\mathbf{W}}^{(1)}\bar{\mathbf{M}}^{(2)\top}_\alpha\mathbf{1}_{n}\big)^{-1}\bar{\mathbf{W}}^{(1)}(\bar{\mathbf{D}}^{(2)})^{-\alpha}\bar{\mathbf{W}}^{(2)}\mathbf{1}_n\nonumber\\
&\,=\texttt{diag}\big(\bar{\mathbf{W}}^{(1)} (\bar{\mathbf{D}}^{(2)})^{-\alpha} \bar{\mathbf{W}}^{(2)} \mathbf{1}_{n}\big)^{-1}\bar{\mathbf{W}}^{(1)}(\bar{\mathbf{D}}^{(2)})^{-\alpha}\bar{\mathbf{W}}^{(2)}\mathbf{1}_n=\mathbf{1}_n\,,
\end{align}
which concludes that $\bar{\mathbf{M}}_\alpha$ is a diffusion matrix. 
Moreover, the above calculation shows that the $L^\infty$ norm of $\bar{\mathbf{M}}^{(1)}_\alpha\bar{\mathbf{M}}^{(2)\top}_\alpha$ is $1$.
Since the spectral norm is bounded by the $L^\infty$ norm, all eigenvalues, while might be complex, are bounded by $1$. When all the eigenvalues of  $\bar{\mathbf{M}}^{(1)}_\alpha\bar{\mathbf{M}}^{(2)\top}_\alpha$ are simple, the EVD exists. However, in general, the matrix might be defective. In Section \ref{sec:main_thm}, we will show that under the manifold setup, asymptotically, $\bar{\mathbf{M}}^{(1)}_\alpha\bar{\mathbf{M}}^{(2)\top}_\alpha$ pointwisely converges to a self-adjoint operator. By this result and numerical confirmation, we hypothesize that the eigenvalues of $\bar{\mathbf{M}}^{(1)}_\alpha\bar{\mathbf{M}}^{(2)\top}_\alpha$ are asymptotically real, and $\bar{\mathbf{M}}^{(1)}_\alpha\bar{\mathbf{M}}^{(2)\top}_\alpha$ is asymptotically non-defective. 

Next, we discuss the relationship between the spectral structures of $\bar{\mathbf{M}}^{(2)\top}_\alpha\bar{\mathbf{M}}^{(1)}_\alpha$ and $\bar{\mathbf{M}}^{(1)}_\alpha\bar{\mathbf{M}}^{(2)\top}_\alpha$.
Suppose $\bar{\mathbf{M}}^{(1)}_\alpha$ is of full rank. Let $\mathbf{V}\in Gl(m)$ be the eigenvector matrix of $\bar{\mathbf{M}}^{(2)\top}_\alpha \bar{\mathbf{M}}^{(1)}_\alpha$. By the rank-nullity theorem,  the null space of $\bar{\mathbf{M}}^{(1)}_\alpha$ is degenerate and hence $\bar{\mathbf{U}}:=\bar{\mathbf{M}}^{(1)}_\alpha \mathbf{V}$ is of full rank as well. By a direct calculation, the columns of $\bar{\mathbf{U}}$ form the $m$ right eigenvectors of $\bar{\mathbf{M}}^{(1)}_\alpha \bar{\mathbf{M}}^{(2)\top}_\alpha$. If the rank of $\bar{\mathbf{M}}^{(1)}_\alpha\bar{\mathbf{M}}^{(2)\top}_\alpha$ is $m$, then we obtain all the nontrivial right eigenvectors. 
Note that in general, even if the rank of $\bar{\mathbf{M}}^{(1)}_\alpha$ is less than $m$, we can still recover the nontrivial eigenvectors of $\bar{\mathbf{M}}^{(1)}_\alpha\bar{\mathbf{M}}^{(2)\top}_\alpha$ from the nontrivial eigenvectors of $\bar{\mathbf{M}}^{(2)\top}_\alpha\bar{\mathbf{M}}^{(1)}_\alpha$. Overall, we get that we can speed up the computation of the nontrivial eigenvectors of the $n \times n$ matrix $\bar{\mathbf{M}}^{(1)}_\alpha\bar{\mathbf{M}}^{(2)\top}_\alpha$ using $\alpha$-LAD by considering the eigendecomposition of the smaller $m \times m$ matrix $\bar{\mathbf{M}}^{(2)\top}_\alpha\bar{\mathbf{M}}^{(1)}_\alpha$.

We remark that the normalization factor $\alpha\in[0,1]$ plays a similar role as in diffusion maps \cite{coifman2006}. We will systematically study its effect in Section \ref{sec:main_thm} under the manifold setup. Broadly, we will show that under the manifold setup, when $\alpha=1$, the impact of the landmark distribution is minimized. Conversely, when $\alpha=0$, $0$-LAD is impacted by the landmark distribution. In the special case that $\alpha=0$ and $\mathcal{X}_1=\mathcal{X}_2$, $0$-LAD is reduced to ROSELAND \cite{shen2020,shen2022}.

\section{Asymptotic Analysis of LAD under the Manifold Setup}
\label{sec:main_thm}
To further understand the behavior of LAD and connect it with the original AD, thereby showing that it recovers the common manifold, we continue to study the asymptotic behavior of LAD under the manifold setup.

\subsection{Manifold Setting}

Suppose there is a common structure that two sensors aim to capture, represented by a $d$-dimensional compact and smooth manifold $\mathcal{M}$ without boundary. This manifold is referred to as the common manifold. In practice, different sensors may be of different modalities and capture different ``views'' or information about the common manifold. These differences are captured by different embeddings of $\mathcal{M}$. We mention that it is possible to consider a more complicated model that includes sensor-specific variables as in \cite{lederman2018learning}. Let $\mathcal{M}$ be smoothly embedded into $\mathbb{R}^{p_1}$ via $\iota^{(1)}$ and into $\mathbb{R}^{p_2}$ via $\iota^{(2)}$, where $p_1$ and $p_2$ may not be equal. Here, $\iota_{(\ell)}$ encodes the features of the $\ell$-th sensor, and $\mathbb{R}^{p_\ell}$ is the space that hosts the acquired information. The collected dataset is modeled by independently and identically distributed (i.i.d.) samples from two random vectors, $X^{(\ell)}:=\iota^{(\ell)}\circ X$, where $X:(\Omega,\mathcal{F},P)\rightarrow \mathcal{M}$ is a random vector with induced measure on $\mathcal{M}$, denoted by $\nu:=X_\ast P$, defined on the Borel sigma algebra on $\mathcal{M}$. It is important to note that we cannot directly access $\mathcal{M}$, but rather observe it through a dataset collected by two sensors, $\{(s_i, r_i)\}_{i=1}^n$ where $s_i = \iota^{(1)}(x_i) \in \mathbb{R}^p$ and $r_i = \iota^{(2)}(x_i) \in \mathbb{R}^p$. In contrast to many sensor fusion techniques that merely aim to merge the multi-sensor information, in LAD, as in AD, the ultimate goal is to recover the common manifold by recovering $\mathcal{X}=\{x_i\}_{i=1}^n\subset\M$ from the dataset $\{(s_i,r_i)\}_{i=1}^n$.

Since $\iota^{(\ell)}$ is a smooth embedding, $\iota^{(1)}(\mathcal{M})$ and $\iota^{(2)}(\mathcal{M})$ diffeomorphic to each other and $\Phi:=\iota^{(1)}\circ{\iota^{(2)}}^{-1}$ is a diffeomorphism. Assume the metric of $\iota^{(\ell)}(\mathcal{M})$, denoted as $g^{(\ell)}$, is induced from the canonical metric of $\mathbb{R}^p$. Let $dV^{(\ell)}$ be the measure associated with the Riemannian volume form induced from $g^{(\ell)}$. Assume $\nu_{\mathcal{M}}$ is absolutely continuous with respect to $dV^{(\ell)}$. By the Radon-Nikodym theorem, there exists a probability density function (p.d.f.) $p^{(\ell)}$ on $\mathcal{M}$ such that $d\nu=p^{(\ell)}dV^{(\ell)}$. 

The landmark set $\Z$ is modeled in a similar way. Assume $\mathcal{Z}=\{z_k\}_{k=1}^m$ is sample from a $\mathcal{M}$-valued random variable $Z:(\Omega,\mathcal{F},P)\rightarrow \mathcal{M}$ with induced measure $\nu^\Z=Z_*P$. Assume $Z$ is independent of $X$. 
By the Radon-Nikodym theorem, there exists a p.d.f. $p^{(\ell)}_\Z$ of $Z^{(\ell)}$ on $\mathcal{M}$ such that $d\nu^\Z=p^{(\ell)}_\Z dV^{(\ell)}$, where $\ell=1,2$.

With the sampled data $\{(s_i,r_i)\}_{i=1}^n\subset \mathbb{R}^{p_1}\times \mathbb{R}^{p_2}$ and the landmark set $\{(a_i,b_i)\}_{i=1}^m\subset \mathbb{R}^{p_1}\times \mathbb{R}^{p_2}$, where $a_i=\iota^{(1)}(z_i)$ and $b_i=\iota^{(2)}(z_i)$, we can run LAD. Below we provide a series of analyses studying the behavior of LAD under the manifold setup. Note that while in general we can consider embeddings into more general metric spaces, in this analysis we focus on $(\mathfrak{M}_\ell,d_\ell)=(\mathbb{R}^{p_\ell},\|\cdot\|)$ for $\ell=1,2$.

With the above model, to proceed with the analysis, we specify the following assumptions. 

\begin{assumption}\label{asp:assumption}

\begin{enumerate}
\item The $d$-dimensional manifold $\mathcal{M}$ is compact without boundary and embedded into $\mathbb{R}^{p_\ell}$ via $C^4(\mathcal{M})$ map $\iota^{(\ell)}$.
\item The kernel functions are $C^3$, positive and decay exponentially fast; that is, exist $c_1,c_2>0$ such that
\begin{equation*}
\tilde{K}^{(\ell)}(t)<c_1e^{-c_2t^2} \quad\text{and}\quad |\partial_t \tilde{K}^{(\ell)}(t)|<c_1e^{-c_2t^2}\,.
\end{equation*}
Furthermore, define $\mu_{r,k}^{(\ell)}:=\int_{\mathbb{R}^d}\|x\|^r \partial^{(k)} \tilde{K}^{(\ell)}(\|x\|) d x$. Assume $\mu^{(\ell)}_{0,0}=1$ for $\ell=1,2$.
\item $\nu$ is absolutely continuous with respect to $dV^{(\ell)}$. Moreover, $p^{(\ell)}\in C^{4}(\M)$ and is bounded below from zero.
\end{enumerate}
\end{assumption}

Without loss of generality, we analyze LAD assuming it starts from the second sensor. A similar analysis holds when it starts from the first sensor. First, we define the degree function on the second manifold. To simplify the tedious notation, we systematically use $K^{(\ell)}_\epsilon(x_i,x_j):=\tilde{K}^{(\ell)}\left(\frac{\|\iota^{(\ell)}(x_i)-\iota^{(\ell)}(x_j)\|^2}{\epsilon}\right)$ for $\ell=1,2$.
\begin{definition}\label{def:lan2kernel}
Define the {\em landmark kernel function} $K^{(2)}_{\lan,\epsilon}:\mathcal{M}\times\mathcal{M}\rightarrow\mathbb{R}_+$ as 
\begin{equation*}
K^{(2)}_{\lan,\epsilon}(z,z'):=\int_{\mathcal{M}}K_\epsilon^{(2)}(z, x)K_\epsilon^{(2)}(x,z')d \nu(x)
\end{equation*}
and define the {\em landmark degree function}, denoted as $d_{\lan,\epsilon}^{(2)}:\mathcal{M}\rightarrow\mathbb{R}_+$, by
\begin{equation*}
d_{\lan,\epsilon}^{(2)}(z)= \int_{\mathcal{M}} K^{(2)}_{\lan,\epsilon}(z,z') d \nu^\Z(z')\,.
\end{equation*}
\end{definition}

First, recall the following result from  \cite[Theorem 10]{shen2022}, which states that the quantity $d_{\lan,\epsilon}^{(2)}$ approximates the quantity $p^{(2)}p^{(2)}_\Z$. 

\begin{theorem}[Theorem 10 in \cite{shen2022}]
\label{thm:rose}
Take $f\in C^3(\mathcal{M})$. For $x\in\mathcal{M}$, we have
\begin{align*}
d_{\lan,\epsilon}^{(2)}(z)&\,=\int_\M K^{(2)}_{\mathrm{ref}, \epsilon}(z, z') p^{(2)}_{\Z}(z') dV^{(2)}(z')\\
&\,=\epsilon^{d}\bigg[  p^{(2)}(z) p^{(2)}_\Z(z)+\epsilon\frac{ \mu_{2,0}^{(2)}}{d}\bigg( p^{(2)}_\Z(z) \Delta p^{(2)}(z)+\frac{1}{2 } p^{(2)}(z) \Delta p^{(2)}_\Z(z)\\
& \qquad-w(z) p^{(2)}(z) p^{(2)}_\Z(z)+ \nabla p^{(2)}(z) \cdot \nabla p^{(2)}_\Z(z)\bigg)\bigg]+\mathcal{O}\left(\epsilon^{d+2}\right)\\
&\,:=\epsilon^dp^{(2)}(z) p^{(2)}_\Z(z)\left(1+\epsilon\bar{E}(z)\right)+\mathcal{O}\left(\epsilon^{d+2}\right)\,,
\end{align*}
where 
\[
\bar{E}(x):=\frac{\Delta p^{(2)}(x)}{p^{(2)}(x)}
+\frac{ \Delta p^{(2)}_\Z(x)}{2p^{(2)}_\Z(x)}
+ \frac{\nabla p^{(2)}(x) \cdot \nabla p^{(2)}_\Z(x)}{p^{(2)}(x) p^{(2)}_\Z(x)}-w(x)\,,
\]
$w(x)=\frac{1}{3}s(x)-\frac{d}{12|S^{d-1}|}\int_{S^{d-1}}[\Pi^{(2)}_x(\theta,\theta)]^2d\theta$, $s(x)$ is the scalar curvature at $x$, and $\Pi^{(2)}_x$ is the second fundamental form of the embedding at $x$, and $|S^{d-1}|$ is the volume of the canonical $(d-1)$-dim sphere.
\end{theorem}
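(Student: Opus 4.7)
The strategy is to unfold the nested integral defining $d_{\lan,\epsilon}^{(2)}$ and reduce it to two iterated applications of the standard Singer-type local kernel asymptotic expansion on the embedded manifold $\iota^{(2)}(\M)$. From the definitions,
\[
d_{\lan,\epsilon}^{(2)}(z)=\iint_{\M\times\M} K_\epsilon^{(2)}(z,x)K_\epsilon^{(2)}(x,z')\,p^{(2)}(x)\,p^{(2)}_\Z(z')\,dV^{(2)}(x)\,dV^{(2)}(z')\,,
\]
so Fubini lets me regroup as $\int_\M K_\epsilon^{(2)}(z,x)\,p^{(2)}(x)\,F_\epsilon(x)\,dV^{(2)}(x)$, where
\[
F_\epsilon(x):=\int_\M K_\epsilon^{(2)}(x,z')\,p^{(2)}_\Z(z')\,dV^{(2)}(z')\,.
\]
Both integrals are now classical local kernel integrals on $\M$.

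For the first step I expand $F_\epsilon(x)$. Using the exponential decay of $\tilde{K}^{(2)}$ to localize to a geodesic ball around $x$ of radius $\epsilon^{1/2-\delta}$, I pass to normal coordinates at $x$, expand the extrinsic squared distance $\|\iota^{(2)}(x)-\iota^{(2)}(y)\|^2$ to fourth order (which brings in the second fundamental form $\Pi^{(2)}_x$) and the Riemannian volume form to second order (which brings in the scalar curvature), and carry out the resulting Gaussian-type moment integrals. The outcome is the standard expansion
\[
F_\epsilon(x)=\epsilon^{d/2}\bigl[p^{(2)}_\Z(x)+\epsilon\,\mathcal{L}^{(2)}p^{(2)}_\Z(x)\bigr]+\mathcal{O}(\epsilon^{d/2+2})\,,
\]
where $\mathcal{L}^{(2)}$ is the second-order operator whose leading part is proportional to $\tfrac{\mu_{2,0}^{(2)}}{2d}\Delta$ and whose zeroth-order piece assembles into the curvature function $w(x)$ in the statement.

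Next I substitute this expansion into the outer integral and apply the same Singer-type expansion a second time, now centered at $z$. The leading piece produces
\[
\int_\M K_\epsilon^{(2)}(z,x)\,(p^{(2)}p^{(2)}_\Z)(x)\,dV^{(2)}(x)=\epsilon^{d/2}\bigl[(p^{(2)}p^{(2)}_\Z)(z)+\epsilon\,\mathcal{L}^{(2)}(p^{(2)}p^{(2)}_\Z)(z)\bigr]+\mathcal{O}(\epsilon^{d/2+2})\,,
\]
while the subleading piece contributes $\epsilon^{d+1}\,p^{(2)}(z)\,\mathcal{L}^{(2)}p^{(2)}_\Z(z)+\mathcal{O}(\epsilon^{d+2})$. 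Summing, the $\epsilon^{d+1}$ coefficient equals $\mathcal{L}^{(2)}(p^{(2)}p^{(2)}_\Z)+p^{(2)}\mathcal{L}^{(2)}p^{(2)}_\Z$, and the Laplacian product rule $\Delta(fg)=g\Delta f+f\Delta g+2\nabla f\cdot\nabla g$ then redistributes the second-derivative pieces into the four terms $p^{(2)}_\Z\Delta p^{(2)}$, $p^{(2)}\Delta p^{(2)}_\Z$, $\nabla p^{(2)}\cdot\nabla p^{(2)}_\Z$, and $w\,p^{(2)}p^{(2)}_\Z$ with the specific coefficients claimed.

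The main obstacle lies in the curvature bookkeeping inside the Singer-type expansion itself: pinning down the precise form of $w(x)=\tfrac{1}{3}s(x)-\tfrac{d}{12|S^{d-1}|}\int_{S^{d-1}}[\Pi^{(2)}_x(\theta,\theta)]^2\,d\theta$ requires combining the fourth-order normal-coordinate expansion of the extrinsic squared distance (which contributes the $\Pi^{(2)}_x$ piece) with the scalar-curvature correction in $\sqrt{\det g^{(2)}}$, and then averaging over the unit tangent sphere. The $C^4$ hypotheses on $\iota^{(2)}$, $p^{(2)}$, and $p^{(2)}_\Z$ are exactly what is needed so that both iterated expansions carry remainders controlled at the stated order; in particular, the inner remainder $\mathcal{O}(\epsilon^{d/2+2})$ is promoted to $\mathcal{O}(\epsilon^{d+2})$ only after integration against $K_\epsilon^{(2)}(z,\cdot)p^{(2)}$, which supplies the extra $\epsilon^{d/2}$ factor.
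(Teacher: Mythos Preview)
The paper does not prove this statement: it is quoted as Theorem~10 of \cite{shen2022} and used as a black box, so there is no in-paper argument to compare against. Your two-stage plan---expand the inner $z'$-integral by the standard Singer-type local kernel expansion to get $F_\epsilon$, then feed the result into a second application of the same expansion for the outer $x$-integral---is precisely the route by which such double-kernel degree asymptotics are established, and it matches the spirit of the paper's own Lemma~\ref{lem:double_kernel}, which carries out an entirely analogous two-layer expansion for the LAD kernel. Your identification of where the curvature function $w(x)$ arises (fourth-order extrinsic distance expansion plus the volume-form correction) is also correct.

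One bookkeeping point deserves a second look. With your grouping, the order-$\epsilon^{d+1}$ coefficient is $\mathcal{L}^{(2)}(p^{(2)}p^{(2)}_\Z)+p^{(2)}\mathcal{L}^{(2)}p^{(2)}_\Z$; writing $\mathcal{L}^{(2)}=\tfrac{\mu_{2,0}^{(2)}}{2d}(\Delta-w)$ and applying the product rule gives
\[
\frac{\mu_{2,0}^{(2)}}{d}\Bigl[\tfrac{1}{2}\,p^{(2)}_\Z\Delta p^{(2)}+p^{(2)}\Delta p^{(2)}_\Z+\nabla p^{(2)}\!\cdot\!\nabla p^{(2)}_\Z-w\,p^{(2)}p^{(2)}_\Z\Bigr],
\]
which has the $\tfrac12$ on $p^{(2)}_\Z\Delta p^{(2)}$ rather than on $p^{(2)}\Delta p^{(2)}_\Z$ as displayed in the statement. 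In \cite{shen2022} the landmark density sits on the \emph{middle} integration variable and the data density on the endpoint, the reverse of Definition~\ref{def:lan2kernel} here, so the transcription of the cited theorem appears to have the two Laplacian coefficients interchanged. This does not affect your strategy (nor, since $\bar E$ cancels between \eqref{eq:numer} and \eqref{eq:denom}, the paper's main Theorem~\ref{thm:main}), but your claim ``with the specific coefficients claimed'' should be adjusted accordingly.
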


Next, recall that that the landmark kernel induced by the landmark set depends on the normalization factor $\alpha$. We have the following definition.

\begin{definition}\label{def:lanADkernel}
($\alpha$-Landmark Alternating Diffusion Kernel) The $\alpha$-{\em landmark alternating diffusion kernel}, $K_{\lan,\epsilon,\alpha}:\mathcal{M}\times\mathcal{M}\rightarrow \mathbb{R}_+$, is defined by
\begin{equation*}
K_{\lan,\epsilon,\alpha}\left(x, y\right):=\int_{\mathcal{M}} \frac{K_\epsilon^{(1)}\left(x, z^{\prime}\right) K_\epsilon^{(2)}\left(z^{\prime}, y\right)}{[d_{\lan,\epsilon}^{(2)}(z')]^\alpha} d\nu^\Z\left(z^{\prime}\right) \,.
\end{equation*}
The $\alpha$-{\em landmark alternating degree function}, $d_{\lan,\epsilon,\alpha}:\mathcal{M}\rightarrow\mathbb{R}_+$, is defined by
\begin{equation*}
d_{\lan,\epsilon,\alpha}(x)= \int_{\mathcal{M}} K_{\lan,\epsilon,\alpha}(x,y) d \nu(y)\,.
\end{equation*}
\end{definition}

With the $\alpha$-landmark alternating diffusion kernel and $\alpha$-landmark alternating degree function, we can define the landmark alternating diffusion operator. 
\begin{definition}
\label{def:leadOP}
(Landmark Alternating Diffusion Operator) The landmark alternating diffusion operator, $T_{\lan,\epsilon}:C(\mathcal{M})\to C(\mathcal{M})$, is defined as
\begin{equation*}
T_{\lan,\epsilon,\alpha}f(x)=\int_{\mathcal{M}} \frac{K_{\lan,\epsilon,\alpha}(x,y)}{d_{\lan,\epsilon,\alpha}(x)}f(y)d\nu(y)\,.
\end{equation*}
\end{definition}
It is clear that $T_{\lan,\epsilon,\alpha}1=1$, where $1$ is a constant function with value $1$. 

It is immediate to see that the above quantities are continuous companions of landmark AD matrices used in the LAD algorithm. Denote
$\nu_n=\frac{1}{n}\sum_{i=1}^n\delta_{x_i}$
to be the empirical measure associated with the samples on the common manifold and
$\nu_m^\Z=\frac{1}{m}\sum_{i=1}^m\delta_{z_i}$ to be the empirical measure associated with the landmark set.
Note that $d^{(2)}_{\lan,\epsilon}(z_i)= \int_{\mathcal{M}} K^{(2)}_{\lan,\epsilon}(z_i,z') d \nu^\Z(z')$ is the continuous version of $d^{(2)}_{\lan,\epsilon,m}(z_i):=\int_{\mathcal{M}} K^{(2)}_{\lan,\epsilon,n}(z_i,z') d \nu_m^\Z(z')$, where $K^{(2)}_{\lan,\epsilon,n}(x_i,z'):=\int_{\mathcal{M}}K_\epsilon^{(2)}(z, x)K_\epsilon^{(2)}(x,z')d \nu_n(x)$.
By a direct expansion, we have $\bar{\mathbf{D}}^{(2)}_{ii}=\int_{\mathcal{M}} K^{(2)}_{\lan,\epsilon,n}(x_i,z') d \nu_m^\Z(z')$. 
Similarly, 
$d_{\lan,\epsilon,\alpha}(x_i)= \int_{\mathcal{M}} K_{\lan,\epsilon,\alpha}(x_i,y) d \nu(y)$ is the continuous version of $\int_{\mathcal{M}} K_{\lan,\epsilon,\alpha,m}(x_i,y) d \nu_n(y)$, where $K_{\lan,\epsilon,\alpha,m}(x, y):=\int_{\mathcal{M}} \frac{K_\epsilon^{(1)}\left(x, z^{\prime}\right) K_\epsilon^{(2)}\left(z^{\prime}, y\right)}{[d_{\lan,\epsilon,m}^{(2)}(z')]^\alpha} d\nu_m^\Z\left(z^{\prime}\right)$. By a direct expansion, we have
$\bar{\mathbf{D}}^{(1)}_{\alpha, ii}=\int_{\mathcal{M}} K_{\lan,\epsilon,\alpha,n}(x_i,y) d \nu_n(y)$. 
Along similar lines, we see that 
$K_{\lan,\epsilon,\alpha}(x_i,x_j)$ is the continuous version of
$\big(\bar{\mathbf{W}}^{(1)}\bar{\mathbf{M}}_\alpha^{(2)\top}\big)_{ij}$ 
and
$T_{\lan,\epsilon,\alpha}$ is the continuous version of the landmark AD matrix
$\bar{\mathbf{M}}_\alpha=\bar{\mathbf{M}}_\alpha^{(1)}\bar{\mathbf{M}}^{(2)\top}_\alpha$.

The main theoretical result is about the asymptotic behavior of $\bar{\mathbf{M}}_\alpha$, which is composed of the variance analysis and bias analysis. In short, in the variance analysis, asymptotically when $n\to \infty$, we quantify the stochastic fluctuation of the convergence of $\bar{\mathbf{M}}_\alpha$ to the integral operator $T_{\lan,\epsilon}$ caused by finite sampling points. In the bias analysis, asymptotically when $\epsilon\to 0$, we show that $T_{\lan,\epsilon}$ converges to a deformed Laplacian-Beltrami operator defined on $(\M,g^{(2)})$ and quantify the convergence rate. The proof of Theorem \ref{thm:main} is postponed to Section \ref{sec:proof}.

\begin{theorem}\label{thm:main}
Fix $x \in \mathcal{M}$. Fix normal coordinates around $x$ associated with $g^{(\ell)}$ and the associated exponential maps $\exp^{(\ell)}$, where $\ell=1,2$. Let $\mathcal{B}:=\{E_i\}_{i=1}^d \subset T_x \mathcal{M}$ be an orthonormal basis associated with $g^{(2)}$. Set 
\[
R_x=\Big[\mathrm{d} \exp^{(1)}\big|_0\Big]^{-1}\big[\mathrm{d}\iota^{(1)}\big]^{-1} \nabla \Phi\big[\mathrm{d} \iota^{(2)}\big]\Big[\mathrm{d}\exp^{(2)}\big|_0\Big]\in \mathbb{R}^{d\times d}\,,
\]
with all quantities represented with the basis $\mathcal{B}$. By the singular value decomposition (SVD), we have $R_x=U_x \Lambda_x V_x^T$, where $\Lambda_x=\operatorname{diag}\left[\lambda_1, \ldots, \lambda_d\right]$ with $\lambda_1\geq \lambda_2\geq\ldots\geq \lambda_d$, and $U_x,V_x\in O(d)$. Then, for a $f \in C^3(\mathcal{M})$, when $\epsilon>0$ is sufficiently small, we have
\begin{align*}
T_{\lan,\epsilon,\alpha}f(x)=f(x)\,&+\frac{\epsilon \mu_{2,0}^{(2)}}{2d}\Delta^{(2)}f(x)+\frac{\epsilon \mu_{2,0}^{(1)}}{2d}\sum^{d}_{i=1}\lambda_i\nabla^{(2)^2}_{E_iE_i}f(x)\\
&+\frac{\epsilon \mu_{2,0}^{(1)}}{d}\sum_{i=1}^d \lambda_i\left(\frac{\nabla^{(2)}_{E_i} p^{(2)}(x)}{p^{(2)}}+\frac{\nabla^{(2)}_{E_i} q_\alpha(x)}{q_\alpha(x)}\right)\nabla^{(2)}_{E_i} f(x)\\
&+\frac{\epsilon\mu_{2,0}^{(2)}}{d}\frac{\nabla^{(2)} p^{(2)}(x)\cdot\nabla^{(2)} f(x)}{p^{(2)}(x)}+\mathcal{O}(\epsilon^{3/2})\,.
\end{align*}
where $\nabla^{(\ell)}$ is the covariant derivative associated with $\iota^{(\ell)}$, $q_\alpha(x):=\frac{p_\Z^{(2)}(x)^{1-\alpha}}{p^{(2)}(x)^\alpha}$ and the implied constant depends on the scalar curvature at $x$, second fundamental form at $x$, $C^3$ norm of $f$ and $C^2$ norms of $p^{(2)}$ and $p^{(2)}_\Z$.
\end{theorem}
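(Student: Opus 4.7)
The plan is to expand $T_{\lan,\epsilon,\alpha}f(x) = N_\alpha(x)/d_{\lan,\epsilon,\alpha}(x)$ in powers of $\epsilon$, treating the numerator $N_\alpha(x) := \int_{\M} K_{\lan,\epsilon,\alpha}(x,y)\,f(y)\,d\nu(y)$ and the denominator separately (the denominator is $N_\alpha$ with $f\equiv 1$) and then dividing. Exchanging the order of integration gives
\begin{equation*}
N_\alpha(x) = \int_{\M} \frac{K_\epsilon^{(1)}(x,z')}{[d_{\lan,\epsilon}^{(2)}(z')]^\alpha}\,A(z')\,d\nu^\Z(z'), \qquad A(z') := \int_{\M} K_\epsilon^{(2)}(z',y)\,f(y)\,d\nu(y),
\end{equation*}
so three Laplace-type expansions are required: an inner one for $A(z')$ in the geometry of $(\M,g^{(2)})$, one for $[d_{\lan,\epsilon}^{(2)}(z')]^{-\alpha}$ supplied by Theorem \ref{thm:rose} together with $(1+\epsilon\bar{E})^{-\alpha} = 1 - \alpha\epsilon\bar{E} + O(\epsilon^2)$, and the outer one for the integral against $K_\epsilon^{(1)}(x,\cdot)$. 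The outer layer is what forces the matrix $R_x$ into the answer.

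First I would carry out the inner expansion. Pulling back to $g^{(2)}$-normal coordinates at $z'$, expanding $\|\iota^{(2)}(z')-\iota^{(2)}(y)\|^2$ in terms of the intrinsic squared-distance plus a second-fundamental-form correction, and evaluating Gaussian moments gives $A(z') = \epsilon^{d/2}\bigl[f p^{(2)}(z') + \epsilon\tfrac{\mu_{2,0}^{(2)}}{2d}\bigl(\Delta^{(2)}(fp^{(2)})(z') + \text{curvature corrections}\bigr)\bigr] + O(\epsilon^{d/2+2})$, in parallel with Theorem \ref{thm:rose}. Multiplying this by the expansion of $[d_{\lan,\epsilon}^{(2)}(z')]^{-\alpha}$ from Theorem \ref{thm:rose} and by the factor $p_\Z^{(2)}(z')$ coming from $d\nu^\Z = p_\Z^{(2)}\,dV^{(2)}$, the outer integrand becomes $\epsilon^{d/2-\alpha d}K_\epsilon^{(1)}(x,z')\,[H_0(z') + \epsilon H_1(z') + O(\epsilon^2)]$ with $H_0(z') = f(z')\,p^{(2)}(z')\,q_\alpha(z')$ (matching the $q_\alpha$ in the statement) and $H_1$ a smooth function collecting derivatives of $f, p^{(2)}, p_\Z^{(2)}$ together with curvature terms.

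The core step is the outer Laplace expansion in $g^{(2)}$-normal coordinates at $x$. Writing $z' = \exp_x^{(2)}(u)$ with $u\in\mathbb{R}^d$ in the basis $\mathcal{B}$, a Taylor expansion of $\iota^{(1)}\circ\exp_x^{(2)}$ at $0$ identifies the quadratic form governing the Gaussian peak of $K_\epsilon^{(1)}(x,\exp_x^{(2)}(u))$ as $u^\top R_x^\top R_x\,u/\epsilon$ to leading order, with $R_x$ precisely the matrix in the statement — it is the Jacobian of the linear change from $g^{(2)}$-normal to $g^{(1)}$-normal coordinates at $x$ realized through the two embeddings and the diffeomorphism $\Phi$. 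Rescaling $u = \sqrt{\epsilon}\,v$ and diagonalizing via the SVD $R_x = U_x\Lambda_x V_x^\top$ turns the outer integral into a standard isotropic Gaussian integral whose anisotropic second-moment tensor contracts with the Hessian of the smooth integrand to produce the $\lambda_i$-weighted Laplacian $\tfrac{\mu_{2,0}^{(1)}}{2d}\sum_i\lambda_i\nabla^{(2)^2}_{E_iE_i}f(x)$; the corresponding cross of $\nabla^{(2)} H_0$ with $\nabla^{(2)}\log(p_\Z^{(2)}\sqrt{\det g^{(2)}})$ produces the $\lambda_i$-weighted drift $\tfrac{\mu_{2,0}^{(1)}}{d}\sum_i\lambda_i\bigl(\tfrac{\nabla^{(2)}_{E_i}p^{(2)}}{p^{(2)}}+\tfrac{\nabla^{(2)}_{E_i}q_\alpha}{q_\alpha}\bigr)\nabla^{(2)}_{E_i}f(x)$. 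The two remaining $\mu_{2,0}^{(2)}$-terms propagate directly from the $\epsilon$-correction already present inside $A(z')$ (the inner $g^{(2)}$-Laplacian of $fp^{(2)}$) through the leading-order outer Gaussian and yield $\tfrac{\mu_{2,0}^{(2)}}{2d}\Delta^{(2)}f(x)$ and $\tfrac{\mu_{2,0}^{(2)}}{d}\nabla^{(2)}p^{(2)}\cdot\nabla^{(2)}f/p^{(2)}$ after division. Finally, dividing $N_\alpha$ by $d_{\lan,\epsilon,\alpha}$ (the same expansion with $f\equiv 1$) and Taylor-expanding the quotient cancels every $f$-independent piece — the scalar-curvature corrections, the $H_1(x)$ contribution, and the common Jacobian factor involving $R_x$ — leaving precisely the four terms stated and automatically enforcing $T_{\lan,\epsilon,\alpha}1\equiv 1$.

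The principal obstacle is the bookkeeping at order $\epsilon$. The effective quadratic form is not exactly $u^\top R_x^\top R_x u$ but carries cubic and quartic Taylor corrections involving the second fundamental form of $\iota^{(1)}$; the Riemannian volume factor $\sqrt{\det g^{(2)}(u)}$ contributes Ricci-type terms in $u$; and $p_\Z^{(2)}(\exp_x^{(2)}(u))$ must be expanded to second order. All of these must be combined cleanly with the separately computed $\epsilon$-corrections in $A(z')$ and in $[d_{\lan,\epsilon}^{(2)}(z')]^{-\alpha}$, without miscount. In addition, the usual isotropic-moment identity $\int \tilde{K}^{(1)}(|v|^2)v_iv_j\,dv = \tfrac{\mu_{2,0}^{(1)}}{d}\delta_{ij}$ must be replaced by an anisotropic contraction whose principal axes are encoded by $V_x$ (equivalently $U_x$), and the reason the final formula is clean is precisely the quotient cancellation together with the normalization $T_{\lan,\epsilon,\alpha}1\equiv 1$.
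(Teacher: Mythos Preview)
Your overall strategy---expand $[d_{\lan,\epsilon}^{(2)}]^{-\alpha}$ via Theorem~\ref{thm:rose}, perform two Laplace-type expansions, and divide numerator by denominator so that all $f$-independent curvature/density terms cancel---matches the paper's. The genuine difference is the \emph{order} in which you run the two kernel integrals. The paper keeps everything in $g^{(2)}$-normal coordinates at $x$: it first integrates out the landmark variable $z$ with $y$ held fixed (Lemma~\ref{lem:refkernel}, producing the convolution-type quantities $A_{0,\epsilon},A_{1,\epsilon},A_{2,\epsilon}$), and only then integrates over $y$ (Lemma~\ref{lem:double_kernel}). You instead integrate out $y$ first, obtaining $A(z')$ by a standard single-kernel expansion centred at $z'$, and then run the outer $z'$-integral in normal coordinates at $x$. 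Your iterated scheme is more modular---it reuses the textbook radial-kernel expansion twice---while the paper's joint scheme, inherited from the AD analysis in \cite{talmon2019}, makes the two-kernel convolution structure and the $\sqrt{\epsilon}$ cross term $A_{1,\epsilon}$ explicit. Both routes are sound and land on the same formula.

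One point in your write-up is misstated and would cause confusion if left as is. You say the $\lambda_i$-weighted drift arises from ``the cross of $\nabla^{(2)}H_0$ with $\nabla^{(2)}\log(p_\Z^{(2)}\sqrt{\det g^{(2)}})$''. But in your decomposition the outer integral is against $dV^{(2)}$, not $d\nu^\Z$: the landmark density $p_\Z^{(2)}$ has already been absorbed into $H_0=f\,p^{(2)}q_\alpha$. The drift therefore does \emph{not} come from crossing with an external measure factor; it comes from the product rule when the anisotropic Hessian $\sum_i\lambda_i\nabla^{(2)^2}_{E_iE_i}$ hits $H_0=f\cdot(p^{(2)}q_\alpha)$, giving the cross term $2\sum_i\lambda_i\,\nabla^{(2)}_{E_i}f\,\nabla^{(2)}_{E_i}(p^{(2)}q_\alpha)$, which after division by the denominator's leading part $p^{(2)}q_\alpha$ yields exactly $\nabla^{(2)}_{E_i}p^{(2)}/p^{(2)}+\nabla^{(2)}_{E_i}q_\alpha/q_\alpha$. (Also, $V_x$ and $U_x$ are not interchangeable in general; it is $V_x$, the right singular vectors of $R_x$, that fixes the principal axes in the $g^{(2)}$-frame.) With that corrected, your argument goes through.
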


By a direct calculation, asymptotically when $\epsilon\to 0$ the operator $\frac{2d}{\mu_{2,0}^{(2)}}\frac{T_{\lan,\epsilon,\alpha}-I}{\epsilon}$ converges to
\begin{align*}
\mathcal{L}_{\alpha}:=&\Delta^{(2)}+2\frac{\nabla^{(2)} p^{(2)} }{p^{(2)}}\cdot\nabla^{(2)}+\frac{\mu_{2,0}^{(1)}}{\mu_{2,0}^{(2)}}\left\{\sum^{d}_{i=1}\lambda_i\left[\nabla^{(2)^2}_{E_iE_i}+2\left(\frac{\nabla^{(2)}_{E_i} p^{(2)}}{p^{(2)}}+\frac{\nabla^{(2)}_{E_i} q_\alpha}{q_\alpha}\right)\cdot\nabla^{(2)}_{E_i}\right]\right\}\,,
\end{align*}
which is self-adjoint in nature, and involves not only the intrinsic operator, the Laplace-Beltrami operator $\Delta^{(2)}$, but also first-order terms that depend on the sampling densities and landmark densities.
The theorem immediately leads to the following corollaries, which demonstrate the connection between LAD and other related algorithms.

\begin{corollary}
Consider $p^{(2)}=p^{(2)}_\Z$. If we take $\alpha=1/2$, then $q_\alpha=1$ and Theorem \ref{thm:main} is reduced to original AD; that is, for $f\in C^3(\mathcal{M})$, we have
\begin{align*}
T_{\lan,\epsilon,\alpha}f(x)=f(x)&+\frac{\epsilon \mu_{2,0}^{(2)}}{2d}\Delta^{(2)}f(x)+\frac{\epsilon \mu_{2,0}^{(1)}}{2d}\sum^{d}_{i=1}\lambda_i\nabla_{E_iE_i}^{(2)^2}f(x)\\
&+\frac{\epsilon \mu_{2,0}^{(1)}}{d}\left( \sum_{i=1}^d \lambda_i\frac{\nabla^{(2)}_{E_i} p^{(2)}(x)\nabla^{(2)}_{E_i} f(x)}{p^{(2)}}\right)\\
&+\frac{\epsilon \mu_{2,0}^{(2)}}{2d}\frac{\nabla^{(2)} p^{(2)}(x)\cdot\nabla^{(2)} f(x)}{p^{(2)}(x)}+\mathcal{O}(\epsilon^{3/2})\,.
\end{align*}
If we further assume $p^{(2)}$ is constant; that is, the sampling is uniform, then we obtain
\begin{align*}
T_{\lan,\epsilon,\alpha}f(x)=f(x)+\frac{\epsilon \mu_{2,0}^{(2)}}{2d}\Delta^{(2)}f(x)+\frac{\epsilon \mu_{2,0}^{(1)}}{2d}\sum^{d}_{i=1}\lambda_i\nabla_{E_iE_i}^{(2)^2}f(x)+\mathcal{O}(\epsilon^{3/2})\,.
\end{align*}
\end{corollary}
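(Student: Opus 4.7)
The corollary is a direct specialization of Theorem \ref{thm:main}, so my plan is essentially to substitute and read off the cancellations; no fresh analytic work is required.

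First I would evaluate the auxiliary function $q_\alpha(x) = p_\Z^{(2)}(x)^{1-\alpha}/p^{(2)}(x)^\alpha$ at $\alpha=1/2$ under the hypothesis $p^{(2)}=p_\Z^{(2)}$. The two factors cancel to give $q_{1/2}\equiv 1$ on $\M$, hence its logarithmic derivative vanishes pointwise:
\begin{equation*}
\frac{\nabla^{(2)}_{E_i} q_{1/2}(x)}{q_{1/2}(x)}=0, \qquad i=1,\dots,d.
\end{equation*}
Plugging this identity into the expansion in Theorem \ref{thm:main}, the mixed drift term collapses to
\begin{equation*}
\frac{\epsilon\mu_{2,0}^{(1)}}{d}\sum_{i=1}^d \lambda_i\,\frac{\nabla^{(2)}_{E_i} p^{(2)}(x)}{p^{(2)}(x)}\,\nabla^{(2)}_{E_i} f(x),
\end{equation*}
while the remaining four terms of Theorem \ref{thm:main} are unchanged. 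This yields the first displayed formula of the corollary. I would double-check the coefficient of the isotropic density-gradient term $\frac{\nabla^{(2)} p^{(2)}\cdot\nabla^{(2)} f}{p^{(2)}}$, since moving from the generic $\mu^{(2)}_{2,0}/d$ in Theorem \ref{thm:main} to $\mu^{(2)}_{2,0}/(2d)$ in the corollary suggests an absorption/regrouping I would want to verify against the proof of the main theorem in Section \ref{sec:proof}.

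For the second display, under the additional assumption that $p^{(2)}$ is a positive constant the Riemannian gradient $\nabla^{(2)} p^{(2)}$ vanishes identically on $\M$, so both surviving first-order terms drop out and we are left with precisely
\begin{equation*}
T_{\lan,\epsilon,\alpha}f(x)=f(x)+\frac{\epsilon\mu_{2,0}^{(2)}}{2d}\Delta^{(2)}f(x)+\frac{\epsilon\mu_{2,0}^{(1)}}{2d}\sum_{i=1}^d\lambda_i\nabla^{(2)^2}_{E_iE_i}f(x)+\mathcal{O}(\epsilon^{3/2}),
\end{equation*}
as claimed. The only real ``obstacle'' here is bookkeeping: matching the coefficient of the density-gradient term between Theorem \ref{thm:main} and the corollary; beyond that, the corollary follows by direct substitution and requires no additional asymptotic estimates.
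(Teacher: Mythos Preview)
Your approach is correct and matches the paper's treatment: the corollary is stated without proof as an immediate consequence of Theorem \ref{thm:main}, obtained by exactly the substitution you describe. Your flag on the coefficient is well-placed: direct substitution from Theorem \ref{thm:main} yields $\epsilon\mu_{2,0}^{(2)}/d$ (not $\epsilon\mu_{2,0}^{(2)}/(2d)$) for the isotropic density-gradient term, and the numerator expansion in \eqref{eq:numer} confirms the factor of $2$ in $\frac{\epsilon\mu_{2,0}^{(2)}}{2d}\cdot 2$, so the $1/(2d)$ in the corollary appears to be a typographical slip in the paper rather than a regrouping you are missing.
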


This corollary implies that if the landmark distribution is the same as the data sample distribution; that is, $p^{(2)}=p^{(2)}_\Z$, then if we take $\alpha=1/2$, the asymptotic operator is the same as that of AD. 

\begin{corollary}\label{Corollary alpha=1 case}
Consider $\alpha=1$. Then $q_\alpha(x)=\frac{1}{p^{(2)}(x)}$, which is independent on $p_\Z$. For $f\in C^3(\mathcal{M})$, we have
\begin{align*}
T_{\lan,\epsilon,\alpha}f(x)=f(x)&\,+\frac{\epsilon \mu_{2,0}^{(2)}}{2d}\Delta^{(2)}f(x)+\frac{\epsilon \mu_{2,0}^{(1)}}{2d}\sum^{d}_{i=1}\lambda_i\nabla_{E_iE_i}^{(2)^2}f(x)\\
&+\frac{\epsilon \mu_{2,0}^{(2)}}{d}\frac{\nabla^{(2)} p^{(2)}(x)\cdot\nabla^{(2)} f(x)}{p^{(2)}(x)}+\mathcal{O}(\epsilon^{3/2})\,.
\end{align*}
If we further assume $p^{(2)}$ is constant; that is, the sampling is uniform, then we obtain
\begin{align*}
T_{\lan,\epsilon,\alpha}f(x)=f(x)+\frac{\epsilon \mu_{2,0}^{(2)}}{2d}\Delta^{(2)}f(x)+\frac{\epsilon \mu_{2,0}^{(1)}}{2d}\sum^{d}_{i=1}\lambda_i\nabla_{E_iE_i}^{(2)^2}f(x)+\mathcal{O}(\epsilon^{3/2})\,.
\end{align*}
\end{corollary}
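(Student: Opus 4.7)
The plan is to derive the corollary as a direct specialization of Theorem \ref{thm:main} at $\alpha = 1$; no new analytic machinery is required, only an algebraic cancellation. First I would evaluate $q_\alpha$ at $\alpha = 1$: from the definition $q_\alpha(x) = p_\Z^{(2)}(x)^{1-\alpha}/p^{(2)}(x)^\alpha$, setting $\alpha = 1$ removes the $p_\Z^{(2)}$ factor and gives $q_1(x) = 1/p^{(2)}(x)$, confirming the stated independence from the landmark density.

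The key step is then to observe that the middle line of the expansion in Theorem \ref{thm:main}, namely
\[
\frac{\epsilon \mu_{2,0}^{(1)}}{d}\sum_{i=1}^d\lambda_i\left(\frac{\nabla^{(2)}_{E_i}p^{(2)}(x)}{p^{(2)}(x)} + \frac{\nabla^{(2)}_{E_i}q_\alpha(x)}{q_\alpha(x)}\right)\nabla^{(2)}_{E_i}f(x)\,,
\]
vanishes identically once $q_1 = 1/p^{(2)}$: since $\log q_1 = -\log p^{(2)}$, its logarithmic gradient is the negative of that of $p^{(2)}$, and the two summands in parentheses cancel termwise, independently of $i$. The remaining pieces of the expansion (the Laplacian term $\frac{\epsilon \mu_{2,0}^{(2)}}{2d}\Delta^{(2)}f$, the anisotropic Hessian term $\frac{\epsilon \mu_{2,0}^{(1)}}{2d}\sum_i \lambda_i \nabla^{(2)^2}_{E_iE_i} f$, the density-gradient drift $\frac{\epsilon \mu_{2,0}^{(2)}}{d}\frac{\nabla^{(2)}p^{(2)}\cdot\nabla^{(2)}f}{p^{(2)}}$, and the $\mathcal{O}(\epsilon^{3/2})$ remainder) then reproduce the displayed formula verbatim. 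For the second assertion I would simply note that a constant $p^{(2)}$ forces $\nabla^{(2)}p^{(2)}\equiv 0$, killing the surviving first-order drift and leaving only the two second-order contributions.

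There is essentially no analytic obstacle, as all the heavy lifting is absorbed into Theorem \ref{thm:main}. The only conceptual content worth flagging is the recognition that $\alpha = 1$ is precisely the exponent at which the landmark-density drift cancels against the sampling-density drift in the anisotropic block, which is why the asymptotic operator becomes insensitive to $p_\Z^{(2)}$. This mirrors the classical role of $\alpha = 1$ in diffusion maps \cite{coifman2006} and matches the informal claim at the end of Section \ref{section justification of LAD on linear algebra level} that $\alpha = 1$ minimizes the impact of the landmark distribution.
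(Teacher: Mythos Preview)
Your proposal is correct and matches the paper's approach: the corollary is stated without proof in the paper and is meant to follow immediately from Theorem~\ref{thm:main} by exactly the algebraic cancellation you describe, namely that $q_1=1/p^{(2)}$ forces $\nabla^{(2)}_{E_i}q_1/q_1=-\nabla^{(2)}_{E_i}p^{(2)}/p^{(2)}$ so the anisotropic drift term vanishes. The second part (constant $p^{(2)}$) is likewise immediate once the remaining drift $\frac{\epsilon\mu_{2,0}^{(2)}}{d}\frac{\nabla^{(2)}p^{(2)}\cdot\nabla^{(2)}f}{p^{(2)}}$ is set to zero.
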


This corollary implies that the normalization factor $\alpha$ plays the same role as the $\alpha$-normalization in the original diffusion map algorithm \cite{coifman2006} in the sense that the landmark distribution does not impact the asymptotic operator. 
Clearly, while the asymptotic operator does not depend on the landmark set, the operator is in general different from that of AD, unless $p^{(2)}$ is constant.

\begin{corollary} 
Consider $p_\Z^{(2)}\propto \frac{1}{{p^{(2)}}^3}$ and take $\alpha=1/2$, $K^{(1)}=K^{(2)}$. We have $q_\alpha(x)=\frac{1}{p^{(2)}(x)^2}$. For $f\in C^3(\mathcal{M})$, we have
\begin{align*}
T_{\lan,\epsilon,\alpha}f(x)=f(x)&+\frac{\epsilon \mu_{2,0}^{(2)}}{d}\left( \sum_{i=1}^d (1-\lambda_i)\frac{\nabla^{(2)}_{E_i} p^{(2)}(x)\nabla^{(2)}_{E_i} f(x)}{p^{(2)}}\right)\\
&+\frac{\epsilon \mu_{2,0}^{(2)}}{2d}\bigg[\Delta^{(2)}f(x)+\sum^{d}_{i=1}\lambda_i\nabla_{E_iE_i}^{(2)^2}f(x)\bigg]+\mathcal{O}(\epsilon^{3/2})\,.
\end{align*}
Furthermore, if $R_x=I_d$, then $\lambda_i=1$ and the result is independent on probability density. If $\iota^{(1)}=\iota^{(2)}$, then we obtain a landmark design result to recover the Laplace-Beltrami operator that is similar to Remark 1 in \cite{shen2020}.
\end{corollary}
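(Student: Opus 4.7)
The plan is to apply Theorem \ref{thm:main} directly and simplify, exploiting two cancellations that arise from the specific choices of $p_\Z^{(2)}$, $\alpha$, and the kernels.

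First I would compute $q_\alpha$ explicitly. Under the assumption $p_\Z^{(2)}\propto (p^{(2)})^{-3}$ and $\alpha=1/2$, the definition $q_\alpha(x)=p_\Z^{(2)}(x)^{1-\alpha}/p^{(2)}(x)^\alpha$ gives $q_\alpha(x)\propto (p^{(2)}(x))^{-3/2}(p^{(2)}(x))^{-1/2}=(p^{(2)}(x))^{-2}$, as stated. Since only logarithmic derivatives of $q_\alpha$ appear in Theorem \ref{thm:main}, the proportionality constant is irrelevant, and
\[
\frac{\nabla^{(2)}_{E_i}q_\alpha(x)}{q_\alpha(x)}=-2\,\frac{\nabla^{(2)}_{E_i}p^{(2)}(x)}{p^{(2)}(x)}\,.
\]
Also, the hypothesis $K^{(1)}=K^{(2)}$ yields $\mu_{2,0}^{(1)}=\mu_{2,0}^{(2)}$, so the two prefactors in Theorem \ref{thm:main} coincide.

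Next I substitute into the expansion from Theorem \ref{thm:main}. The first-order gradient contribution involves
\[
\frac{\nabla^{(2)}_{E_i}p^{(2)}}{p^{(2)}}+\frac{\nabla^{(2)}_{E_i}q_\alpha}{q_\alpha}=-\frac{\nabla^{(2)}_{E_i}p^{(2)}}{p^{(2)}}\,,
\]
so the $\lambda_i$-weighted gradient term becomes
\[
-\frac{\epsilon\mu_{2,0}^{(2)}}{d}\sum_{i=1}^d\lambda_i\,\frac{\nabla^{(2)}_{E_i}p^{(2)}(x)\,\nabla^{(2)}_{E_i}f(x)}{p^{(2)}(x)}\,.
\]
The remaining gradient term $\tfrac{\epsilon\mu_{2,0}^{(2)}}{d}\,\nabla^{(2)}p^{(2)}\cdot\nabla^{(2)}f/p^{(2)}$ from Theorem \ref{thm:main} can, using the orthonormality of $\{E_i\}$ with respect to $g^{(2)}$, be written as $\tfrac{\epsilon\mu_{2,0}^{(2)}}{d}\sum_{i=1}^d\nabla^{(2)}_{E_i}p^{(2)}\,\nabla^{(2)}_{E_i}f/p^{(2)}$. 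Adding these two expressions produces the factor $(1-\lambda_i)$, recovering precisely the first line of the claimed formula, while the $f(x)$, $\Delta^{(2)}f$, and $\sum_i\lambda_i\nabla^{(2)^2}_{E_iE_i}f$ terms combine to give the second line.

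For the two addenda, assume first $R_x=I_d$. Then in the SVD $R_x=U_x\Lambda_xV_x^\top$ we must have $\Lambda_x=I_d$, so $\lambda_i=1$ for all $i$; the $(1-\lambda_i)$ coefficient in the gradient term vanishes, and $\sum_{i=1}^d\nabla^{(2)^2}_{E_iE_i}f(x)=\Delta^{(2)}f(x)$, so only $f(x)+\tfrac{\epsilon\mu_{2,0}^{(2)}}{d}\Delta^{(2)}f(x)+\mathcal{O}(\epsilon^{3/2})$ remains, manifestly independent of $p^{(2)}$ and $p^{(2)}_\Z$. Finally, if $\iota^{(1)}=\iota^{(2)}$, then $g^{(1)}=g^{(2)}$, $\exp^{(1)}=\exp^{(2)}$, and $\Phi=\iota^{(1)}\circ(\iota^{(2)})^{-1}$ is the identity on $\iota^{(2)}(\mathcal{M})$, so $\nabla\Phi$ is the identity on each tangent space and all the differential factors in the definition of $R_x$ cancel, yielding $R_x=I_d$; the previous case then applies and reduces the asymptotics to a pure Laplace--Beltrami expansion, matching the landmark-design observation in Remark~1 of \cite{shen2020}.

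The computation is essentially algebraic once Theorem \ref{thm:main} is in hand, so there is no genuine obstacle; the only point demanding mild care is the bookkeeping that turns $\tfrac{\nabla^{(2)}p^{(2)}\cdot\nabla^{(2)}f}{p^{(2)}}$ into the orthonormal-frame sum so that it can be combined cleanly with the $\lambda_i$-weighted term to produce the $(1-\lambda_i)$ coefficient.
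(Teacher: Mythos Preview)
Your proposal is correct and follows exactly the approach the paper intends: the corollary is presented in the paper as an immediate consequence of Theorem~\ref{thm:main}, and your argument simply fills in the algebraic substitution (computing $q_\alpha$, using $\mu_{2,0}^{(1)}=\mu_{2,0}^{(2)}$, and combining the gradient terms to extract the $(1-\lambda_i)$ factor) that the paper leaves implicit. Your treatment of the two addenda is likewise the intended one.
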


Next, we present the variance analysis of LAD. The proof is postponed to Section \ref{sec:proof}. We show that as $n \rightarrow \infty$, the matrix $I_n-\left(\mathbf{D}^{(1)}_\alpha\right)^{-1} \mathbf{W}^{(1)}_\alpha\mathbf{M}^{(2)}_\alpha$ behaves asymptotically like the integral operator $1-T_{\lan,\epsilon,\alpha}$. The analysis primarily focuses on the stochastic fluctuation of the convergence from finite sampling points. Since the quantity of interest is of the order $\epsilon$, the stochastic fluctuation induced by the finite sampling points should be much smaller than $\epsilon$ so that we can obtain the quantify of interest.

\begin{theorem}
Take $\mathcal{X}=\left\{x_i\right\}_{i=1}^n$ and $\mathcal{Z}=\left\{z_k\right\}_{i=1}^m$, where $m=$ $\lceil n^\beta\rceil$ for some $0<\beta \leq 1$ and $\lceil n^\beta\rceil$ is the smallest integer greater than or equal to $x\in\mathbb{R}$. Take $f \in C^3\left(\M\right)$ and denote $\boldsymbol{f} \in \mathbb{R}^n$ such that $\boldsymbol{f}_i=f\left(x_i\right)$. Let $\epsilon=\epsilon(n)$ so that $\frac{\sqrt{\log n}}{n^{\beta / 2} \epsilon^{d/4+1}} \rightarrow 0$ and $\epsilon \rightarrow 0$ when $n \rightarrow \infty$. Then with probability higher than $1-\mathcal{O}\left(1 / n^2\right)$, we have
\begin{equation*}
\frac{1}{\epsilon}\left[\left(I_n-\left(\mathbf{D}^{(1)}_\alpha\right)^{-1} \mathbf{W}^{(1)}_\alpha\mathbf{M}^{(2)}_\alpha\right) \boldsymbol{f}\right](i)=\frac{f\left(x_i\right)-T_{\lan,\epsilon,\alpha}f(x_i)}{\epsilon} +\mathcal{O}\left(\frac{\sqrt{\log n}}{n^{\beta / 2} \epsilon^{d/4+1}}\right)
\end{equation*}
for all $i=1,2, \ldots, n$.
\label{thm:var}
\end{theorem}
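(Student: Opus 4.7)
My plan is to view the $i$-th entry of $(\bar{\mathbf{D}}^{(1)}_\alpha)^{-1}\bar{\mathbf{W}}^{(1)}\bar{\mathbf{M}}^{(2)\top}_\alpha\boldsymbol{f}$ as a ratio of empirical averages whose population counterparts reproduce the numerator and denominator of $T_{\lan,\epsilon,\alpha}f(x_i)$, and to control each empirical-to-population deviation by Bernstein's inequality at the appropriate sampling scale. Unwinding the definitions in Section \ref{sec:mainalg}, this entry equals
\[
\frac{\frac{1}{m}\sum_{k=1}^m K^{(1)}_\epsilon(x_i,z_k)\,\widehat{q_\alpha}(z_k)\,\frac{1}{n}\sum_{j=1}^n K^{(2)}_\epsilon(z_k,x_j)f(x_j)}{\frac{1}{m}\sum_{k=1}^m K^{(1)}_\epsilon(x_i,z_k)\,\widehat{q_\alpha}(z_k)\,\frac{1}{n}\sum_{j=1}^n K^{(2)}_\epsilon(z_k,x_j)}\,,
\]
where $\widehat{q_\alpha}(z_k)$ is the empirical analog of $d^{(2)}_{\lan,\epsilon}(z_k)^{-\alpha}$ built from $\bar{\mathbf{D}}^{(2)}_{kk}$ by a double sum. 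The population ratio is precisely $T_{\lan,\epsilon,\alpha}f(x_i)$, so the proof reduces to bounding three empirical-versus-population deviations: the inner $n$-sum over data, the outer $m$-sum over landmarks, and the doubly-indexed sum defining $\widehat{q_\alpha}$.

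I would apply Bernstein's inequality layer by layer. Under Assumption \ref{asp:assumption} the kernel summands are bounded with $\mathrm{Var}(K^{(\ell)}_\epsilon(y,X))=\mathcal{O}(\epsilon^{d/2})$ by the standard kernel-density variance estimate; conditioning on the landmark set, the inner data-sums therefore concentrate around their means at rate $\sqrt{\epsilon^{d/2}\log n/n}$. For the outer landmark sum, the wrapped summand has population mean of order $\epsilon^{d(1-\alpha)}$ and second moment of order $\epsilon^{3d/2-2d\alpha}$ once one uses Theorem \ref{thm:rose} together with the positivity of $p^{(2)},p^{(2)}_{\mathcal Z}$ in Assumption \ref{asp:assumption} to bound $\widehat{q_\alpha}\lesssim\epsilon^{-d\alpha}$ uniformly; Bernstein then gives a relative fluctuation of order $\sqrt{\log n/(m\epsilon^{d/2})}$. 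Since $m=\lceil n^\beta\rceil\le n$, this landmark-sum layer dominates all three; multiplying by the $1/\epsilon$ prefactor in the statement yields the advertised $\mathcal{O}\bigl(\sqrt{\log n}/(n^{\beta/2}\epsilon^{d/4+1})\bigr)$ bound. A union bound over $i=1,\ldots,n$ and $k=1,\ldots,m$ inflates the logarithm only mildly and produces the $1-\mathcal{O}(1/n^2)$ success probability.

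The hard part is propagating these three stochastic deviations through the nonlinear operations: the $(-\alpha)$-th power in $\widehat{q_\alpha}$, the insertion of $\widehat{q_\alpha}(z_k)$ as a weight inside the landmark sum, and the final outer ratio. This requires a uniform, high-probability lower bound $\bar{\mathbf{D}}^{(2)}_{kk}/(nm)\gtrsim\epsilon^d$, which follows from Theorem \ref{thm:rose} once the concentration event above holds, together with the hypothesis $\sqrt{\log n}/(n^{\beta/2}\epsilon^{d/4+1})\to 0$ forcing the fluctuation to be smaller than the population mean. A further subtlety is that the inner and outer sums are no longer independent once $\widehat{q_\alpha}$ is plugged in; I would resolve this by first conditioning on $\mathcal Z$ and establishing uniform control of $\widehat{q_\alpha}$ on the landmark set via the ROSELAND-style two-layer argument of \cite[Theorem 10]{shen2022}, then applying Bernstein to the outer sum with $\widehat{q_\alpha}$ treated as a deterministic weight. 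Once these uniformity and conditioning steps are in place, Taylor expansion of $u\mapsto u^{-\alpha}$ and of the outer ratio around their population values converts the additive fluctuations into the claimed multiplicative errors, and the remainder is careful bookkeeping.
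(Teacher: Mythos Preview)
Your proposal is correct and follows essentially the same strategy as the paper: layer-by-layer replacement of empirical quantities by their population counterparts (inner $\mathcal{X}$-sums, the degree $\widehat{q_\alpha}$, then the outer $\mathcal{Z}$-sum), with Bernstein concentration at each layer and perturbation arguments to pass through the $(-\alpha)$-power and the outer ratio. The paper carries this out via repeated three-term splittings of the form $\tfrac{A'}{B'}=\tfrac{A}{B}+\tfrac{A'-A}{B}+A'\bigl(\tfrac{1}{B}-\tfrac{1}{B'}\bigr)$, invoking the same ROSELAND-style lemmas you cite, and identifies the landmark layer as the dominant contribution exactly as you do.
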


We shall mention that the convergence rate is not optimal. In short, according to a numerical exploration reported in the end of Section \ref{sec:proof}, the convergence rate should depend on $\epsilon^{d/4+1/2}$ but not $\epsilon^{d/4+1}$. This discrepancy comes from the fact that we handle the convergence of landmark and dataset separately since handling the dependence caused by the diffusion among dataset and landmark is challenging. We leave this problem to our future work.

\section{Simulation Experiment Results}
\label{sec:simu}
In this section, we use simulated data to validate the theoretical behavior of LAD, and compare it with the original AD. All experiments were conducted on a computer equipped with an 8-core Apple M1 CPU, utilizing MATLAB software version 9.10 (R2021a) in its 64-bit configuration for experimental design and analysis.

In addition to comparing the computational time, we also evaluate how accurately $\alpha$-LAD approximates AD. We consider three quantities. Denote $(\lambda_l,u_l)$ as the $l$-th eigenpair of $\alpha$-LAD, and $(\mu_l,v_l)$ as the $l$-th eigenpair of AD. The $l$-th eigenvalue difference is quantified by $|\lambda_l-\mu_l|/\mu_l$, while the eigenvector difference is quantified by the inner product of $\langle u_l,v_l\rangle/\|u_l\|\|v_l\|$. Third, we quantify the embedding similarity. Assume we have $n$ points and two embeddings $E_1$ and $E_2$, where the embedding dimension is $q$. The third quantity is 
\[
d(E_1,\,E_2):=\min_{O\in O(q)}\Big(\frac{1}{n}\sum^n_{i=1}\|E_1(i)-OE_1(i)\|_2\Big); 
\]
that is, the embedded points are rotationally aligned before evaluating the mean pairwise distance. In this section, unless otherwise mentioned, the embedding similarity is evaluated with $q=3$.

\subsection{The impact of landmark size on $\alpha$-LAD}
We study the computational time and how well LAD approximates AD with various landmark sizes
Consider a torus $\mathbb{T}^2$ and a deformed torus $S$, defined as 
\begin{equation*}
\mathbb{T}^2=\left\{(x,y,z)\left\vert
\begin{aligned}
& x(u, v)=(R+r \cos v) \cos u \\
& y(u, v)=(R+r \cos v) \sin u \\
& z(u, v)=r \sin v
\end{aligned}
\right.\right\}
\end{equation*}
and
\begin{equation*}
S=\left\{(x,y,z)\left\vert
\begin{aligned}
    & x(u, v)=\left[R+(1+0.5\cos4u)r \cos v\right] \cos u \\
    & y(u, v)=\left[R+(1+0.5\cos4u)r \cos v\right] \sin u \\
    & z(u, v)=(1+0.5\cos4u)r \sin v
\end{aligned}
\right.\right\}\,,
\end{equation*}
where $R = 2.5$, $r = 1$, and $u, v \in [0, 2\pi)$. The factor $1 + 0.5 \cos 4u$ deformed the tube radius of the torus. There exist a diffeomorphism map, $\Phi_2: \mathbb{T}^2 \to S$. We uniformly sample 5000 points on $\mathbb{T}^2$ and map them to $S$ using $\Phi_2$, and collect these points as $\{(r_i, s_i)\}$. 
We consider different landmark sizes as $m=\lfloor 2^{i/2}\rfloor$ for $i=11,12,13,\ldots,21$, where $\lfloor\cdot \rfloor$ is the floor function, and generate landmarks independently for 30 times. We focus on $\alpha=0.5$ since $1/2$-LAD gives us the best approximation of AD according to the established theorem. 
The results are shown in Figure \ref{fig:tor_time_ev}. Clearly, when the landmark size increases, the computational time increases, but the top eigenvalues and eigenvectors of $1/2$-LAD better approximate those of AD. 

Furthermore, if we increase $n$ to $500,000$ ($1,000,000$ respectively), a typical laptop would no longer be able to compute AD. However, when using LAD with $m=1,000$, the computation time is $4.5$ ($12.3$ respectively) minutes. The main limitation going beyond one million points is out of the memory of the laptop.

\begin{figure}[ht]
\begin{center}
\includegraphics[width=1.\textwidth]{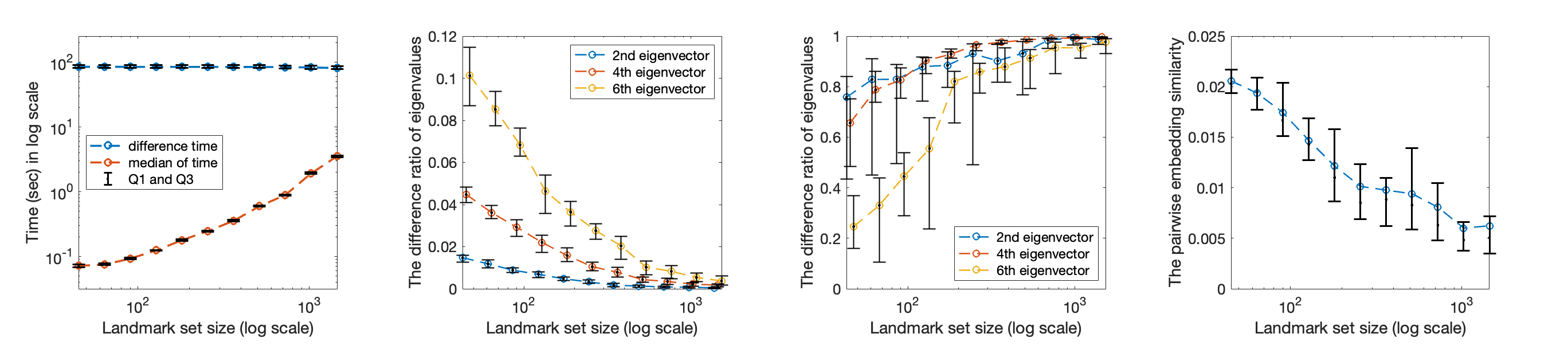}
\end{center}
\caption{From left to right: a comparison of computational time between AD and $1/2$-LAD with different landmark sizes, the difference ratio of the 1st, 3rd, 5th non-trivial eigenvalues between AD and $0.5$-LAD, the inner product between the 1st, 3rd, 5th non-trivial eigenvectors of AD and $0.5$-LAD, where we plot top 6 non-trivial eigenvectors, and the embedding similarity between AD and $0.5$-LAD.}
\label{fig:tor_time_ev}
\end{figure}

\subsection{The dependence on the initial diffusion} 
Consider the canonical $\mathbb{S}^1$ and Trefoil knot $C$, defined as
\begin{align*}
\mathbb{S}^1 &= \{(\cos \theta, \, \sin \theta)\} \subset\mathbb{R}^2, \\
C &= \{(\sin \theta + 2 \sin 2 \theta,\, \cos \theta - 2 \cos 2 \theta, \, - \sin 3 \theta)\}\subset\mathbb{R}^3\,,
\end{align*}
where $\theta \in [-\pi, \pi)$.
Note that $\mathbb{S}^1$ and $C$ are diffeomorphic via the map $\Phi_0 : \mathbb{S}^1 \to C$:
\begin{equation*}
\Phi_0(x, y) = \left\{
\begin{aligned}
x &= \sin \left( \arctan \frac{y}{x} \right) + 2 \sin 2 \left( \arctan \frac{y}{x} \right) \\
y &= \cos \left( \arctan \frac{y}{x} \right) - 2 \cos 2 \left( \arctan \frac{y}{x} \right) \\
z &= -\sin 3 \left( \arctan \frac{y}{x} \right)\,.
\end{aligned}
\right.
\end{equation*}
Uniformly sampling 1000 points from $\mathbb{S}^1$ and mapping them to $C$ using $\Phi_0$, we collect these points as the dataset $\tilde{\mathcal{X}}$. We then uniformly choose 100 landmark points from $\tilde{\mathcal{X}}$ and set $\alpha=0.5$. In Figure \ref{fig:knot_4}, we observe the impact of the initial sensor chosen for diffusion in both AD and $0.5$-LAD. Notably, the results of both $0.5$-LAD and AD are contingent upon the selected initial sensor. Additionally, the resulting embedding closely resembles the diffusion maps of the ending sensor, indicating that the embedding is predominantly influenced by diffusion in the ending sensor, which encodes the geometric information for the embedding. While $C$ is diffeomorphic to $\mathbb{S}^1$, it possesses a distinctive exterior topology different from $\mathbb{S}^1$. However, since AD and LAD are both diffusion-based algorithms capturing local geometric information, they ultimately cannot capture this exterior topology.

\begin{figure}[ht]
\begin{center}
\includegraphics[width=0.95\textwidth]{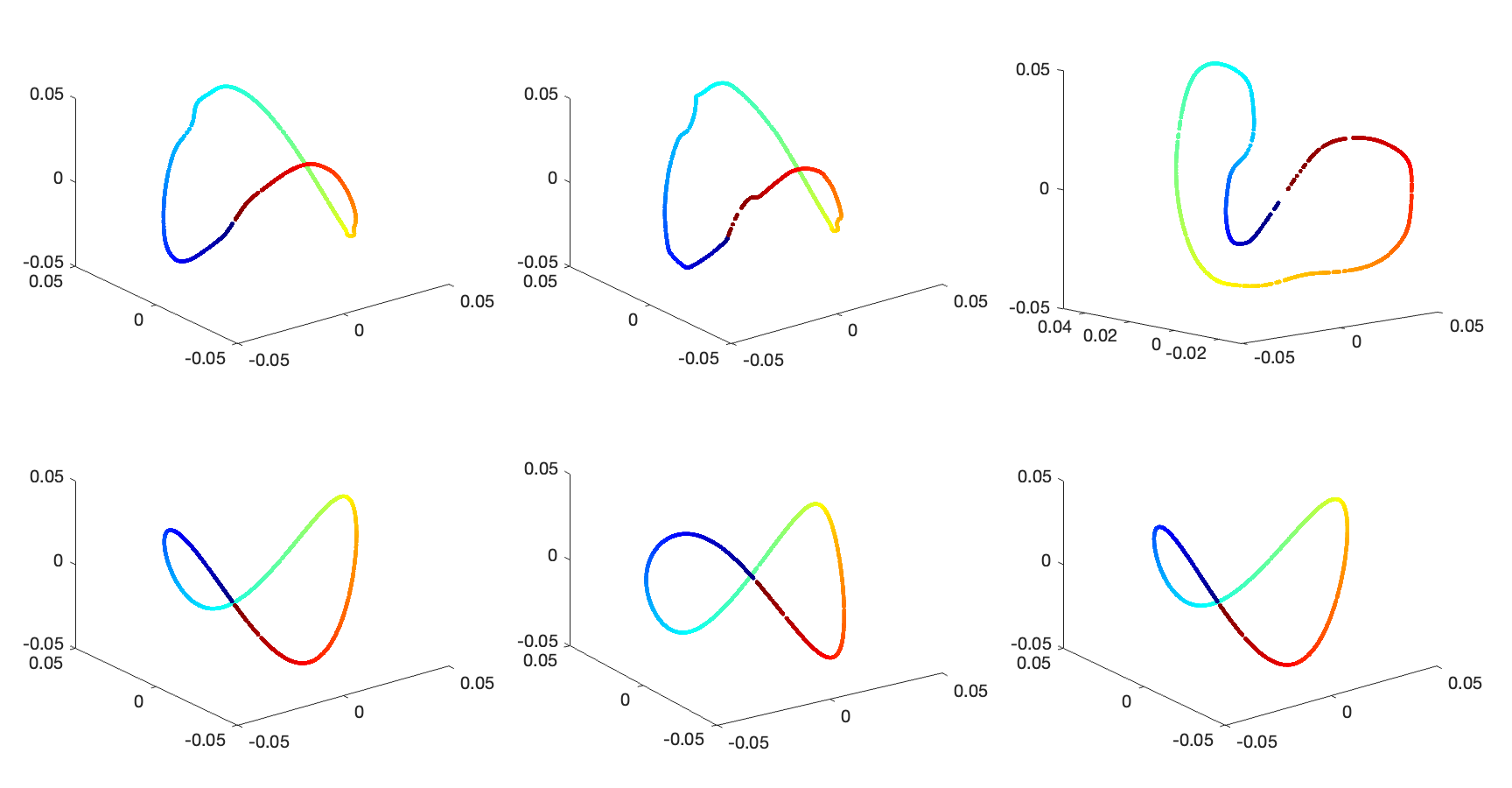}
\end{center}
\caption{Upper left: AD starting from $\mathbb{S}^1$. Upper middle: $0.5$-LAD starting from $\mathbb{S}^1$. Upper Right: DM of dataset on $C$. Lower left: AD starting from Trefoil knot $C$. Lower middle: $0.5$-LAD starting from Trefoil knot $C$. Lower right: DM of dataset on $\mathbb{S}^1$.}
\label{fig:knot_4}
\end{figure}

\subsection{The impact of landmark distribution and $\alpha$-normalization}\label{sec:uni_alpha1}

To study the impact of landmark distribution, consider the canonical $\mathbb{S}^1$ and ellipse $E$, defined as
\begin{align*}
\mathbb{S}^1 &= \{(\cos \theta, \, \sin \theta)\} \subset\mathbb{R}^2, \\
E &= \{(2\cos \theta, \, \sin \theta)\} \subset\mathbb{R}^2\,,
\end{align*}
where $\theta \in [-\pi, \pi)$. We assume $p^{(2)}$ is constant and define $\Phi_1:\mathbb{S}^1\rightarrow E$ by
\begin{equation*}
\Phi_1(x,y)=\left(2\cos\left(\arctan\frac{y}{x}\right), \sin\left(\arctan\frac{y}{x}\right)\right)\,.
\end{equation*}
The dataset is generated by uniformly sampling $n$ points on $\mathbb{S}^1$  and mapping them to $E$ by $\Phi_1$. We consider the following two scenarios. In the first scenario, $p_\Z^{(2)}$ is uniform, and in the second scenario, we set the non-uniform $p_\Z^{(2)}(\theta)=\frac{58}{50}[0.48\cos\theta+0.52]$. In both cases, we randomly sample 2500 points $\{(s_i, r_i):s_i\in E, r_i\in\mathbb{S}^1\}$ and choose 1000 landmark points according to $p_\Z^{(2)}$.

$\alpha$-LAD with various $\alpha$ under different $p_\Z^{(2)}$ setups are shown in the left column of Figure \ref{fig:eval_alpha}. We observe that when landmarks are uniformly selected, the choice of $\alpha$ seems to have less impact on the embeddings. However, in the case of non-uniform landmark selection, the embeddings vary and depend on the value of $\alpha$. When $\alpha=1$, the embedding (colored in cyan) becomes similar to LAD with uniform sampling. However, we will see below that when the landmark sampling is nonuniform, LAD fails to recover AD when $\alpha=1$. The quantification of the deviation of LAD from AD is shown in Figure \ref{fig:eval_alpha}. As $\alpha$ approaches 1, the impact of nonuniform landmark distribution on LAD is mitigated, corroborating the theoretical analysis.

\begin{figure}[ht]
\begin{center}
\includegraphics[trim=100 0 100 0, width=1.0\textwidth]{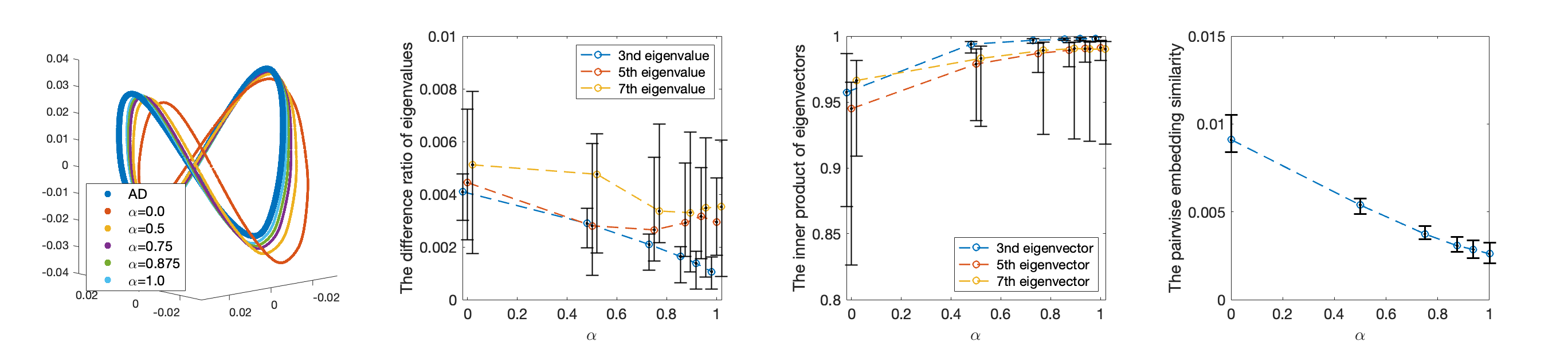} 
\includegraphics[trim=100 0 100 0, width=1.0\textwidth]{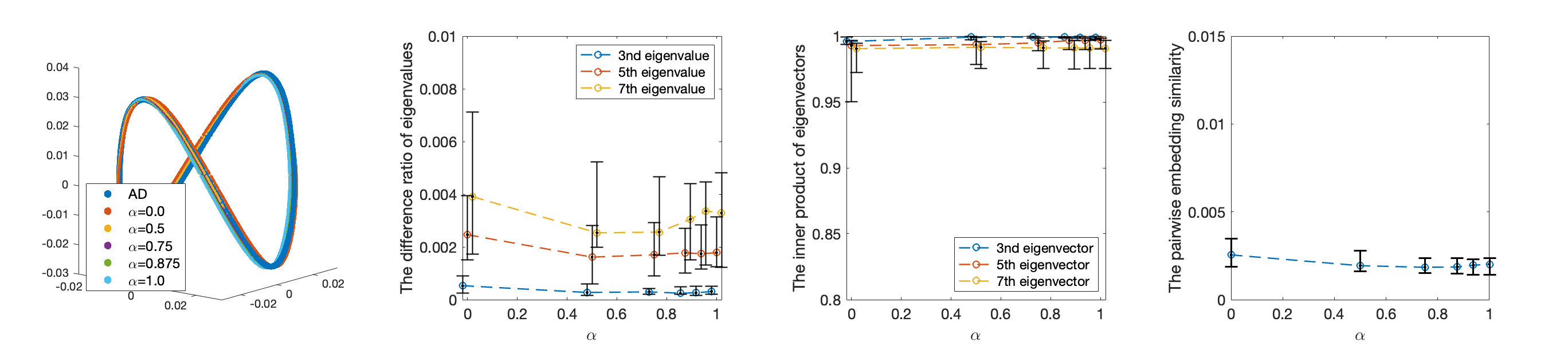}
\end{center}
\caption{Top row, from left to right: $\alpha$-LAD with various $\alpha$, the difference ratio of the 2nd, 4th, 6th non-trivial eigenvalues between AD and $\alpha$-LAD, the inner product between the 2nd, 4th, 6th non-trivial eigenvectors of AD and $\alpha$-LAD, where we plot the top 6 eigenvectors, and the embedding similarity between AD and $\alpha$-LAD, where the landmark distribution is non-uniform. Bottom row shows the same thing but when the landmark distribution is uniform. }
\label{fig:eval_alpha}
\end{figure}

We known from the theoretical analysis shown in Corollary \ref{Corollary alpha=1 case} that when $\alpha=1$, although the impact of nonuniform landmark sample on LAD is eliminated, we do not recover AD. To further explore this result, consider the same $\mathbb{S}^1$ setup. But this time, the dataset is sampled non-uniformly following the distribution $p^{(2)}$. Additionally, we arbitrarily select four different landmark distributions. See the top row of Figure \ref{fig:evo_alpha_to_1} for an illustration. We sample 3500 points $\{(s_i,r_i):s_i\in E, r_i\in\mathbb{S}^1\}$ following $p^{(2)}$ and choose 1000 landmark points following different landmark distributions. We label these four different $p^{(2)}_\Z$ cases as ``Case 1'' to ``Case 4.''

The embedding similarity under these different landmark distributions as $\alpha$ approaches 1 are shown in the top row of Figure \ref{fig:evo_alpha_to_1}. We see that as $\alpha$ approaches 1, the embeddings by LAD are more similar, but these embeddings are not necessarily similar to the embedding by AD. The quantification of embedding similarities is shown at the bottom row of Figure \ref{fig:evo_alpha_to_1}.

\begin{figure}[ht]
\begin{center}
\includegraphics[trim=100 0 100 0, width=1.\textwidth]{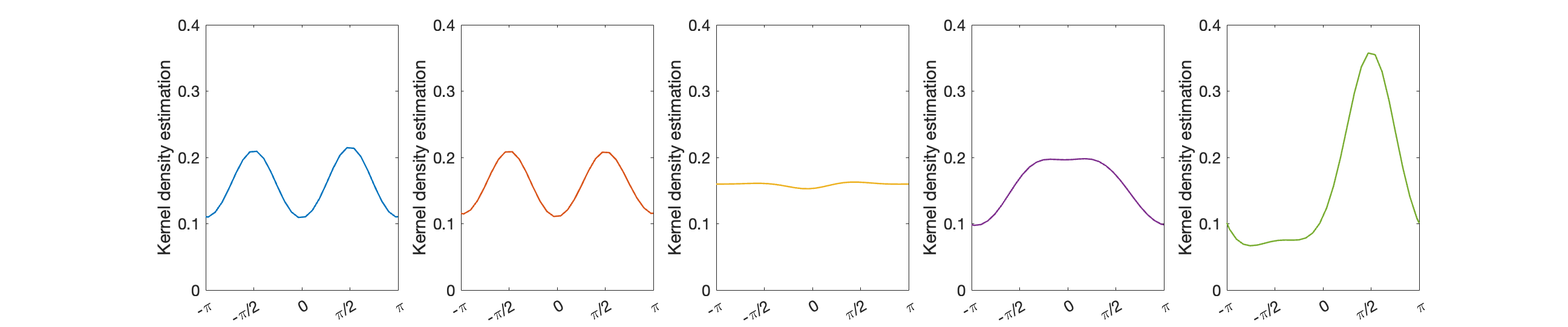}
\includegraphics[trim=100 0 100 0, width=1.\textwidth]{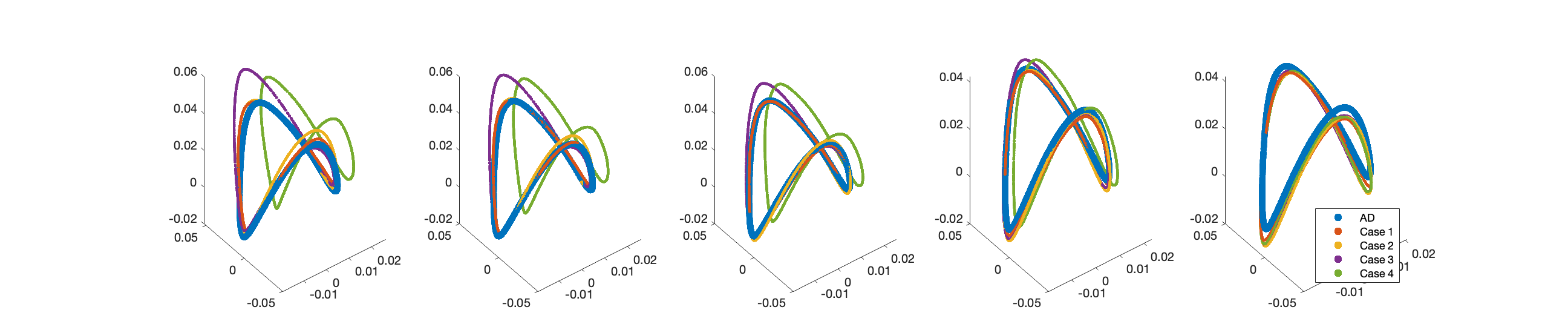}
\includegraphics[trim=100 0 100 0, width=1.\textwidth]{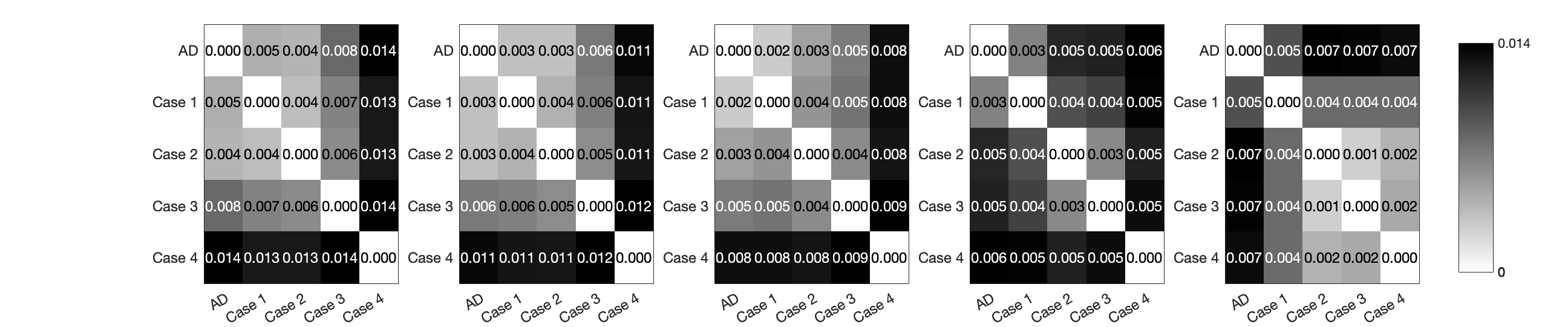}
\end{center}
\caption{Illustration of various embeddings by $\alpha$-LAD under different landmark distributions. Top row, from left to right: the sampling density function $p^{(2)}$ and four landmark distributions, labeled ``Case 1'' to ``Case 4.'' 
Middle row, from left to right: the LAD with $\alpha=0, 1/4, 1/2, 3/4$, and $1$, respectively. The blue curve is the result of AD, which is the same from left to right. The curves in other colors correspond to different landmark distributions. Lower row, from left to right: the pairwise embedding similarity between different embeddings with $\alpha=0, 1/4, 1/2, 3/4$, and $1$, respectively.}
\label{fig:evo_alpha_to_1}
\end{figure}

Next, we ask whether we can design landmark distribution and choose $\alpha$ appropriately so that $\alpha$-LAD can recover the original AD. Consider the same $\mathbb{S}^1$ model with $\Phi_1:\mathbb{S}^1\rightarrow E$, but with the data sampling distribution $p^{(2)}$ non-uniform. According to the established theorem, this can be achieved if $p_\Z^{(2)}= p^{(2)}$ and $\alpha=0.5$. The sample point follows the probability density function $p^{(2)}(\theta)=\frac{1}{\pi}[\tan^{-1}(\frac{1}{2}\tan\theta)]$. We randomly select 2500 points $\{(s_i,r_i):s_i\in E, r_i\in\mathbb{S}^1\}$ and choose 1000 landmark points from them. 
The quantification results with various $\alpha$ are illustrated in Figure \ref{fig:evalue_evect_alpha}. We observe that $1/2$-LAD approximates the embedding by AD. This result is consistent with our theoretical expectations.

\begin{figure}[hbt!]
\begin{center}
\includegraphics[trim=100 0 100 0, width=1.0\textwidth]{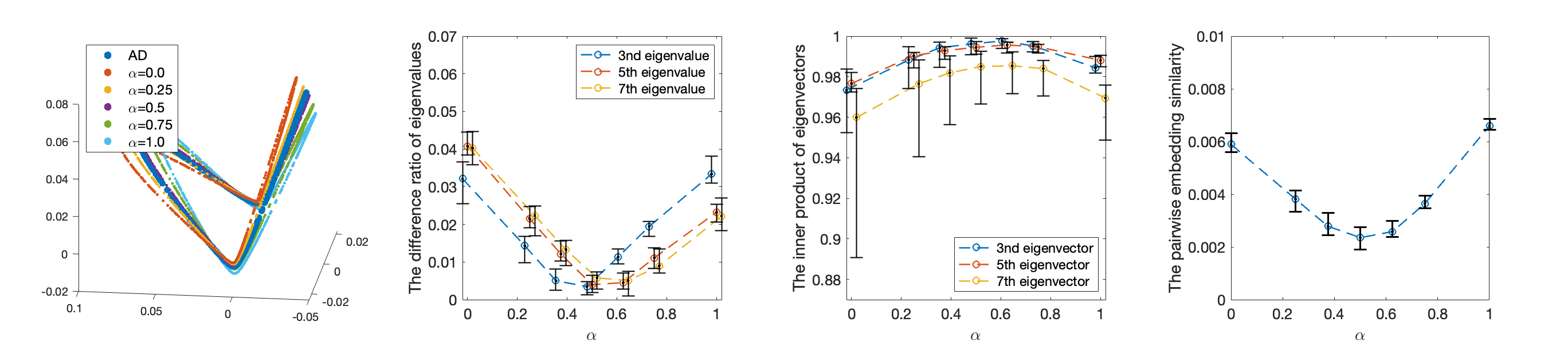}
\end{center}
\caption{Left: The $\alpha$-LAD with $p_\Z^{(2)}=p^{(2)}$ and $\alpha=0,1/4,1/2,3/4,1$. Middle left: The difference ratio of the 2nd, 4th, 6th non-trivial eigenvalues of AD and $\alpha$-LAD with $\alpha=0,0.25,0.375,0.5,0.625,0.75,1$. Middle right: The inner product of the 2nd, 4th, 6th non-trivial eigenvectors of AD and $\alpha$-LAD with $\alpha=0,0.25,0.375,0.5,0.625,0.75,1$. Right: The embedding similarity between AD and $\alpha$-LAD with $q=6$.}
\label{fig:evalue_evect_alpha}
\end{figure}


\section{Application to Automatic Sleep Stage Annotation}
\label{sec:sleep}
According to the American Academy of Sleep Medicine (AASM), sleep dynamics can be classified into two broad stages: rapid eye movement (REM) and non-rapid eye movement (NREM). The NREM stage is further divided into N1, N2 and N3. To assess the sleep stage, experts read electroencephalogram (EEG), electroocculogram (EOG), and electromyogram (EMG) following the R\&K criteria \cite{berry2015, rechtschaffen1968}.

AD has been applied to establish an automatic sleep stage annotation system from two EEG channels \cite{liu2020diffuse}. The key idea is that the common variable among various channels reveals a more reliable physiological process associated with sleep stages by eliminating sensor-specific disturbances. However, the computational burden limits the application of AD to short signals. We now demonstrate that LAD is a computationally efficient alternative to AD for extracting common sources of brain dynamics recorded by the O1-A2 and O2-A1 electrodes for an automatic sleep stage annotation system for five sleep stages, including awake, REM, N1, N2, and N3.
We consider polysomnogram data from forty subjects without sleep apnea from the Taiwan Integrated Database for Intelligent Sleep (TIDIS). All subjects' recordings exceed six hours. All experiments were conducted on a computer equipped with two 12-core CPUs, Intel(R) Xeon(R) CPU E5-2697 v2, utilizing MATLAB software version 9.12 (R2022a) in its 64-bit configuration for experimental design and analysis.

\subsection{Feature Extraction}
By the R\&K criteria \cite{berry2015, rechtschaffen1968}, we take a 30-second epoch for sleep stage evaluation. Each EEG channel recording undergoes the following preprocessing.
Suppose the $i$-th epoch starts at time $t_i$ and ends at $t_i+30$. We extend the $i$-th epoch to a 90-second EEG segment starting at $t_i-60$ and ending at $t_i+30$. Since the sampling rate is 200 Hz, each data vector is of size 18,000.
Following \cite{liu2020diffuse}, we apply the scattering transform \cite{anden2014} to extract spectral features from each 90-second EEG segment
For subject $x$, denote $u^{(\ell)}_{x,j}$ as the extracted feature of the $j$-th epoch in the $\ell$-th channel, where $\ell = 1, 2$ represents O1-A2 and O2-A1, respectively, and $x = 1, \dots, 40$ represents 40 subjects. Let $\mathcal{U}^{(\ell)} = \bigcup_{x = 1}^{40} \{u^{(\ell)}_{x,j}\}_{j = 1}^{J_x}$ be the scattering EEG spectral features of the $\ell$-th channel, where $J_x$ is the number of epochs of subject $x$. In this dataset, the size of $\mathcal{U}^{(\ell)}$ is 27,090 for $\ell = 1, 2$.
Next, we apply AD or LAD to fuse information from the two channels. Let $m = 850\approx 5\sqrt{n}$ is the size of the landmark set. Furthermore, following the established theorem and the above simulation results, we set $\alpha=0.5$ and uniformly sample landmark set from $\{(s_i,r_i)\}$, \textit{i.e.}, $p^{(2)}=p^{(2)}_\Z$ and the landmark set is a subset of the original database.

It is well known that N2 stages dominate other sleep stages, and usually, N1 and REM are much less frequent (See the leftmost column in Table \ref{tab:conf_lead}, which show 52\% epochs are labeled N2 and only 9\% are labeled N1). While handling the imbalanced data is not the focus of this study, for the sake of completeness of the analysis, we consider a simple balancing scheme -- we uniformly sample 170 epochs from each stage, resulting in a total of 850 epochs, as the landmark set. We conducted separate experiments with and without the class balancing. In the subsequent experiments, $0.5$-LAD without a class balancing is denoted by $0.5$-LAD*.

\subsection{Classification}
We chose the standard and widely used kernel support vector machine (SVM) \cite{steinwart2008support} for the learning step. Kernel SVM finds a nonlinear hyperplane to separate the data set into two disjoint subsets. Specifically, we opted for the radial basis function (RBF) kernel. Since our sleep dynamics classification problem involves five classes, we needed to extend the kernel SVM to a multi-class SVM. To achieve this, we applied the one-versus-all (OVA) classification scheme.
We balance the dataset using random down-sampling, which involves randomly selecting epochs from the majority class (N2) and deleting them from the training dataset before kernel SVM training. In our experiment, we randomly delete 50\% of N2 epochs before training.

\subsection{Results}


After applying $0.5$-LAD, $0.5$-LAD*, and AD to 29,070 epochs, the top 3 non-trivial eigenvectors are shown in Figure~\ref{fig:dimred}. It can be observed that the results of $0.5$-LAD and $0.5$-LAD* are {\em nearly} identical to those of AD. The inner product of the top 3 non-trivial eigenvectors between $0.5$-LAD and $0.5$-LAD* are 0.9940, 0.9937, and 0.9936, respectively and the pairwise embedding similarity is 0.006.

We conducted leave-one-subject-out cross-validation (LOSOCV) to assess the SVM prediction performance. The accuracy of AD, $0.5$-LAD and $0.5$-LAD* are $0.793\pm 0.052$, $0.780\pm 0.035$ and $0.785\pm 0.039$ over LOSOCV, respectively. The macro F1 of AD, $0.5$-LAD and $0.5$-LAD* are $0.655\pm 0.068$, $0.650\pm 0.050$ and $0.644\pm 0.058$ over LOSOCV, respectively. By considering a p-value less than 0.05 as indicating statistical significance, both methods showed no significant difference in accuracy or macro F1 compared to AD using the Wilcoxon sign-rank test with Bonferroni correction (for accuracy, the $p$ values are $0.459$ and $0.863$ when comparing $0.5$-LAD and $0.5$-LAD* with AD respectively, and for macro F1, the $p$ values are $0.338$ and $0.718$ when comparing $0.5$-LAD and $0.5$-LAD* with AD respectively).
%
Notably, the computational times for $0.5$-LAD and $0.5$-LAD* are $14.4$ and $13.8$ seconds respectively, which is shorter by over an order of magnitude than AD's average computation time, which is $9$ minutes. In summary, despite slightly lower performance without statistical significance, adopting $0.5$-LAD substantially reduces the computational time, enhancing practical utility.

The distributions of accuracy and macro-F1 by LOSOCV obtained from different methods are depicted on the left-hand side and right-hand side of Figure~\ref{fig:perfStage}, respectively. Through LOSOCV, we acquired 40 confusion matrices for each validation subject. It is evident that the outcomes of $0.5$-LAD closely resemble those of AD, while significantly reducing computational time. The cumulative sum of all confusion matrices derived from AD and $0.5$-LAD are depicted in Table \ref{tab:conf_lead}. Notably, the recall of $0.5$-LAD* outperforms that of $0.5$-LAD in the classes with fewer data, namely REM and N1, which emphasizes the importance of balancing the data.

\begin{figure}[hbt!]
\begin{center}
\includegraphics[trim=250 0 250 0, width=1.\textwidth]{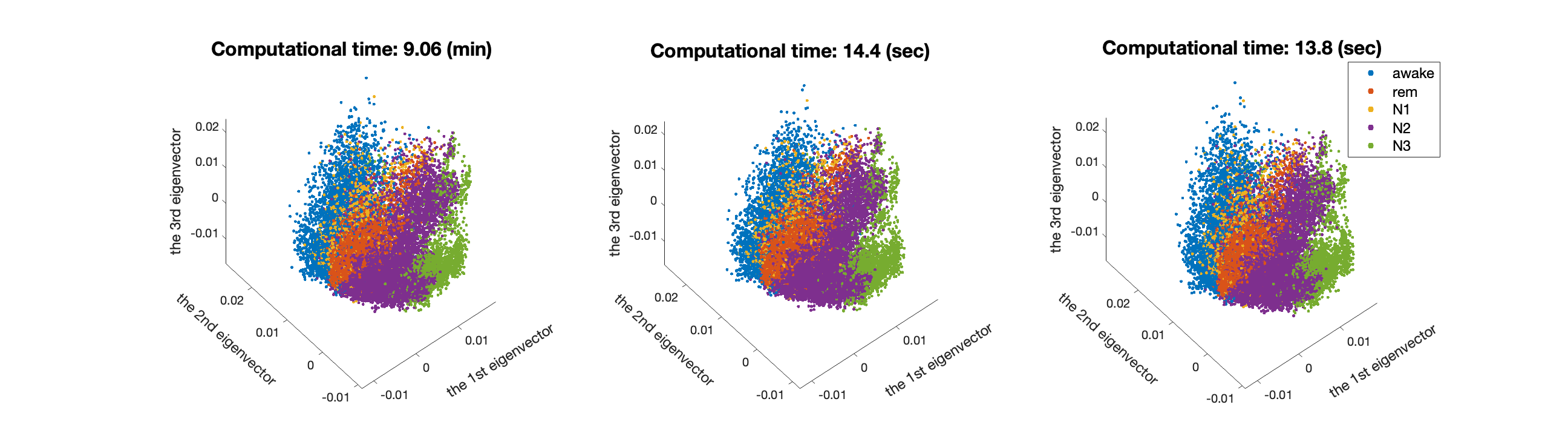}
\end{center}
\caption{Left: embedding by AD. Middle: embedding by $0.5$-LAD. Right: embedding by $0.5$-LAD*.}
\label{fig:dimred}
\end{figure}

\begin{figure}[hbt!]
\begin{center}
\includegraphics[trim=0 30 0 0, clip,width=0.9\textwidth]{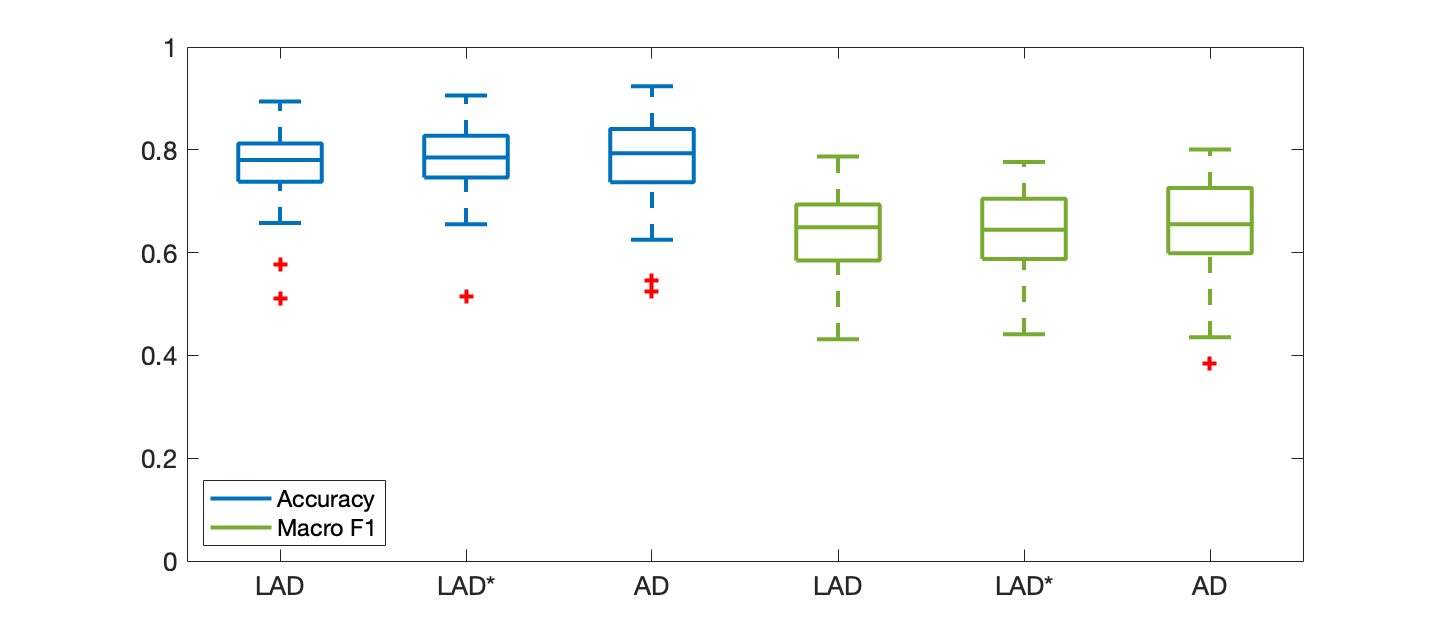}
\end{center}
\caption{The box plot of 40-fold validation accuracies by different methods. LAD stands for Landmark AD without class-balanced landmark set fusion 2 channels; LAD* stands for Landmark AD with class-balanced landmark set fusion 2 channels AD stands for AD fusion 2 channels and O1A2, O2A1 stands for diffusion map on single channels.}
\label{fig:perfStage}
\end{figure}

\begin{table}[hbt!]
\caption{Upper: The cumulative sum of all confusion matrices of AD. Middle: The cumulative sum of all confusion matrices of $0.5$-LAD without class-balanced landmark set. Lower: The cumulative sum of all confusion matrices of $0.5$-LAD* with class-balanced landmark set.}
\small
\begin{tabular}
{!{\vrule width 1.5pt}
>{\centering\arraybackslash}m{0.3cm} |
>{\centering\arraybackslash}m{2.3cm} |
>{\centering\arraybackslash}m{1.9cm} |
>{\centering\arraybackslash}m{1.9cm} |
>{\centering\arraybackslash}m{1.9cm} |
>{\centering\arraybackslash}m{1.9cm} |
>{\center \arraybackslash}m{1.9cm}
!{\vrule width 1.5pt}}
\noalign{\hrule height 1.5pt}
\multicolumn{2}{!{\vrule width 1.5pt}c|}{\multirow{2}{*}{AD}} & \multicolumn{5}{c!{\vrule width 1.5pt}}{Predict}\\\cline{3-7}
\multicolumn{2}{!{\vrule width 1.5pt}c|}{}& Awake & REM & N1 & N2 & N3\\\cline{2-6}
\noalign{\hrule height 1.5pt}
\multirow{5}{*}{\rotatebox{90}{True}}&Awake (16 \%) &3477 (82\%) & 143 (3\%) & 370 (9\%) & 236 (6\%) & 4 (0\%)\\
&REM (13 \%) &63 (2\%) & 2517 (72\%) & 514 (15\%) & '414 (12\%) & 1 (0\%)\\
&N1 (9 \%) & 432 (18\%) & 581 (24\%) & 868 (36\%) & 548 (23\%) & 0 (0\%)\\
&N2 (52 \%) & 287 (2\%) & 478 (3\%) & 464 (3\%) & 12441 (88\%) & 523 (4\%)\\
&N3 (10 \%) & 121 (4\%) & 2 (0\%) & 2 (0\%) & 844 (31\%) & 1760 (64\%)\\
\noalign{\hrule height 1.5pt}
\multicolumn{2}{!{\vrule width 1.5pt}c|}{\multirow{2}{*}{$0.5$-LAD}} & \multicolumn{5}{c!{\vrule width 1.5pt}}{Predict}\\\cline{3-7}
\multicolumn{2}{!{\vrule width 1.5pt}c|}{}& Awake & REM & N1 & N2 & N3\\
\noalign{\hrule height 1.5pt}
\multirow{5}{*}{\rotatebox{90}{True}}&Awake (16 \%) &3472 (82\%) & 173 (4\%) & 356 (8\%) & 226 (5\%) & 3 (0\%)\\
&REM (13 \%) & 90 (3\%) & 2475 (71\%) & 461 (13\%) & 482 (14\%) & 1 (0\%)\\
&N1 (9 \%) & 470 (19\%) & 620 (26\%) & 742 (31\%) & 597 (25\%) & 0 (0\%)\\
&N2 (52 \%) & 276 (2\%) & 485 (3\%) & 447 (3\%) & 12422 (88\%) & 563 (4\%)\\
&N3 (10 \%) & 30 (1\%) & 0 (0\%) & 2 (0\%) & 978 (36\%) & 1719 (63\%)\\
\noalign{\hrule height 1.5pt}
\multicolumn{2}{!{\vrule width 1.5pt}c|}{\multirow{2}{*}{$0.5$-LAD*}} & \multicolumn{5}{c!{\vrule width 1.5pt}}{Predict}\\\cline{3-7}
\multicolumn{2}{!{\vrule width 1.5pt}c|}{}& Awake & REM & N1 & N2 & N3\\
\noalign{\hrule height 1.5pt}
\multirow{5}{*}{\rotatebox{90}{True}}&Awake (16 \%) &3555 (84\%) & 163 (4\%) & 291 (7\%) & 216 (5\%) & 5 (0\%)\\
&REM (13 \%) &94 (3\%) & 2499 (71\%) & 439 (13\%) & 476 (14\%) & 1 (0\%)\\
&N1 (9 \%) &508 (21\%) & 592 (24\%) & 774 (32\%) & 554 (23\%) & 1 (0\%)\\
&N2 (52 \%) & 280 (2\%) &506 (3\%) & 470 (3\%) & 12431 (88\%) & 506 (4\%)\\
&N3 (10 \%) & 24 (1\%) & 3 (0\%) & 14 (1\%) & 855 (31\%) & 1833 (67\%)\\
\noalign{\hrule height 1.5pt}
\end{tabular}
\label{tab:conf_lead}
\end{table}

\clearpage
\section{Proof of the Main Theorem}
\label{sec:proof}

With the preparation in Section \ref{sec:main_thm}, in this section, we will prove Theorem \ref{thm:main} and \ref{thm:var}.

\subsection{Some background material for the proof}
We collect some lemmas that are needed for the proof in this subsection.

\begin{lemma}[\cite{talmon2019} Lemma A.2]
In the normal coordinate around $x \in \mathcal{M}$, when $\|v\|_{g} \ll 1, v \in T_x \mathcal{M}$, the Riemannian measure satisfies
\begin{equation*}
\mathrm{d} V\left(\exp _x v\right)=\left(1-\frac{1}{6} \operatorname{Ric}_{k l}(x) v^k v^l+O\left(\|v\|^3\right)\right) \mathrm{d} v^1 \wedge \mathrm{d} v^2 \ldots \wedge \mathrm{d} v^n
\end{equation*}
where $\operatorname{Ric}$ is the Ricci curvature.
\end{lemma}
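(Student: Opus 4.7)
The plan is to derive the expansion of the Riemannian volume density $\sqrt{\det g_{ij}(v)}$ in normal coordinates around $x$, and then identify the second-order coefficient with the Ricci tensor. Recall that normal coordinates are characterized by $g_{ij}(0)=\delta_{ij}$, $\partial_k g_{ij}(0)=0$, and by a classical computation (see, e.g., Jost's \emph{Riemannian Geometry and Geometric Analysis}) the second-order Taylor expansion of the metric tensor is
\begin{equation*}
g_{ij}(v)=\delta_{ij}-\tfrac{1}{3}R_{ikjl}(x)\,v^{k}v^{l}+O(\|v\|^{3})\,,
\end{equation*}
where $R_{ikjl}$ denotes the components of the Riemann curvature tensor at $x$. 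I would take this expansion as the starting point (it is a standard consequence of the Jacobi equation along radial geodesics and the symmetries of $R$).

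Next, I would expand the determinant. Writing $g_{ij}(v)=\delta_{ij}+A_{ij}(v)$ with $A_{ij}(v)=-\tfrac{1}{3}R_{ikjl}(x)v^{k}v^{l}+O(\|v\|^{3})$, the identity $\det(I+A)=\exp\bigl(\operatorname{tr}\log(I+A)\bigr)$ together with $\log(I+A)=A-\tfrac{1}{2}A^{2}+\cdots$ gives
\begin{equation*}
\det g_{ij}(v)=1+\operatorname{tr}A(v)+O(\|v\|^{4})=1-\tfrac{1}{3}R_{ikil}(x)\,v^{k}v^{l}+O(\|v\|^{3})\,,
\end{equation*}
since the $A^{2}$ correction is already $O(\|v\|^{4})$ and hence absorbed into the error. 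Using the definition of the Ricci tensor $\operatorname{Ric}_{kl}(x)=R_{ikil}(x)$ (the appropriate contraction of the Riemann tensor), and then applying $\sqrt{1+t}=1+\tfrac{1}{2}t+O(t^{2})$ with $t=-\tfrac{1}{3}\operatorname{Ric}_{kl}(x)v^{k}v^{l}+O(\|v\|^{3})$, I obtain
\begin{equation*}
\sqrt{\det g_{ij}(v)}=1-\tfrac{1}{6}\operatorname{Ric}_{kl}(x)\,v^{k}v^{l}+O(\|v\|^{3})\,.
\end{equation*}

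Finally, since in any local coordinates $\mathrm{d}V=\sqrt{\det g_{ij}}\,\mathrm{d}v^{1}\wedge\cdots\wedge \mathrm{d}v^{n}$, pulling back by $\exp_{x}$ (which in normal coordinates identifies $v\in T_{x}\mathcal{M}$ with the coordinate point and has Jacobian $1$ at $0$) yields the claimed expansion
\begin{equation*}
\mathrm{d}V(\exp_{x}v)=\Bigl(1-\tfrac{1}{6}\operatorname{Ric}_{kl}(x)v^{k}v^{l}+O(\|v\|^{3})\Bigr)\mathrm{d}v^{1}\wedge\cdots\wedge \mathrm{d}v^{n}\,.
\end{equation*}

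The only nontrivial input is the quadratic Taylor expansion of $g_{ij}$ in normal coordinates; everything else is a straightforward linear algebra calculation with the determinant and square root. I expect no real obstacle beyond carefully tracking conventions for the Riemann tensor and verifying that the contraction $R_{ikil}$ produces the Ricci tensor with the sign used above; once that is fixed, the coefficient $-1/6$ comes out cleanly from the chain $-1/3$ (metric) $\times$ $1/2$ (square root).
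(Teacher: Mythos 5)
Your argument is correct: the quadratic Taylor expansion $g_{ij}(v)=\delta_{ij}-\tfrac{1}{3}R_{ikjl}(x)v^kv^l+O(\|v\|^3)$ in normal coordinates, followed by $\det(I+A)=1+\operatorname{tr}A+O(\|A\|^2)$ and the square-root expansion, is the standard derivation of the $-\tfrac{1}{6}\operatorname{Ric}_{kl}v^kv^l$ coefficient, and your bookkeeping of the error terms is sound. Note that the paper itself gives no proof of this statement---it is imported verbatim as Lemma A.2 of the cited reference---so there is nothing to contrast with; your derivation is exactly the textbook argument that underlies that citation, with the only genuine care point being the one you already flag, namely fixing the sign convention for $R$ so that the contraction $R_{ikil}$ is $+\operatorname{Ric}_{kl}$.
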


First, we evaluate the ambient distance of two points by the same metric.
\begin{lemma}[\cite{talmon2019} Lemma A.2]
\label{lem:same_metric}
Fix $x \in \mathcal{M}$ and $y=\exp _x^{(2)}(v)$, where $v \in T_x \mathcal{M}$ with $\|v\|_{g^{(2)}} \ll 1$. We have
\begin{equation*}
\iota^{(2)}(y)=\iota^{(2)}(x)+\mathrm{d} \iota^{(2)}(v)+Q_2^{(2)}(v)+Q_3^{(2)}(v)+O\left(\|v\|_{g^{(i)}}^4\right),
\end{equation*}
where $\Pi^{(2)}$ is the second fundamental form of $\iota^{(2)}, Q_2^{(2)}(v):=\frac{1}{2} \Pi^{(2)}(v, v)$ and $Q_3^{(2)}(v):=\frac{2}{6} \nabla_v^{(2)} \Pi^{(2)}(v, v)$. Further, for $z=\exp _x^{(2)}(u)$, where $u \in T_x \mathcal{M}$ with $\|u\|_{g^{(2)}} \ll 1$, we have
\begin{equation*}
\begin{aligned}
\left\|\iota^{(2)}(z)-\iota^{(2)}(y)\right\|= & \|u-v\|_{g^{(2)}}+\tilde{Q}_2^{(2)}(u,v)+\tilde{Q}_3^{(2)}(u,v)+O\left(\|u\|_{g^{(2)}}^4,\|v\|_{g^{(2)}}^4\right)
\end{aligned}
\end{equation*}
where
\begin{align*}
\tilde{Q}_2^{(2)}(u,v)&=\frac{\left\|Q_2^{(2)}(u)-Q_2^{(2)}(v)\right\|^2}{2\|u-v\|_{g^{(2)}}} \\
\tilde{Q}_3^{(2)}(u,v)&=\frac{\left\langle\mathrm{d} \iota^{(2)}(u-v), Q_3^{(2)}(u)-Q_3^{(2)}(v)\right\rangle}{\|u-v\|_{g^{(2)}}}\,.
\end{align*}
\end{lemma}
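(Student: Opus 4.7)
The plan is to work in normal coordinates at $x$ with respect to the intrinsic metric $g^{(2)}$. Setting $\gamma(t) := \exp_x^{(2)}(tv)$ produces a geodesic with $\gamma(0)=x$, $\gamma'(0)=v$, and $y=\gamma(1)$, so the first identity reduces to a fourth-order Taylor expansion of $F(t) := \iota^{(2)}(\gamma(t))$ at $t=0$. The first derivative is $F'(0) = \mathrm{d}\iota^{(2)}(v)$. For the second derivative, I decompose $F''(0)$ into components tangent and normal to $\iota^{(2)}(\M)$ at $\iota^{(2)}(x)$: the tangent part is $\mathrm{d}\iota^{(2)}(\nabla^{(2)}_{\gamma'}\gamma')|_{t=0}$, which vanishes because $\gamma$ is a $g^{(2)}$-geodesic, and the normal part is by definition $\Pi^{(2)}(v,v) = 2Q_2^{(2)}(v)$. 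Differentiating once more and again invoking the geodesic equation to kill the purely tangential contribution yields $F'''(0) = 6Q_3^{(2)}(v)$, with the remainder controlled by the $C^4$-smoothness of $\iota^{(2)}$ from Assumption \ref{asp:assumption}. Evaluating at $t=1$ gives the first identity.

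For the second identity, I apply the first expansion to both $y = \exp_x^{(2)}(v)$ and $z = \exp_x^{(2)}(u)$ and subtract, obtaining
\[
\iota^{(2)}(z) - \iota^{(2)}(y) = \mathrm{d}\iota^{(2)}(u-v) + \bigl(Q_2^{(2)}(u) - Q_2^{(2)}(v)\bigr) + \bigl(Q_3^{(2)}(u) - Q_3^{(2)}(v)\bigr) + O(\|u\|^4 + \|v\|^4).
\]
I then compute $\|\iota^{(2)}(z) - \iota^{(2)}(y)\|^2$ using two key facts: (i) $\mathrm{d}\iota^{(2)}$ is a linear isometry from $(T_x\M, g^{(2)})$ onto its image, so $\|\mathrm{d}\iota^{(2)}(u-v)\| = \|u-v\|_{g^{(2)}}$; and (ii) $\Pi^{(2)}$ is normal-bundle valued at $\iota^{(2)}(x)$, so $Q_2^{(2)}(u)-Q_2^{(2)}(v)$ is orthogonal to $\mathrm{d}\iota^{(2)}(u-v)$ and that cross term vanishes exactly. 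A careful bookkeeping of the remaining inner products shows that only $\|Q_2^{(2)}(u) - Q_2^{(2)}(v)\|^2$ and $2\langle \mathrm{d}\iota^{(2)}(u-v), Q_3^{(2)}(u) - Q_3^{(2)}(v)\rangle$ survive at the quartic level; every other cross term is $O(\|u\|^5+\|v\|^5)$. Extracting the square root via $\sqrt{a^2 + \delta} = a + \delta/(2a) + O(\delta^2/a^3)$ with $a = \|u-v\|_{g^{(2)}}$ produces the stated expressions for $\tilde{Q}_2^{(2)}$ and $\tilde{Q}_3^{(2)}$.

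The main delicacy to watch for is the apparent $1/\|u-v\|_{g^{(2)}}$ blow-up in $\tilde{Q}_2^{(2)}$ and $\tilde{Q}_3^{(2)}$. This is only apparent: since $Q_2^{(2)}$ is homogeneous of degree two and $Q_3^{(2)}$ of degree three, each difference $Q_j^{(2)}(u)-Q_j^{(2)}(v)$ factors through $u-v$, so the ratios are smooth of the correct order and are absorbed into the declared remainder $O(\|u\|^4, \|v\|^4)$. Uniformity of all implied constants over a compact neighborhood of $x$ follows from the $C^4$ bound on $\iota^{(2)}$, and thus on $\Pi^{(2)}$ and $\nabla^{(2)}\Pi^{(2)}$, supplied by Assumption \ref{asp:assumption}.
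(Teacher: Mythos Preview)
The paper does not prove this lemma; it is quoted verbatim from \cite{talmon2019} as background material, so there is no ``paper's own proof'' to compare against. Your outline is the standard argument one would find in that reference: Taylor-expand the embedded geodesic $t\mapsto\iota^{(2)}(\exp_x^{(2)}(tv))$, identify the successive ambient derivatives via the second fundamental form, then square and take a root for the distance formula, using the orthogonality of $\Pi^{(2)}$ to the tangent space to kill the $\langle \mathrm{d}\iota^{(2)}(u-v),\,Q_2^{(2)}(u)-Q_2^{(2)}(v)\rangle$ cross term.

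One point in your write-up needs more care. You say that for $F'''(0)$ you ``again invoke the geodesic equation to kill the purely tangential contribution.'' The geodesic equation kills the tangential part of $F''(0)$; it does \emph{not} kill the tangential part of $F'''(0)$. Differentiating $F''(t)=\Pi^{(2)}(\gamma'(t),\gamma'(t))$ once more in the ambient space produces, by the Weingarten decomposition, a tangential piece $-A_{\Pi^{(2)}(v,v)}(v)$ in addition to the normal piece $\nabla^{\perp}_v\Pi^{(2)}(v,v)$. That tangential piece is exactly what makes $\tilde Q_3^{(2)}(u,v)=\langle \mathrm{d}\iota^{(2)}(u-v),\,Q_3^{(2)}(u)-Q_3^{(2)}(v)\rangle/\|u-v\|$ nonzero; if $Q_3^{(2)}$ were purely normal, this inner product would vanish identically and the term would be pointless. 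So either interpret $\nabla_v^{(2)}\Pi^{(2)}(v,v)$ in the statement as the full ambient derivative (which is how \cite{talmon2019} uses it), or keep the shape-operator term explicitly. Apart from this, your argument is sound.
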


Recall that we could find a diffeomorphism map $\Phi:\mathbb{R}^d\rightarrow\mathbb{R}^d$ such that $\Phi\circ \iota^{(2)}=\iota^{(1)}$. Its behavior is summarized below. Now, we evaluate the ambient distance by the differnet metric.
\begin{lemma}[\cite{talmon2019} Lemma A.3]
\label{lem:diff_metric}
Fix $x \in \mathcal{M}$. To simplify the notation, we ignore the subscription of $\left.\nabla^{(2)} \Phi\right|_{\iota^{(2)}(x)}$ and use $\nabla \Phi$. Similar simplification holds for $\nabla^{(2)^2} \Phi, \mathrm{d} \iota^{(2)}, \Pi^{(2)}$, etc. Suppose $y=\exp _x^{(2)} u$, where $u \in T_x \mathcal{M}$ and $\|u\|_{g^{(2)}}$ is small enough. Then we have
\begin{equation*}
\iota^{(1)}(y)=\iota^{(1)}(x)+\nabla \Phi \mathrm{d} \iota^{(2)} u+Q_2^{(1)}(u)+Q_3^{(1)}(u)+O\left(\|u\|_{g^{(2)}}^4\right)
\end{equation*}
where $Q_2^{(1)}$ and $Q_3^{(1)}$ are quadratic and cubic polynomials respectively defined by $Q_2^{(1)}(u):=\frac{1}{2} \Pi^{(1)}(u, u)$ and $Q_3^{(1)}(u):=\frac{1}{6} \nabla \Phi \nabla_u^{(2)} \Pi^{(2)}(u, u)+\nabla^2 \Phi\left(u, \Pi^{(2)}(u, u)\right)$.
Further, for $x\in \M$ defined as above, we have
\begin{equation*}
\big\|\iota^{(1)}(x)-\iota^{(1)}(y)\big\|=\big\|\nabla \Phi \mathrm{d} \iota^{(2)}(u)\big\|+\tilde{Q}_3^{(1)}(u)+O\left(\|u\|^4\right),
\end{equation*}
where
\begin{equation*}
\tilde{Q}_3^{(1)}(u):=\frac{2\left\langle\nabla \Phi \mathrm{d} \iota^{(2)}(u), Q_3^{(1)}(u)\right\rangle+\left\|Q_2^{(1)}(u)\right\|^2}{\left\|\nabla \Phi \mathrm{d} \iota^{(2)}(u)\right\|} .
\end{equation*}
\end{lemma}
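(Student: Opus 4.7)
The plan is to exploit the factorization $\iota^{(1)} = \Phi \circ \iota^{(2)}$ and reduce the claim to combining the intrinsic expansion from Lemma~\ref{lem:same_metric} with a third-order ambient Taylor expansion of $\Phi$. Set $w := \iota^{(2)}(y) - \iota^{(2)}(x)$; by Lemma~\ref{lem:same_metric},
\begin{equation*}
w = \mathrm{d}\iota^{(2)}(u) + Q_2^{(2)}(u) + Q_3^{(2)}(u) + O(\|u\|_{g^{(2)}}^4)\,,
\end{equation*}
with each $Q_k^{(2)}(u)$ of order $\|u\|^k$. Since $\|w\| = O(\|u\|)$, the ambient expansion
\begin{equation*}
\iota^{(1)}(y) - \iota^{(1)}(x) = \nabla\Phi(w) + \tfrac{1}{2}\nabla^2\Phi(w,w) + \tfrac{1}{6}\nabla^3\Phi(w,w,w) + O(\|w\|^4)
\end{equation*}
is justified, where all derivatives of $\Phi$ are evaluated at $\iota^{(2)}(x)$.

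I would then substitute the expansion of $w$ and regroup by total order in $\|u\|_{g^{(2)}}$. The linear part is $\nabla\Phi\,\mathrm{d}\iota^{(2)}(u) = \mathrm{d}\iota^{(1)}(u)$. The quadratic aggregate is $\nabla\Phi(Q_2^{(2)}(u)) + \tfrac{1}{2}\nabla^2\Phi(\mathrm{d}\iota^{(2)}(u),\mathrm{d}\iota^{(2)}(u))$, and the cubic aggregate is $\nabla\Phi(Q_3^{(2)}(u)) + \nabla^2\Phi(\mathrm{d}\iota^{(2)}(u),Q_2^{(2)}(u)) + \tfrac{1}{6}\nabla^3\Phi(\mathrm{d}\iota^{(2)}(u))^{\otimes 3}$. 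After projecting onto the normal bundle of $\iota^{(1)}(\mathcal{M})$ (equivalently, absorbing the tangential components of these higher-order terms into an $O(\|u\|^2)$ reparameterization of the linear direction), the residual normal parts match the stated $Q_2^{(1)}(u) = \tfrac{1}{2}\Pi^{(1)}(u,u)$ and $Q_3^{(1)}(u)$ via the Gauss formula relating $\Pi^{(1)}$ to $\Pi^{(2)}$ through the ambient diffeomorphism.

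For the norm expansion, write $L := \nabla\Phi\,\mathrm{d}\iota^{(2)}(u)$ and compute the square
\begin{equation*}
\|\iota^{(1)}(y) - \iota^{(1)}(x)\|^2 = \|L\|^2 + 2\langle L, Q_2^{(1)}(u)\rangle + 2\langle L, Q_3^{(1)}(u)\rangle + \|Q_2^{(1)}(u)\|^2 + O(\|u\|^5)\,.
\end{equation*}
The cross term $\langle L, Q_2^{(1)}(u)\rangle$ vanishes because $L$ is tangent to $\iota^{(1)}(\mathcal{M})$ while $Q_2^{(1)}(u)$ is a multiple of the second fundamental form, hence normal. Factor out $\|L\|^2$ and apply $\sqrt{1+t} = 1 + t/2 + O(t^2)$ to the relative remainder $t$, which is of size $O(\|u\|^2)$; the $O(t^2)$ contribution scales as $\|L\|\cdot O(\|u\|^4) = O(\|u\|^5)$ and is absorbed in the stated error. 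What remains matches $\|L\| + \tilde{Q}_3^{(1)}(u) + O(\|u\|^4)$ with $\tilde{Q}_3^{(1)}(u) = \bigl(2\langle L, Q_3^{(1)}(u)\rangle + \|Q_2^{(1)}(u)\|^2\bigr)/(2\|L\|)$.

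The main obstacle I anticipate is the tangent/normal separation at the quadratic level. In Lemma~\ref{lem:same_metric} the second-order term is automatically purely normal because one uses normal coordinates of the same metric $g^{(2)}$ in which the embedding is being expanded; in the present setting those coordinates are generically not normal for $g^{(1)}$, so the raw quadratic contribution $\nabla\Phi(Q_2^{(2)}(u)) + \tfrac{1}{2}\nabla^2\Phi(\mathrm{d}\iota^{(2)}(u))^{\otimes 2}$ carries a non-trivial tangential part. Verifying that this tangential component can legitimately be absorbed into an $O(\|u\|^2)$ reparameterization of the linear direction — which is what licenses the vanishing of $\langle L, Q_2^{(1)}(u)\rangle$ used in the norm expansion — is the only genuinely delicate point. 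Once this is settled via the Gauss formula for $\iota^{(1)}$, the remainder is routine multi-index Taylor bookkeeping to the announced order.
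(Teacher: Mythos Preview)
The paper does not prove this lemma; it is imported verbatim from \cite{talmon2019} (Lemma~A.3) and used as a black box in the proof of Lemma~\ref{lem:refkernel}. Your strategy---feed the expansion of Lemma~\ref{lem:same_metric} for $w=\iota^{(2)}(y)-\iota^{(2)}(x)$ into an ambient Taylor expansion of $\Phi$ and regroup by homogeneity in $u$---is the natural one and is presumably what the cited reference does.

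Your concern at the quadratic level is not a technicality to be dispatched by ``reparameterisation''; it is the crux, and your proposed fix does not actually close the gap. In $g^{(2)}$-normal coordinates the second derivative of $t\mapsto\iota^{(1)}(\exp_x^{(2)}(tu))$ at $t=0$ decomposes as $\mathrm{d}\iota^{(1)}\big(\nabla^{(1)}_u u\big)+\Pi^{(1)}(u,u)$, and the tangential piece is generically nonzero because the Christoffel symbols of $g^{(1)}$ need not vanish in $g^{(2)}$-normal coordinates. Absorbing it into a reparameterisation $u\mapsto u'$ alters the linear term, which the lemma pins down explicitly as $\nabla\Phi\,\mathrm{d}\iota^{(2)}(u)$; and in the norm computation the same shift merely relocates the unwanted $O(\|u\|^2)$ contribution from $\langle L,Q_2\rangle$ to $\|L'\|-\|L\|$ rather than cancelling it. Equivalently, $\langle c'(0),c''(0)\rangle=\tfrac12\,\tfrac{d}{dt}\big|_{t=0}\,g^{(1)}(\gamma',\gamma')$ with $\gamma(t)=\exp_x^{(2)}(tu)$, and a $g^{(2)}$-geodesic has no reason to have constant $g^{(1)}$-speed. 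The form stated here therefore rests on conventions fixed in \cite{talmon2019}---in particular how $\Phi$ is extended to an ambient map so that $\nabla\Phi,\nabla^2\Phi$ make sense, and exactly what is packaged into the symbol $\Pi^{(1)}$---which you should consult directly rather than reverse-engineer. (Incidentally, the denominator $2\|L\|$ you obtain for $\tilde{Q}_3^{(1)}$ from $\sqrt{1+t}=1+t/2+O(t^2)$ is the arithmetically correct one; the $\|L\|$ printed in the lemma appears to be a transcription slip.)
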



We also need the following truncation lemma. 
\begin{lemma}[\cite{talmon2019} Lemma A.5]
\label{lem:trunc}
Suppose Assumption \ref{asp:assumption} holds, $F \in L^{\infty}(\mathcal{M})$ and $0<\gamma<1 / 2$. Then, when $\epsilon$ is small enough, for all pairs of $x, y \in \mathcal{M}$ so that $y=\exp _x^{(2)} v$ and $\|v\|_{g^{(2)}}>2 \epsilon^\gamma$, the following holds:
    \begin{equation*}
    \left|\int_{\mathcal{M}} K_\epsilon^{\left(1\right)}(x, z) K_\epsilon^{\left(2\right)}\left(z, y\right) F(z) \mathrm{d} V^{(2)}(z)\right|=O\left(\epsilon^{d / 2+3 / 2}\right)\,.
    \end{equation*}
\end{lemma}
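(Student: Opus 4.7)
The plan is to exploit the super-Gaussian decay imposed by Assumption \ref{asp:assumption}(2): since $\tilde{K}^{(\ell)}(t)\leq c_1 e^{-c_2 t^2}$ and the kernel is evaluated at $t=\|\cdot\|^2/\epsilon$, whenever the ambient chordal distance is bounded below by $c\epsilon^\gamma$ with $\gamma<1/2$, one gets $K_\epsilon^{(\ell)}\leq c_1 \exp(-c_2 c^4\epsilon^{4\gamma-2})$, whose exponent diverges as $\epsilon\to 0$. This gives decay faster than any power of $\epsilon$, which in particular beats $\epsilon^{d/2+3/2}$. The strategy is to partition $\mathcal{M}$ into the $\epsilon^\gamma$-geodesic ball around $x$ and its complement, and show that in each piece at least one of the two kernel factors is super-polynomially small, while the other is uniformly bounded by $c_1$.

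Concretely, set $B:=\{z\in\mathcal{M}: d^{(2)}(x,z)<\epsilon^\gamma\}$. For $z\in B$, the triangle inequality combined with the hypothesis $\|v\|_{g^{(2)}}>2\epsilon^\gamma$ yields $d^{(2)}(z,y)>\epsilon^\gamma$. Converting to the chordal distance via the global bi-Lipschitz property of the smooth embedding $\iota^{(2)}$ of the compact manifold $\mathcal{M}$, there exists $c'>0$ independent of $\epsilon$ with $\|\iota^{(2)}(z)-\iota^{(2)}(y)\|\geq c'\epsilon^\gamma$; hence $K_\epsilon^{(2)}(z,y)$ is super-polynomially small, while $K_\epsilon^{(1)}(x,z)\leq c_1$. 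For $z\notin B$, we have $d^{(2)}(x,z)\geq\epsilon^\gamma$; since $\Phi=\iota^{(1)}\circ(\iota^{(2)})^{-1}$ is a smooth diffeomorphism between two compact embedded submanifolds, an analogous bi-Lipschitz bound gives $\|\iota^{(1)}(x)-\iota^{(1)}(z)\|\geq c''\epsilon^\gamma$, so that $K_\epsilon^{(1)}(x,z)$ is super-polynomially small while $K_\epsilon^{(2)}(z,y)\leq c_1$. Integrating each contribution against the uniformly bounded $F$ over the finite measure $dV^{(2)}$ produces an overall super-polynomial bound, which in particular is $O(\epsilon^{d/2+3/2})$.

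The only technical hurdle is confirming that the intrinsic-to-chordal bi-Lipschitz constants for $\iota^{(1)}$ and $\iota^{(2)}$ are global and independent of $\epsilon$. This is immediate by compactness: because each $\iota^{(\ell)}$ is a smooth embedding with pointwise injective differential, a standard compactness argument supplies $c_0,r_0>0$ with $\|\iota^{(\ell)}(a)-\iota^{(\ell)}(b)\|\geq c_0\min(d^{(\ell)}(a,b),r_0)$ for all $a,b\in\mathcal{M}$; for $\epsilon$ small enough, $\epsilon^\gamma<r_0$, so the required estimates apply. I expect no further obstacle, since no normal-coordinate expansion is needed for a truncation of this order. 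The finer cancellations captured by Lemmas \ref{lem:same_metric}--\ref{lem:diff_metric} will instead be invoked later, when extracting the leading $\epsilon$-asymptotics inside $B$ for the proof of Theorem \ref{thm:main}; for the truncation itself, only the kernel tail and compactness are required.
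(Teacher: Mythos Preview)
Your argument is correct. The paper does not supply its own proof of this lemma; it is quoted verbatim from \cite{talmon2019} as background material, so there is no in-paper proof to compare against. Your strategy---splitting $\mathcal{M}$ into the $g^{(2)}$-geodesic ball $B$ of radius $\epsilon^\gamma$ about $x$ and its complement, and then using the triangle inequality together with the global bi-Lipschitz comparison between intrinsic and chordal distances to force one kernel factor below $c_1\exp(-c\,\epsilon^{4\gamma-2})$ on each piece---is exactly the standard mechanism behind such truncation estimates and is how the cited lemma is proved in \cite{talmon2019}. The compactness argument you give for the uniform bi-Lipschitz constants, and the observation that $g^{(1)}$ and $g^{(2)}$ are equivalent Riemannian metrics on the compact $\mathcal{M}$ (so that $d^{(2)}(x,z)\geq\epsilon^\gamma$ transfers to a comparable lower bound on $\|\iota^{(1)}(x)-\iota^{(1)}(z)\|$), are the right ingredients. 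One small point worth making explicit in the write-up: the hypothesis ``$y=\exp_x^{(2)}v$ with $\|v\|_{g^{(2)}}>2\epsilon^\gamma$'' is being used in the sense $d^{(2)}(x,y)>2\epsilon^\gamma$, which is the intended reading once $\epsilon$ is small enough that $2\epsilon^\gamma$ lies below the injectivity radius; you rely on this when invoking the triangle inequality, so it is worth stating.
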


\subsection{Proof of Theorem \ref{thm:main}}

We start with the asymptotical behavior of the kernel associated with LAD. First, for convenience, denote
\begin{equation*}
\tilde{B}_h^{(2)}(x):=\exp _x^{(2)}\left(B_h^{(2)}\right),
\end{equation*}
where $B_h^{(2)}=\left\{u \in T_x \mathcal{M} \mid\|u\|_{g^{(2)}} \leq h\right\} \subset T_x \mathcal{M}$ is a $d$-dim disk with the center 0 and the radius $h>0$.
\begin{lemma}
\label{lem:refkernel}
Take $f \in C^3(\mathcal{M})$, $0<\gamma<1 / 2$ and $x, y \in \mathcal{M}$ so that $y=\exp _x^{(2)} v$, where $v \in T_x \mathcal{M}$ and $\|v\|_{g^{(2)}} \leq 2 \epsilon^\gamma$. Fix normal coordinates around $x$ associated with $g^{(1)}$ and $g^{(2)}$ and fix orthonormal coordinates associated with $g^{(1)}$ and $g^{(2)}$. Under these orthonormal coordinates, denote 
\begin{align}
R_x:=\big[\mathrm{d} \exp _x^{(2)}\big|_0\big]^{-1}\big[\mathrm{d} \iota^{(2)}\big]^{-1} \nabla \Phi\big[\mathrm{d} \iota^{(1)}\big]\big[\mathrm{d} \exp _x^{(1)}\big|_0\big]\,,\label{definition Rx}
\end{align}
which maps $\mathbb{R}^d$ to $\mathbb{R}^d$ associated with the basis $\{E_i\}_{i=1}^d$. Then, when $\epsilon$ is sufficiently small, the following holds:
\begin{equation*}
\begin{aligned}
& \int_{\mathcal{M}} K_\epsilon^{(1)}(x, z) K_\epsilon^{(2)}\left(z, y\right) F(z) \mathrm{d} V^{(2)}(z) \\
= &\, \epsilon^{d / 2}\big[F(x) A_{0,\epsilon}(v)+\epsilon^{1/2} A_{1,\epsilon}(F, v)+\epsilon A_{2,\epsilon}(F, v)\big]+O(\epsilon^{d/2+3 / 2})\,,
\end{aligned}
\end{equation*}
where
\begin{align*}
A_{0,\epsilon}(v)&\,:=\int_{\mathbb{R}^d} \tilde{K}^{(1)}\left(\left\|R_x w\right\|\right) \tilde{K}^{(2)}(\|w-v / \sqrt{\epsilon}\|) \mathrm{d} w\,,\\
A_{1, \epsilon}(F, v)&\,:=\int_{\mathbb{R}^d} \tilde{K}^{(1)}\left(\left\|R_x w\right\|\right) \tilde{K}^{(2)}(\|w-v / \sqrt{\epsilon}\|) \nabla_w^{(2)} F(x) \mathrm{d} w\,,\\
A_{2, \epsilon}(F, v)&\,:=F(x)B_{21,\epsilon}(v)+F(x)B_{20,\epsilon}(v)+B_{22,\epsilon}(F, v)\nonumber\,,
\end{align*}
where
\begin{align*}
B_{21,\epsilon}\,(v):= &\int_{\mathbb{R}^d}\big[\tilde{K}^{(1)}\big]^{\prime}\left(\left\|R_x w\right\|\right) \tilde{K}^{(2)}(\|w-v / \sqrt{\epsilon}\|) \tilde{Q}_3^{(2)}(u) \mathrm{d} w \\
 &\hspace*{0.5cm}+\int_{\mathbb{R}^d} \tilde{K}^{(1)}\left(\left\|R_x w\right\|\right)\big[\tilde{K}^{(2)}\big]^{\prime}(\|w-v / \sqrt{\epsilon}\|) \tilde{Q}_3^{(2)}(w, v / \sqrt{\epsilon}) \mathrm{d} w \nonumber\\
B_{20,\epsilon}(v):= &\int_{\mathbb{R}^d} \tilde{K}^{(1)}\left(\left\|R_x w\right\|\right) \tilde{K}^{(2)}(\|w-v / \sqrt{\epsilon}\|) \operatorname{Ric}_{i j}^{(1)}(x) w^i w^j \mathrm{~d} w \,,\\
B_{22,\epsilon}(F, v):=&\int_{\mathbb{R}^d} \tilde{K}^{(1)}\left(\left\|R_x w\right\|\right) \tilde{K}^{(2)}(\|w-v / \sqrt{\epsilon}\|) \frac{\nabla_{w, w}^{(2)}{ }^2 F(x)}{2} \mathrm{d} w \,,
\end{align*}
where $\operatorname{Ric}^{(\ell)}$ is the Ricci curvature and $\Pi^{(\ell)}$ is the second fundamental associated with $g^{(\ell)}$. Note that $B_{21,\epsilon}(v)$ depends on the first derivative of kernel functions, $B_{20,\epsilon}(v)$ depends on Ricci curvature, and $B_{22,\epsilon}(F,v)$ depends on the second derivative of $F$. Moreover, $A_{0,\epsilon}(v)=A_{0,\epsilon}(-v)$, $A_{1,\epsilon}(F,v)=-A_{1,\epsilon}(F,-v)$ and $A_{2,\epsilon}(F,v)=A_{2,\epsilon}(F,-v)$.
\end{lemma}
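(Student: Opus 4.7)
The plan is to follow the standard manifold-kernel asymptotics recipe, now adapted to two distinct embeddings of $\mathcal{M}$. First I would invoke the truncation lemma (Lemma \ref{lem:trunc}) to reduce the integral over $\mathcal{M}$ to one over the geodesic disk $\tilde{B}_{2\epsilon^\gamma}^{(2)}(x)$ modulo an $O(\epsilon^{d/2+3/2})$ error. Inside this disk I parametrize $z = \exp_x^{(2)}(u)$ using the $g^{(2)}$-normal coordinates, so that $dV^{(2)}(z) = \bigl(1 - \tfrac{1}{6}\mathrm{Ric}^{(2)}_{kl}(x)u^k u^l + O(\|u\|^3)\bigr)\,du$ by the volume expansion, with $u$ ranging over a Euclidean ball of radius $2\epsilon^\gamma$ in $T_x\mathcal{M}\cong\mathbb{R}^d$.

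Next I would apply Lemma \ref{lem:same_metric} with $y = \exp_x^{(2)} v$ and $z = \exp_x^{(2)} u$ to expand $\|\iota^{(2)}(z)-\iota^{(2)}(y)\|$ as $\|u-v\|_{g^{(2)}} + \tilde{Q}_2^{(2)}(u,v) + \tilde{Q}_3^{(2)}(u,v) + O(\|u\|^4 + \|v\|^4)$, and Lemma \ref{lem:diff_metric} to expand $\|\iota^{(1)}(z)-\iota^{(1)}(x)\|$ as $\|\nabla\Phi\,\mathrm{d}\iota^{(2)}(u)\| + \tilde{Q}_3^{(1)}(u) + O(\|u\|^4)$. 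Using the definition \eqref{definition Rx} of $R_x$, the leading Euclidean norm becomes exactly $\|R_x u\|$ once expressed in the orthonormal basis $\mathcal{B}$. I then Taylor-expand $\tilde{K}^{(1)}$ and $\tilde{K}^{(2)}$ around their leading-order arguments, retaining first-derivative contributions from the $\tilde{Q}_3^{(\ell)}$ corrections (these will feed $B_{21,\epsilon}$), and I Taylor-expand $F(z) = F(x) + \nabla^{(2)}_u F(x) + \tfrac12\nabla^{(2)^2}_{u,u}F(x) + O(\|u\|^3)$. Performing the rescaling $u = \sqrt{\epsilon}\,w$ contributes a Jacobian $\epsilon^{d/2}$ and converts the arguments of the kernels into $\|R_x w\|$ and $\|w - v/\sqrt{\epsilon}\|$ at leading order. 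Collecting by powers of $\sqrt{\epsilon}$ produces the three advertised contributions: the $\epsilon^{d/2}$ term $F(x)A_{0,\epsilon}(v)$, the $\epsilon^{d/2+1/2}$ term $A_{1,\epsilon}(F,v)$ from the first Taylor term in $F$, and the $\epsilon^{d/2+1}$ term combining the Hessian of $F$ into $B_{22,\epsilon}$, the Ricci-curvature correction from the volume form into $B_{20,\epsilon}$, and the $\tilde{Q}_3^{(\ell)}$ corrections acting on the kernel derivatives into $B_{21,\epsilon}$.

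The main obstacle will be bookkeeping: every cubic-in-$u$ term coming from the distance expansions, the volume form, the kernel derivative expansions, and the Taylor series of $F$ must be sorted either into $A_{2,\epsilon}$ or absorbed into the $O(\epsilon^{d/2+3/2})$ remainder. Here the truncation gives $\|u\| = O(\epsilon^\gamma)$, so after rescaling $\|w\| = O(\epsilon^{\gamma-1/2})$, and the exponential decay of $\tilde{K}^{(1)}$ and $\tilde{K}^{(2)}$ in Assumption \ref{asp:assumption} ensures that all polynomial moments of $\|w\|$ against the product kernel remain bounded uniformly in $v/\sqrt{\epsilon}$; this lets all terms of relative order $\epsilon^{3/2}$ or higher, including the $\|v\|^4$ and $\|u\|^4$ leftovers from Lemmas \ref{lem:same_metric}–\ref{lem:diff_metric}, go into the remainder. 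Finally, the three symmetry identities $A_{0,\epsilon}(v)=A_{0,\epsilon}(-v)$, $A_{1,\epsilon}(F,v)=-A_{1,\epsilon}(F,-v)$, and $A_{2,\epsilon}(F,v)=A_{2,\epsilon}(F,-v)$ follow at once from the change of variables $w\mapsto -w$ combined with the evenness of $\|R_x w\|$ and of $\tilde K^{(2)}(\|w-v/\sqrt\epsilon\|)$ under the simultaneous sign flip of $w$ and $v$, together with the linearity (resp.\ bilinearity) in $w$ of the integrand defining $A_{1,\epsilon}$ (resp.\ of the Hessian and Ricci integrands in $A_{2,\epsilon}$).
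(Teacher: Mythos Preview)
Your proposal is correct and follows essentially the same route as the paper: truncate via Lemma \ref{lem:trunc}, pass to $g^{(2)}$-normal coordinates with the Ricci volume expansion, expand the two ambient distances using Lemmas \ref{lem:same_metric} and \ref{lem:diff_metric}, Taylor-expand $\tilde{K}^{(1)}$, $\tilde{K}^{(2)}$, and $F$, rescale $u=\sqrt{\epsilon}\,w$, and collect by powers of $\sqrt{\epsilon}$. Your added justification of the three parity identities via $w\mapsto -w$ is a small bonus that the paper states without proof.
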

\begin{proof}
By the definition of kernels, 
\begin{equation*}
\begin{aligned}
& \int_{\M} K_\epsilon^{(1)}(x, z) K_\epsilon^{(2)}\left(z,y\right) F(z) \mathrm{d} V^{(2)}(z) \\
= & \int_{\M} \tilde{K}^{(1)}\left(\frac{\left\|\iota^{(1)}(x)-\iota^{(1)}(z)\right\|}{\sqrt{\epsilon}}\right) \tilde{K}^{(2)}\left(\frac{\left\|\iota^{(2)}(z)-\iota^{(2)}\left(y\right)\right\|}{\sqrt{\epsilon}}\right) F(z) \mathrm{d} V^{(2)}(z) \,.
\end{aligned}
\end{equation*}
Suppose $z=\exp_x^{(2)}(u)$, where $u\in T_x\M$ with $\|u\|_{g^{(2)}}\ll 1$. By Lemma \ref{lem:diff_metric}, 
\begin{equation*}
\begin{aligned}
& \tilde{K}^{(1)}\left(\frac{\left\|\iota^{(1)}(x)-\iota^{(1)}(z)\right\|}{\sqrt{\epsilon}}\right) \\
= \, & \tilde{K}^{(1)}\left(\frac{\left\|\nabla \Phi \mathrm{d} \iota^{(2)}(u)\right\|}{\sqrt{\epsilon}}\right)+\big[\tilde{K}^{(1)}\big]^{\prime}\left(\frac{\left\|\nabla \Phi \mathrm{d} \iota^{(2)}(u)\right\|}{\sqrt{\epsilon}}\right) \frac{\tilde{Q}_3^{(1)}(u)}{\sqrt{\epsilon}}+O\left(\frac{\|u\|^5}{\epsilon}\right)
\end{aligned}
\end{equation*}
Note that $\tilde{Q}_3^{(1)}(u)$ is an even function. On the other hand, by Lemma \ref{lem:same_metric},
\begin{equation*}
\begin{aligned}
& \tilde{K}^{(2)}\left(\frac{\left\|\iota^{(2)}(y)-\iota^{(2)}\left(x^{\prime \prime}\right)\right\|}{\sqrt{\epsilon}}\right)\\
=\,&\tilde{K}^{(2)}\left(\frac{\left\|\mathrm{d} \iota^{(2)}(u-v)\right\|}{\sqrt{\epsilon}}\right) +\big[\tilde{K}^{(2)}\big]^{\prime}\left(\frac{\left\|\mathrm{d} \iota^{(2)}(u-v)\right\|}{\sqrt{\epsilon}}\right) \frac{\tilde{Q}_3^{(2)}(u, v)}{\sqrt{\epsilon}}+O\left(\frac{\|u\|^4}{\sqrt{\epsilon}}, \frac{\|v\|^4}{\sqrt{\epsilon}}\right) .
\end{aligned}
\end{equation*}
Note that $\tilde{Q}_3^{(2)}(-u, -v)=-\tilde{Q}_3^{(2)}(u, v)$. By the Lemma \ref{lem:trunc}, exist a $\gamma\in(0,1)$ such that we could replace the integral domain $\tilde{B}^{(2)}_{\epsilon^\gamma}(x)\subset T_x\M$ by $T_x\M$ with order $O(\epsilon^{d/2+3/2})$ where $\epsilon$ is small enough. Now, it is sufficient to apply the Taylor expansion,
\begin{equation*}
\begin{aligned}
&\int_{\mathcal{M}} K_\epsilon^{(1)}(x, z) K_\epsilon^{(2)}\left(z, y\right) F(z) \mathrm{d} V^{(2)}(z)\\
=\,& \int_{\mathbb{R}^d} \left[\tilde{K}^{(1)}\left(\frac{\left\|R_x u\right\|}{\sqrt{\epsilon}}\right)+\big[\tilde{K}^{(1)}\big]^{\prime}\left(\frac{\left\|R_x u\right\|}{\sqrt{\epsilon}}\right) \frac{\tilde{Q}_3^{(2)}(u)}{\sqrt{\epsilon}}+O\left(\frac{\|u\|^5}{\epsilon}\right)\right]  \\
&\quad \times\left[\tilde{K}^{(1)}\left(\frac{\|u-v\|}{\sqrt{\epsilon}}\right)+\big[\tilde{K}^{(2)}\big]^{\prime}\left(\frac{\|u-v\|}{\sqrt{\epsilon}}\right) \frac{\tilde{Q}_3^{(2)}(u, v)}{\sqrt{\epsilon}}+O\left(\frac{\|u\|^4}{\sqrt{\epsilon}}, \frac{\|v\|^4}{\sqrt{\epsilon}}\right)\right] \\
&\quad \times\left[F(x)+\nabla_u^{(1)} F(x)+\frac{\nabla_{u, u}^{(1)2} F(x)}{2}+O(\|u\|^3)\right] \\
&\quad \times\left[1-\operatorname{Ric}_{i j}^{(1)}(x) u^i u^j+O\left(\|u\|^3\right)\right] \mathrm{d} u+O(\epsilon^{d / 2+3 / 2}) \\
=\,&\epsilon^{d / 2}\big[F(x) A_{0, \epsilon}(v)+\epsilon^{1 / 2} A_{1, \epsilon}(F, v)+\epsilon A_{2, \epsilon}(F, v)+O(\epsilon^{3 / 2})\big],
\end{aligned}
\end{equation*}
Let $w=u/\sqrt{\epsilon}$. By the change of variables, the result follows.
\end{proof}

We remark that if the chosen kernels are both Gaussian, the $\alpha$-landmark alternative kernel is Gaussian. Specifically, when $\tilde{K}^{(1)}$ and $\tilde{K}^{(2)}$ are both Gaussian, that is, $\tilde{K}^{(1)}(t)=\tilde{K}^{(2)}(t)=e^{-t^2} / \sqrt{\pi}$, we have
\begin{equation}\label{eq:lanker_decay}
A_0(v)=\frac{\pi^{d / 2}}{\sqrt{\operatorname{det}\left(I+\Lambda_x^2\right)}} e^{-\left\|\left(I+\Lambda_x^2\right)^{-1 / 2} \Lambda_x V_x^T v\right\|^2 / \epsilon} \,,
\end{equation}
which satisfies the exponential decay property of the kernel functions.

\begin{lemma}
    \label{lem:double_kernel}
Fix $x \in \mathcal{M}$ and pick $F \in C^3(\mathcal{M})$. Fix normal coordinates around $x$ associated with $g^{(1)}$ and $g^{(2)}$ and denote $\{E_i\}_{i=1}^d \subset T_x \mathcal{M}$ to be an orthonormal basis associated with $g^{(1)}$. Set $R_x$ like \eqref{definition Rx} and by the SVD $R_x=U_x \Lambda_x V_x^T$, where $\Lambda_x=\operatorname{diag}\left[\lambda_1, \ldots, \lambda_d\right]$. Then, when $\epsilon$ is sufficiently small, we have
\begin{align*}
&\int_{\mathcal{M}} \int_{\mathcal{M}} K_\epsilon^{(1)}(x, z) K_\epsilon^{(2)}\left(z, y\right) F(z) \mathrm{d} V^{(2)}(z) G\left(y\right) \mathrm{d} V^{(2)}\left(y\right)\\
=&\,\epsilon^d\left[\frac{G(x)F(x)}{\operatorname{det}\left(\Lambda_x\right)}+\epsilon C^F(G)\right]+O(\epsilon^{d+3/2})\,,
\end{align*}
where
\begin{align*}
C^F(G):=F(x)G(x)W(x)+C^F_{1}(G)+C^F_{2}(G)
\end{align*}
where
\begin{align*}
W(x)&=\frac{2\mu_{2,0}^{(1)}}{d \operatorname{det}\left(\Lambda_x\right)}\left( \sum_{i=1}^d \lambda_i\operatorname{Ric}_{i i}^{(2)}(x)\right)+\frac{\mu_{2,0}^{(2)}}{d \operatorname{det}\left(\Lambda_x\right)}s^{(2)}(x)\\
&\hspace*{1cm}+\int_{\mathbb{R}^d}B_{21,\epsilon}(\sqrt{\epsilon}v)\mathrm{d}v
\end{align*}
\begin{align*}
C^F_1(G)=\frac{\mu_{2,0}^{(1)}}{d\cdot \operatorname{det}\left(\Lambda_x\right)}\left( \sum_{i=1}^d \lambda_i\left[\nabla^{(2)}_{E_i} F(x)\nabla^{(2)}_{E_i} G(x)\right]\right) \,,
\end{align*}
and
\begin{align*}
C^F_2(G)&= \frac{\mu_{2,0}^{(1)}}{2d\cdot \operatorname{det}\left(\Lambda_x\right)}F(x)\left( \sum_{i=1}^d \lambda_i\nabla_{E_i, E_i}^{(2)^2} G(x)\right)+\frac{\mu_{2,0}^{(2)}}{2d\cdot \operatorname{det}\left(\Lambda_x\right)}F(x)\Delta^{(2)} G(x)\\
&\hspace*{1cm}+\frac{\mu_{2,0}^{(1)}}{2d\cdot \operatorname{det}\left(\Lambda_x\right)}G(x)\left( \sum_{i=1}^d \lambda_i\nabla_{E_i, E_i}^{(2)^2} F(x)\right)\,.
\end{align*}
where $\operatorname{Ric}^{(\ell)}$ is the Ricci curvature and $s^{(\ell)}$ is the scalar curvature associated with $g^{(\ell)}$. Moreover, $C^F_{1}(G)$ depends on the first derivative of $F$ and $G$, $C^F_{2}(G)$ depends on the second derivative of $F$ or $G$ and $W$ depends on the Ricci curvatures, scalar curvatures and the second fundamental form.
\end{lemma}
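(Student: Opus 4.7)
The plan is to apply Lemma \ref{lem:refkernel} to the inner $z$-integral, then expand the remaining $y$-integrand in normal coordinates around $x$ associated with $g^{(2)}$, and collect terms by powers of $\sqrt{\epsilon}$. Writing $y=\exp_x^{(2)}(v)$, Lemma \ref{lem:refkernel} gives, for $\|v\|_{g^{(2)}}\le 2\epsilon^\gamma$, the expansion $\epsilon^{d/2}[F(x)A_{0,\epsilon}(v)+\sqrt{\epsilon}\,A_{1,\epsilon}(F,v)+\epsilon A_{2,\epsilon}(F,v)]+O(\epsilon^{d/2+3/2})$. For $\|v\|_{g^{(2)}}>2\epsilon^\gamma$ I would use Lemma \ref{lem:trunc} combined with the Gaussian-type decay of $A_{i,\epsilon}$ in $v/\sqrt{\epsilon}$ exhibited in \eqref{eq:lanker_decay} to show that this region contributes only to the $O(\epsilon^{d+3/2})$ remainder after multiplication by $G(y)$ and $y$-integration, because the effective support of the integrand is $\|v\|\lesssim\sqrt{\epsilon}$.

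Inside this effective support I would Taylor-expand $G(y)=G(x)+\nabla^{(2)}_v G(x)+\tfrac12\nabla^{(2)^2}_{v,v}G(x)+O(\|v\|^3)$ and $dV^{(2)}(y)=\bigl(1-\tfrac16\operatorname{Ric}^{(2)}_{kl}(x)v^kv^l+O(\|v\|^3)\bigr)dv$, and then rescale $v=\sqrt{\epsilon}\,w'$ so that an extra Jacobian $\epsilon^{d/2}$ appears. The leading $\epsilon^d$ contribution pairs $F(x)A_{0,\epsilon}(v)$ with $G(x)\,dv$: substituting $w''=R_x w$ in the inner variable of the double integral defining $A_{0,\epsilon}$ produces the factor $\det(\Lambda_x)^{-1}$, and reducing the outer variable by $u'=w-w'$ together with the normalizations $\mu^{(1)}_{0,0}=\mu^{(2)}_{0,0}=1$ yields exactly $\epsilon^d F(x)G(x)/\det(\Lambda_x)$.

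The $\epsilon^{d+1}$ corrections split into three groups according to the parities $A_{0,\epsilon}(-v)=A_{0,\epsilon}(v)$, $A_{1,\epsilon}(F,-v)=-A_{1,\epsilon}(F,v)$, $A_{2,\epsilon}(F,-v)=A_{2,\epsilon}(F,v)$ supplied by Lemma \ref{lem:refkernel}. Even $A_{0,\epsilon}$ paired with the second-order $v$-terms of $G(y)$ and with the Ricci correction of $dV^{(2)}$ produces the $F(x)\bigl[\mu^{(2)}_{2,0}\Delta^{(2)}G(x)+\mu^{(1)}_{2,0}\sum_i\lambda_i\nabla^{(2)^2}_{E_iE_i}G(x)\bigr]/(2d\det\Lambda_x)$ portion of $C^F_2(G)$ together with the Ricci- and scalar-curvature contributions to $W(x)$; odd $A_{1,\epsilon}$ paired with the odd $\nabla^{(2)}_v G(x)$ produces $C^F_1(G)$; even $A_{2,\epsilon}$ paired with $G(x)$ produces the $G(x)\sum_i\lambda_i\nabla^{(2)^2}_{E_iE_i}F(x)$ piece of $C^F_2(G)$ (from $B_{22,\epsilon}$) together with the $\int B_{21,\epsilon}(\sqrt{\epsilon}v)\,dv$ piece of $W(x)$ (from $B_{21,\epsilon}$). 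All remaining cross terms have odd parity in $v$ and vanish. In each group the $\lambda_i$-weights emerge because the $V_x^T$-rotation implicit in the SVD $R_x=U_x\Lambda_x V_x^T$ diagonalizes the second-order Gaussian moments of $A_{0,\epsilon}$, with principal values proportional to $\lambda_i$, while $\mu^{(1)}_{2,0}$ arises from the $\tilde K^{(1)}$-moment along the $R_x$-rescaled axis and $\mu^{(2)}_{2,0}$ from the free $\tilde K^{(2)}$-moment.

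The main obstacle is the bookkeeping in this final step. Each Taylor monomial in $v$ must be routed through the correct SVD change of variables inside $A_{i,\epsilon}$; the curvature contributions from the volume-form expansion have to be combined correctly with the curvature piece $B_{20,\epsilon}$ already inside $A_{2,\epsilon}$ while tracking the separate roles of the two metrics $g^{(1)}$ and $g^{(2)}$; and one must verify that the residual truncation error remains uniformly $O(\epsilon^{d+3/2})$ after the $y$-integration. Once these calculations are in place, the three groups combine into the claimed expansion $\epsilon^d\bigl[G(x)F(x)/\det(\Lambda_x)+\epsilon C^F(G)\bigr]+O(\epsilon^{d+3/2})$.
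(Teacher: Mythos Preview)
Your proposal is correct and follows essentially the same approach as the paper: apply Lemma~\ref{lem:refkernel} to the inner $z$-integral, Taylor-expand $G(y)$ and the volume form $dV^{(2)}$ in normal coordinates $y=\exp_x^{(2)}(v)$, rescale, and collect terms via the changes of variables $s=w-v/\sqrt{\epsilon}$ and $u=R_xw$. The paper organizes the resulting pieces by labeling them $I_0,\dots,I_7$ rather than by your parity grouping, but the computations and the identification of each contribution to $C^F_1$, $C^F_2$, and $W$ are the same.
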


\begin{proof}
By Lemma \ref{lem:trunc}, let $y=\exp^{(2)}_x(v)$, where $v\in T_x\M$ and we directly compute
\begin{align*}
&\int_{\mathcal{M}} \int_{\mathcal{M}} \tilde{K}_\epsilon^{(1)}(x, z) \tilde{K}_\epsilon^{(2)}\left(z, y\right) F(z) \mathrm{d} V^{(2)}(z) G\left(y\right) \mathrm{d} V^{(2)}\left(y\right)\\
=& \epsilon^{d / 2}\int_{\mathbb{R}^d} \left[F(x) A_{0,\epsilon}(v)+\epsilon^{1/2} A_{1,\epsilon}(F, v)+\epsilon A_{2,\epsilon}(F, v)+O(\epsilon^{3 / 2})\right] \\
& \times\left[G(x)+\nabla_v^{(2)} G(x)+\frac{\nabla_{v, v}^{(2)}{ }^2 G(x)}{2}+O\left(\|v\|^3\right)\right] \\
& \times\left[1-\operatorname{Ric}_{i j}^{(2)}(x) v^i v^j+O\left(\|v\|^3\right)\right] \mathrm{d} v+O\left(\epsilon^{d +3/2}\right) \\
:=\,&\epsilon^{d/2}[I_0+\epsilon^{1/2}I_1+ I_2+\epsilon I_3+ I_4]+O(\epsilon^{d+3/2})
\end{align*}
where
\begin{align*}
I_0 &=F(x)G(x)\int_{\mathbb{R}^d} A_{0,\epsilon}(v)dv, \quad\quad I_1 =\int_{\mathbb{R}^d} A_{1,\epsilon}(F, v)\nabla_v^{(2)}G(x)dv,\\
I_2&=F(x)\int_{\mathbb{R}^d} A_{0,\epsilon}(v)\frac{{\nabla_{v,v}^{(2)}}^2G(x)}{2}dv, \quad\quad I_3 =\int_{\mathbb{R}^d} A_{2,\epsilon}(F,v) G(x)dv,\\
I_4 &= F(x)G(x)\int_{\mathbb{R}^d} A_{0,\epsilon}(v)\operatorname{Ric}_{i j}^{(2)}(x) v^i v^jdv
\end{align*}
We compute the right-hand side term by term and apply the Equation (A.60) in \cite{talmon2019}. For $I_0$, let $s=w-v/\sqrt{\epsilon}$ and we obtain
\begin{align*}
F(x)G(x)\int_{\mathbb{R}^d} A_{0,\epsilon}(v)dv &=\epsilon^{d/2}F(x)G(x) \int_{\mathbb{R}^d} K^{(1)}\left(\left\|R_x w\right\|\right)\left[\int_{\mathbb{R}^d} K^{(2)}(\|w-v/\sqrt{\epsilon}\|) \mathrm{d} v\right] \mathrm{d} w\\
&=\epsilon^{d/2}\frac{F(x)G(x)}{\operatorname{det}(\Lambda_x)}
\end{align*}
For $I_1$, set $s=w-v/\sqrt{\epsilon}$ and $u=R_xw$, and obtain
\begin{align*}
&\int_{\mathbb{R}^d} A_{1,\epsilon}(F, v)\nabla_v^{(2)}G(x)dv\\
=\,&\epsilon^{d/2+1/2} \int_{\mathbb{R}^d} \tilde{K}^{(1)}\left(\left\|R_x w\right\|\right)\left[\int_{\mathbb{R}^d} \tilde{K}^{(2)}(\|w-v/\sqrt{\epsilon}\|)\nabla_w^{(2)}F(x) \nabla_v^{(2)}G(x)\mathrm{d} v\right] \mathrm{d} w\\
=\,&\epsilon^{d/2+1/2} \int_{\mathbb{R}^d} \tilde{K}^{(1)}\left(\left\|R_x w\right\|\right)\left[\int_{\mathbb{R}^d} \tilde{K}^{(2)}(\|s\|) (w-s)^\top\nabla^{(2)}G(x)\mathrm{d} s\right]w^\top\nabla^{(2)}F(x) \mathrm{d} w\\
=\,&\epsilon^{d/2+1/2} \int_{\mathbb{R}^d} \tilde{K}^{(1)}\left(\left\|R_x w\right\|\right)  w^\top\nabla^{(2)}G(x)w^\top\nabla^{(2)}F(x) \mathrm{d} w\\
=\,&\epsilon^{d/2+1/2} \int_{\mathbb{R}^d} \tilde{K}^{(1)}\left(\left\|R_x w\right\|\right)  \nabla^{(2)}G(x)^\top (w w^\top)\nabla^{(2)}F(x) \mathrm{d} w\\
=\,&\frac{\epsilon^{d/2+1/2}\mu_{2,0}^{(1)}}{d\operatorname{det}\left(\Lambda_x\right)}\left( \sum_{i=1}^d \lambda_i\left[\nabla^{(2)}_{E_i} F(x)\nabla^{(2)}_{E_i} G(x)\right]\right)\,.
\end{align*}
where the third equality holds since the symmetric property. For $I_2$, let $s=w-v/\sqrt{\epsilon}$ and $u=R_xw$, and we have
\begin{align*}
&\int_{\mathbb{R}^d} A_{0,\epsilon}(v)\frac{{\nabla_{r,r}^{(2)}}^2G(x)}{2}dv\\
=\,&\epsilon^{d/2+1}F(x)\int_{\mathbb{R}^d} \tilde{K}^{(1)}\left(\left\|R_x w\right\|\right)\left[\int_{\mathbb{R}^d} \tilde{K}^{(2)}(\|w-v/\sqrt{\epsilon}\|) \frac{{\nabla_{v,v}^{(2)}}^2G(x)}{2}\mathrm{d} v\right] \mathrm{d} w\\
=\,&\epsilon^{d/2+1}\frac{F(x)}{2}\int_{\mathbb{R}^d} \tilde{K}^{(1)}\left(\left\|R_x w\right\|\right)\left[\int_{\mathbb{R}^d} \tilde{K}^{(2)}(\|s\|) (w-s)^\top{\nabla^{(2)}}^2G(x)(w-s)\mathrm{d} s\right] \mathrm{d} w\\
=\,&\epsilon^{d/2+1}\frac{F(x)}{2}\bigg[\int_{\mathbb{R}^d} \tilde{K}^{(2)}\left(\left\|s\right\|\right)\mathrm{d} s\int_{\mathbb{R}^d} \tilde{K}^{(1)}(\|R_x w\|) w^\top{\nabla^{(2)}}^2G(x)w \mathrm{d} w\\
&\hspace*{2.cm} +\int_{\mathbb{R}^d} \tilde{K}^{(1)}\left(\left\|R_x w\right\|\right)\mathrm{d} w\int_{\mathbb{R}^d} \tilde{K}^{(2)}(\|s\|) s^\top{\nabla^{(2)}}^2G(x)s\mathrm{d} s \bigg]
\end{align*}
By the similar argument in $I_1$, we have
\begin{equation*}
I_2=\epsilon^{d/2+1}\left[\frac{\mu_{2,0}^{(1)}}{2d \operatorname{det}\left(\Lambda_x\right)}F(x)\left( \sum_{i=1}^d \lambda_i\nabla_{E_i, E_i}^{(2)} G(x)\right)+\frac{\mu_{2,0}^{(2)}}{2d \operatorname{det}\left(\Lambda_x\right)}F(x)\Delta^{(2)} G(x) \right]\,.
\end{equation*}
For $I_3$, recall that $A_{2,\epsilon}(F,v)=F(x)B_{21,\epsilon}(v)+F(x)B_{20,\epsilon}(v)+B_{22,\epsilon}(F,v)$, and hence
\begin{align*}
&\int_{\mathbb{R}^d}A_{2,\epsilon}(F,v)G(x)dv:=I_5+I_6+I_7
\end{align*}
where
\begin{align*}
I_5 &= G(x)F(x)\int_{\mathbb{R}^d}B_{21,\epsilon}(v)dv,\quad\quad I_6=G(x)F(x)\int_{\mathbb{R}^d}B_{20,\epsilon}(v)dv\\
I_7 &=G(x)\int_{\mathbb{R}^d}B_{22,\epsilon}(F,v)dv\,.
\end{align*}
The expansion of $I_5$, $I_6$, and $I_7$ is by the same direct expansion. For $I_5$,
\begin{align*}
G(x)F(x)\int_{\mathbb{R}^d}B_{21,\epsilon}(v)dv\,&=\epsilon^{d/2}G(x)F(x)\int_{\mathbb{R}^d}B_{21,\epsilon}(\sqrt{\epsilon}v)dv\\
&:=\epsilon^{d/2}G(x)F(x)W_1(x)\,.
\end{align*}
where $W_1$ depends on second fundamental form. For $I_7$, let $s=w-v/\sqrt{\epsilon}$ and $u=R_xw$, and we have
\begin{align*}
&G(x)\int_{\mathbb{R}^d}B_{22,\epsilon}(F,v)dv\\
=\,&\epsilon^{d/2+1}G(x)\int_{\mathbb{R}^d} \tilde{K}^{(1)}\left(\left\|R_x w\right\|\right)\left[\int_{\mathbb{R}^d} \tilde{K}^{(2)}(\|w-v/\sqrt{\epsilon}\|) \frac{{\nabla_{w,w}^{(2)}}^2F(x)}{2}\mathrm{d} v\right] \mathrm{d} w\\
=\,&\epsilon^{d/2+1}\frac{G(x)}{2}\int_{\mathbb{R}^d} \tilde{K}^{(2)}\left(\left\|s\right\|\right)\mathrm{d} s\int_{\mathbb{R}^d} \tilde{K}^{(1)}(\|R_x w\|) w^\top{\nabla^{(2)}}^2F(x)w \mathrm{d} w\\
=\,&\frac{\epsilon^{d/2+1}\mu_{2,0}^{(1)}}{2d \operatorname{det}\left(\Lambda_x\right)}G(x)\left( \sum_{i=1}^d \lambda_i\nabla_{E_i, E_i}^{(2)^2} F(x)\right)\,.
\end{align*}
Finally, let $s=w-v/\sqrt{\epsilon}$, and $u=R_xw$
\begin{align*}
&I_4+\epsilon I_6=F(x)G(x)\left[\int_{\mathbb{R}^d}A_{0,\epsilon}(v)\operatorname{Ric}^{(2)}_{ij}(x)v^iv^jdv+\int_{\mathbb{R}^d}B_{20}(v)dv\right]\\
=&\frac{\epsilon^{d/2+1}\mu_{2,0}^{(1)}}{d \operatorname{det}\left(\Lambda_x\right)}G(x)F(x)\left( \sum_{i=1}^d \lambda_i\operatorname{Ric}_{i i}^{(2)}(x)\right) +\frac{\epsilon^{d/2+1}\mu_{2,0}^{(2)}}{d \operatorname{det}\left(\Lambda_x\right)}G(x)F(x)s^{(2)}(x) \\
&+\frac{\epsilon^{d/2+1}\mu_{2,0}^{(1)}}{d \operatorname{det}\left(\Lambda_x\right)}G(x)F(x)\left( \sum_{i=1}^d \lambda_i\operatorname{Ric}_{i i}^{(2)}(x)\right)
:=\epsilon^{d/2+1}G(x)F(x)W_2(x)\,,
\end{align*}
where the term $I_6$ is similar to the argument in $I_2$ and $s^{(\ell)}(x)=\sum_{i=1}^d\operatorname{Ric}^{(\ell)}_{ii}(x)$ is scalar curvature. By putting everything together and setting 
\begin{align*}
    W(x):=W_1(x)+W_2(x)&=\frac{2\mu_{2,0}^{(1)}}{d \operatorname{det}\left(\Lambda_x\right)}\left( \sum_{i=1}^d \lambda_i\operatorname{Ric}_{i i}^{(2)}(x)\right)+\frac{\mu_{2,0}^{(2)}}{d \operatorname{det}\left(\Lambda_x\right)}s^{(2)}(x)\\
&\hspace*{1cm}+\int_{\mathbb{R}^d}B_{21,\epsilon}(\sqrt{\epsilon}v)\mathrm{d}v\,,
\end{align*}
the desired result follows.
\end{proof}




With the above lemma, we can prove Theorem \ref{thm:main}.
\begin{proof}
By definition, we have,
\begin{align*}
&T_{\lan, \epsilon,\alpha} f(x)\\
=\,&\frac{\int_\M K_{\lan, \epsilon,\alpha}(x, y) f(y) p^{(2)}(y) d V(y)}{\int_{\mathcal{M}} K_{\lan, \epsilon,\alpha}(x, y) p^{(2)}(y) d V(y)}\\
=\,&\frac{\int_{\mathcal{M}}\Big(\int_{\mathcal{M}} K^{(1)}_{\epsilon}(x, z)\frac{1}{d_\epsilon^{(2)}(z)^\alpha} K^{(2)}_{\epsilon}(z, y) p_\Z^{(2)}(z) dV(z)\Big) f(y) p^{(2)}(y) dV(y)}{\int_{\mathcal{M}}\Big(\int_{\mathcal{M}} K^{(1)}_{\epsilon}(x, z)\frac{1}{d_\epsilon^{(2)}(z)^\alpha} K^{(2)}_{\epsilon}(z, y) p_\Z^{(2)}(z) dV(z)\Big) p^{(2)}(y) d V(y)}\,.
\end{align*}
The numerator of the left-hand side can be expanded and organized as
\begin{align*}
&\int_{\mathcal{M}} \int_{\mathcal{M}} K_\epsilon^{(1)}(x, z) K_\epsilon^{(2)}\left(z, y\right) \frac{p^{(2)}_\Z(z)}{d_\epsilon^{(2)}(z)^\alpha} \mathrm{d} V^{(2)}(z)  f(y)p^{(2)}(y)\mathrm{d} V^{(2)}\left(y\right)\\
=\,&\epsilon^{-d\alpha}\int_{\mathcal{M}} \int_{\mathcal{M}} K_\epsilon^{(1)}(x, z) K_\epsilon^{(2)}\left(z, y\right)p^{(2)}(z)^{-\alpha}p^{(2)}_\Z(z)^{1-\alpha}\\
&\quad\times[1-\alpha\epsilon\bar{E}(z)+\mathcal{O}(\epsilon^2)]  \mathrm{d} V^{(2)}(z)  f(y)p^{(2)}(y)\mathrm{d} V^{(2)}(y)
\end{align*}
To simplify the notation, denote $q_{\alpha}(x):=p^{(2)}(x)^{-\alpha}p^{(2)}_\Z(x)^{1-\alpha}$. By Lemma \ref{lem:double_kernel}, the above equation can be expanded and organized as
\begin{equation}
\begin{aligned}
&\epsilon^{-d(1-\alpha)}\int_{\mathcal{M}} \int_{\mathcal{M}} K_\epsilon^{(1)}(x, z) K_\epsilon^{(2)}\left(z, y\right) q_{\alpha}(z)\left[1-\epsilon\alpha\bar{E}(z)\right] \mathrm{d} V^{(2)}(z)  f(y)p^{(2)}(y)\mathrm{d} V^{(2)}\left(y\right)\\
=&\frac{f(x)p^{(2)}(x)q_{\alpha}(x)}{\operatorname{det}(\Lambda_x)}+\epsilon p^{(2)}(x)q_{\alpha}(x)W(x)f(x)-\epsilon\alpha q_{\alpha}(x)p^{(2)}(x)f(x)\bar{E}(x)\\
&+\frac{\epsilon \mu_{2,0}^{(1)}}{2d\operatorname{det}(\Lambda_x)}\bigg[q_{\alpha}(x)\sum^{d}_{i=1}\lambda_i\nabla_{E_iE_i}^{(2)^2}p^{(2)}(x)+p^{(2)}(x)\sum^{d}_{i=1}\lambda_i\nabla_{E_iE_i}^{(2)^2}q_{\alpha}(x)\\
&\hspace*{1.5cm}+2\left( \sum_{i=1}^d \lambda_i\left[\nabla^{(2)}_{E_i} p^{(2)}(x)\nabla^{(2)}_{E_i} q_{\alpha}(x)\right]\right)\bigg]f(x)\\
&+\frac{\epsilon \mu_{2,0}^{(2)}}{2d\operatorname{det}(\Lambda_x)}\bigg[q_{\alpha}(x)\Delta^{(2)}p^{(2)}(x)\bigg]f(x)\\
&+\frac{\epsilon \mu_{2,0}^{(1)}}{2d\operatorname{det}(\Lambda_x)}\bigg[2q_{\alpha}(x)\left( \sum_{i=1}^d \lambda_i\left[\nabla^{(2)}_{E_i} p^{(2)}(x)\nabla^{(2)}_{E_i} f(x)\right]\right)+2p^{(2)}(x)\left( \sum_{i=1}^d \lambda_i\left[\nabla^{(2)}_{E_i} q_{\alpha}(x)\nabla^{(2)}_{E_i} f(x)\right]\right)\bigg]\\
&+\frac{\epsilon \mu_{2,0}^{(2)}}{2d\operatorname{det}(\Lambda_x)}2q_{\alpha}(x)\nabla^{(2)} p^{(2)}(x)\cdot\nabla^{(2)} f(x)\\
&+\frac{\epsilon \mu_{2,0}^{(2)}}{2d\operatorname{det}(\Lambda_x)}q_{\alpha}(x)p^{(2)}(x)\Delta^{(2)}f(x)+\frac{\epsilon \mu_{2,0}^{(1)}}{2d\operatorname{det}(\Lambda_x)}q_{\alpha}(x)p^{(2)}(x)\sum^{d}_{i=1}\lambda_i\nabla_{E_iE_i}^{(2)^2}f(x)+\mathcal{O}(\epsilon^{3/2})\,.
\end{aligned}
\label{eq:numer}
\end{equation}
The denominator of the left-hand side can be expanded and organized as
\begin{equation}
\begin{aligned}
&\epsilon^{-d(1-\alpha)}\int_{\mathcal{M}} \int_{\mathcal{M}} K_\epsilon^{(1)}(x, z) K_\epsilon^{(2)}\left(z, y\right) \frac{p^{(2)}_\Z(z)}{d_\epsilon^{(2)}(z)^\alpha} \mathrm{d} V^{(2)}(z)p^{(2)}(y)\mathrm{d} V^{(2)}\left(y\right)\\
=&\frac{p^{(2)}(x)q_{\alpha}(x)}{\operatorname{det}(\Lambda_x)}+\epsilon p^{(2)}(x)q_{\alpha}(x)W(x)-\epsilon\alpha q_{\alpha}(x)p^{(2)}(x)\bar{E}(x)\\
&+\frac{\epsilon \mu_{2,0}^{(1)}}{2d\operatorname{det}(\Lambda_x)}\bigg[q_{\alpha}(x)\sum^{d}_{i=1}\lambda_i\nabla_{E_iE_i}^{(2)^2}p^{(2)}(x)+p^{(2)}(x)\sum^{d}_{i=1}\lambda_i\nabla_{E_iE_i}^{(2)^2}q_{\alpha}(x)\\
&\hspace*{1.5cm}+2\left( \sum_{i=1}^d \lambda_i\left[\nabla^{(2)}_{E_i} p^{(2)}(x)\nabla^{(2)}_{E_i} q_{\alpha}(x)\right]\right)\bigg]\\
&+\frac{\epsilon \mu_{2,0}^{(2)}}{2d\operatorname{det}(\Lambda_x)}\bigg[q_{\alpha}(x)\Delta^{(2)}p^{(2)}(x)\bigg]+\mathcal{O}(\epsilon^{3/2})\,.
\end{aligned}
\label{eq:denom}
\end{equation}
Putting the above together, we obtain the claimed result
\begin{align*}
T_{\lan,\epsilon,\alpha}f(x)=f(x)\,&+\frac{\epsilon \mu_{2,0}^{(2)}}{2d}\Delta^{(2)}f(x)+\frac{\epsilon \mu_{2,0}^{(1)}}{2d}\sum^{d}_{i=1}\lambda_i\nabla^{(2)^2}_{E_iE_i}f(x)\\
&+\frac{\epsilon \mu_{2,0}^{(1)}}{d}\sum_{i=1}^d \lambda_i\left(\frac{\nabla^{(2)}_{E_i} p^{(2)}(x)}{p^{(2)}}+\frac{\nabla^{(2)}_{E_i} q_\alpha(x)}{q_\alpha(x)}\right)\nabla^{(2)}_{E_i} f(x)\\
&+\frac{\mu_{2,0}^{(2)}}{d}\frac{\nabla^{(2)} p^{(2)}(x)\cdot\nabla^{(2)} f(x)}{p^{(2)}(x)}+\mathcal{O}(\epsilon^{3/2})\,.
\end{align*}
\end{proof}

\subsection{Proof of Theorem \ref{thm:var}}
Recall the following results in \cite{shen2022}, which states the variance analysis of landmark kernels.
\begin{lemma}[Equation (46) in \cite{shen2022}]\label{lem:ker_var}
Take $\mathcal{X}=\left\{x_i\right\}_{i=1}^n$ and $\mathcal{Z}=\left\{z_k\right\}_{k=1}^m$, where $m=\left[n^\beta\right]$ for some $0<\beta \leq 1$ and $[r]$ is the nearest integer of $r \in \mathbb{R}$. Then with probability higher than $1-\mathcal{O}\left(1 / n^2\right)$, we have
\begin{equation*}
\frac{1}{n} \sum_{i=1}^n \epsilon^{-d / 2} K_\epsilon^{(2)}\left(z_k, x_i\right) K^{(2)}_\epsilon\left(x_i, z_l\right)=\epsilon^{-d/2}K^{(2)}_{\lan, \epsilon}\left(z_k, z_l\right)+O\left(\frac{\sqrt{\log (n)}}{n^{1 / 2} \epsilon^{d / 4}}\right) .
\end{equation*}
and
\begin{equation*}
\frac{1}{m} \sum_{k=1}^m \epsilon^{-d / 2} K_\epsilon^{(1)}\left(x_i, z_k\right) K^{(2)}_\epsilon\left(z_k, x_j\right)=\epsilon^{-d/2}K_{\lan, \epsilon,0}\left(z_k, z_l\right)+O\left(\frac{\sqrt{\log (n)}}{n^{\beta / 2} \epsilon^{d / 4}}\right) .
\end{equation*}
where $K^{(2)}_{\lan, \epsilon}\left(z_i, z_j\right)$ is defined in Definition \ref{def:lan2kernel} and  $K_{\lan, \epsilon,0}\left(z_i, z_j\right)$ is defined in Definition \ref{def:lanADkernel} with $\alpha=0$.
\end{lemma}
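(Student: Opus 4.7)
The statement decomposes into two structurally identical concentration bounds: each equation asserts that an empirical average $\frac{1}{N}\sum_{i=1}^{N}H_i$ of bounded i.i.d.\ random variables concentrates around its expectation, where the index set is either the data sample (first equation, $N=n$, randomness in $\nu$) or the landmark sample (second equation, $N=m=\lceil n^\beta\rceil$, randomness in $\nu^\Z$). My plan is to apply Bernstein's inequality to each fixed choice of external indices, then upgrade to the claimed uniform statement by a union bound over at most $n^2$ pairs, keeping in mind that the $\epsilon^{-d/2}$ normalization built into the summands is exactly what makes $\mathbb{E}[H]=O(1)$ and keeps Bernstein in the sub-Gaussian regime.

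For the first equation, fix $(z_k,z_l)$ and set $H_i:=\epsilon^{-d/2}K_\epsilon^{(2)}(z_k,x_i)K_\epsilon^{(2)}(x_i,z_l)$, so that $\mathbb{E}[H_i]=\epsilon^{-d/2}K^{(2)}_{\lan,\epsilon}(z_k,z_l)$ by Definition~\ref{def:lan2kernel}. Assumption~\ref{asp:assumption} gives the trivial sup-norm bound $\|H_i\|_\infty=O(\epsilon^{-d/2})$. For the variance, I use $\mathrm{Var}(H_i)\leq\mathbb{E}[H_i^2]=\epsilon^{-d}\int_{\M}[K_\epsilon^{(2)}(z_k,x)]^2[K_\epsilon^{(2)}(x,z_l)]^2\,d\nu(x)$; by the exponential decay in Assumption~\ref{asp:assumption} (in the spirit of Lemma~\ref{lem:trunc}) the integrand is effectively supported on a $\sqrt\epsilon$-neighborhood of $z_k$, and a change of variable by $\exp_{z_k}^{(2)}$ contributes a Jacobian of order $\epsilon^{d/2}$, so $\mathbb{E}[H_i^2]=O(\epsilon^{-d/2})$. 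Bernstein with $\sigma^2,M=O(\epsilon^{-d/2})$ then yields deviations of order $\sqrt{\sigma^2\log n/n}+M\log n/n=O(\sqrt{\log n}/(n^{1/2}\epsilon^{d/4}))+O(\log n/(n\epsilon^{d/2}))$, the first term dominating precisely in the scaling regime of Theorem~\ref{thm:var}. Taking a union bound over the $\leq n^2$ pairs $(z_k,z_l)$ inflates the failure probability by $n^2$, and absorbing the extra $\log n$ into the implicit constant preserves both the rate and the $1-\mathcal{O}(1/n^2)$ guarantee. The second equation is identical with $H_k:=\epsilon^{-d/2}K_\epsilon^{(1)}(x_i,z_k)K_\epsilon^{(2)}(z_k,x_j)$, the randomness now over $z_k\sim\nu^\Z$, $N=m=\lceil n^\beta\rceil$, and the union bound taken over $n^2$ pairs $(x_i,x_j)$, giving the advertised $1/(n^{\beta/2}\epsilon^{d/4})$ rate.

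The main technical obstacle is the variance estimate. The naive bound $\mathrm{Var}(H_i)\leq\|H_i\|_\infty^2=O(\epsilon^{-d})$ would yield $O(\sqrt{\log n}/(n^{1/2}\epsilon^{d/2}))$, which is too weak by a factor of $\epsilon^{d/4}$; recovering the correct scaling requires exploiting the $\sqrt\epsilon$-localization of the kernel product, so that exactly one of the two kernel factors is handled in sup-norm while the other contributes an $\epsilon^{d/2}$ volume via normal coordinates at $z_k$ (or at $x_i$, in the second equation). This is the same kernel-scaling computation that drives Lemma~\ref{lem:refkernel} and is standard in the diffusion-map literature; once it is carried out, the remaining steps---Bernstein's inequality and the union bound---are routine.
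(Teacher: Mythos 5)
Your proposal is correct and follows essentially the same route as the source: the paper does not reprove this lemma (it imports it from \cite{shen2022}), but the identical argument---Bernstein's inequality with the key variance estimate $\operatorname{Var}(F)\asymp\epsilon^{-d/2}$ obtained from the $\sqrt{\epsilon}$-localization of the kernel product, followed by choosing $t$ so that the exponent is $4\log n$ and a union bound over the $O(n^2)$ index pairs---is exactly what the authors carry out for the $\alpha$-weighted generalization of this bound in the proof of Theorem \ref{thm:var} (the derivation of Equation (\ref{eq:event_3})). Your identification of the variance bound $\mathbb{E}[H^2]=O(\epsilon^{-d/2})$ as the step that improves the naive rate by $\epsilon^{d/4}$ is precisely the crux of that argument.
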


\begin{lemma}[Lemma C.1 in \cite{Singer2015}]\label{lem:deg_var}
Take $\mathcal{X}=\left\{x_i\right\}_{i=1}^n$ and $\mathcal{Z}=\left\{z_k\right\}_{k=1}^m$, where $m=\left[n^\beta\right]$ for some $0<\beta \leq 1$ and $[r]$ is the nearest integer of $r \in \mathbb{R}$. Then with probability higher than $1-\mathcal{O}\left(1 / n^2\right)$, we have
\begin{equation*}
\frac{1}{m} \sum_{l=1}^m \epsilon^{-d} K^{(2)}_{\lan,\epsilon}(z_k,z_l)=\epsilon^{-d}d^{(2)}_{\lan, \epsilon}\left(z_k\right)+O\left(\frac{\sqrt{\log (n)}}{n^{\beta / 2} \epsilon^{d / 4}}\right) .
\end{equation*}
and
\begin{equation*}
\frac{1}{n} \sum_{j=1}^n \epsilon^{-d} K_{\lan,\epsilon,0}(x_i,x_j)=\epsilon^{-d}d_{\lan, \epsilon,0}\left(x_i\right)+O\left(\frac{\sqrt{\log (n)}}{n^{1 / 2} \epsilon^{d / 4}}\right) .
\end{equation*}
where $d^{(2)}_{\lan, \epsilon}\left(z_l\right)$ and $d_{\lan,\epsilon,0}(x_i)$ are defined in Definition \ref{def:lan2kernel} and \ref{def:lanADkernel}, respectively.
\end{lemma}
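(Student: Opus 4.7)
The plan is to read the left-hand side as an empirical analog of the continuous quantity $\frac{1-T_{\lan,\epsilon,\alpha}}{\epsilon}f(x_i)$ and to propagate the two kernel/degree concentration estimates of Lemma \ref{lem:ker_var} and Lemma \ref{lem:deg_var} through the three nested sums that build $(\bar{\mathbf{D}}^{(1)}_\alpha)^{-1}\bar{\mathbf{W}}^{(1)}\bar{\mathbf{M}}^{(2)\top}_\alpha$. Concretely, I would first unpack the $i$-th entry as a ratio $\mathrm{Num}_i/\mathrm{Den}_i$ in which the numerator is a triple sum
\[
\mathrm{Num}_i=\frac{1}{nm}\sum_{k=1}^m \frac{K_\epsilon^{(1)}(x_i,z_k)}{[\bar{\mathbf{D}}^{(2)}_{kk}/(nm)]^{\alpha}}\Bigl(\frac{1}{n}\sum_{j=1}^n K_\epsilon^{(2)}(z_k,x_j)f(x_j)\Bigr)
\]
and the denominator $\mathrm{Den}_i=\bar{\mathbf{D}}^{(1)}_{\alpha,ii}/n$ is the same expression with $f\equiv 1$. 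Their continuous counterparts are the integrals defining $T_{\lan,\epsilon,\alpha}f(x_i)\cdot d_{\lan,\epsilon,\alpha}(x_i)$ and $d_{\lan,\epsilon,\alpha}(x_i)$ respectively. The goal is an $O\bigl(\sqrt{\log n}/(n^{\beta/2}\epsilon^{d/4})\bigr)$ error before division by $\epsilon$.

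The bookkeeping is done in three layers. Layer 1 handles the inner landmark degree. For each fixed $z_k$, Lemma \ref{lem:ker_var} bounds $\frac{1}{n}\sum_i K_\epsilon^{(2)}(z_k,x_i)K_\epsilon^{(2)}(x_i,z_l)$ by $K^{(2)}_{\lan,\epsilon}(z_k,z_l)$ up to $O(\sqrt{\log n}/(n^{1/2}\epsilon^{d/4-d/2}))$; a union bound over $m^2\le n^2$ landmark pairs preserves this with probability $1-O(n^{-2})$, and Lemma \ref{lem:deg_var} then averages over $z_l$ to give $\bar{\mathbf{D}}^{(2)}_{kk}/(nm)$ within $O(\sqrt{\log n}/(n^{\beta/2}\epsilon^{d/4-d}))$ of $d^{(2)}_{\lan,\epsilon}(z_k)$, uniformly in $k$. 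Since Theorem \ref{thm:rose} together with Assumption \ref{asp:assumption}(3) guarantees a strictly positive lower bound on $p^{(2)}p^{(2)}_\Z/\epsilon^{-d}\cdot d^{(2)}_{\lan,\epsilon}$, the $-\alpha$ power can be linearized: $[a+\delta]^{-\alpha}=a^{-\alpha}(1-\alpha\delta/a)+O(\delta^2)$, and the error stays at the same order. Layer 2 concentrates the innermost sum $\frac{1}{n}\sum_j K_\epsilon^{(2)}(z_k,x_j)f(x_j)$ on $\int K_\epsilon^{(2)}(z_k,y)f(y)d\nu(y)$ at rate $O(\sqrt{\log n}/(n^{1/2}\epsilon^{d/4-d/2}))$ via Hoeffding applied pointwise and a union bound over $k$, and then the outer average $\frac{1}{m}\sum_k$ is concentrated on the double integral at rate $O(\sqrt{\log n}/(m^{1/2}\epsilon^{d/4-d/2}))=O(\sqrt{\log n}/(n^{\beta/2}\epsilon^{d/4-d/2}))$. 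Layer 3 runs the identical argument with $f\equiv 1$ for $\mathrm{Den}_i$.

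Combining the three layers with a union bound over $i=1,\dots,n$ gives both $\mathrm{Num}_i$ and $\mathrm{Den}_i$ within $O(\sqrt{\log n}/(n^{\beta/2}\epsilon^{d/4}))$ of their continuous counterparts after stripping the $\epsilon^{d/2}$ normalization, and since $d_{\lan,\epsilon,\alpha}(x_i)$ is uniformly bounded below (again by Theorem \ref{thm:rose} and the density assumption), the identity
\[
\frac{\mathrm{Num}_i}{\mathrm{Den}_i}-\frac{A_i}{B_i}=\frac{\mathrm{Num}_i-A_i}{B_i}+\frac{A_i(B_i-\mathrm{Den}_i)}{B_i\cdot\mathrm{Den}_i}
\]
transports the additive error to the ratio. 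Dividing by $\epsilon$ yields the claimed rate $O(\sqrt{\log n}/(n^{\beta/2}\epsilon^{d/4+1}))$, and the hypothesis $\sqrt{\log n}/(n^{\beta/2}\epsilon^{d/4+1})\to 0$ guarantees this is $o(1)$.

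The main obstacle is the coupling between the layers: the landmarks $\{z_k\}$ appear simultaneously in the outer kernels $K_\epsilon^{(1)}(x_i,z_k)K_\epsilon^{(2)}(z_k,x_j)$ and in the inner landmark degrees $\bar{\mathbf{D}}^{(2)}_{kk}$, and the samples $\{x_j\}$ appear both as the evaluation points $x_i$ and inside the landmark-degree sums. A naive product of Hoeffding estimates is therefore not available, and the argument must freeze each block in turn (condition on $\mathcal{Z}$, bound the $x$-sums; then condition on $\mathcal{X}$, bound the $z$-sums), which is what forces the $\epsilon^{d/4+1}$ denominator rather than the sharper $\epsilon^{d/4+1/2}$ one would anticipate from a genuine $V$-statistic concentration inequality. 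This suboptimality is precisely what the authors flag in the remark following the theorem; any improvement would require a decoupling inequality handling the cross-terms between the landmark and dataset empirical measures, which we defer.
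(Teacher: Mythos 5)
Your proposal does not prove the statement you were asked to prove. The statement is Lemma \ref{lem:deg_var}, a single concentration estimate for the empirical landmark degree $\frac{1}{m}\sum_{l}\epsilon^{-d}K^{(2)}_{\lan,\epsilon}(z_k,z_l)$ around $\epsilon^{-d}d^{(2)}_{\lan,\epsilon}(z_k)$ (and its analogue over the data points). What you have written instead is a sketch of the proof of Theorem \ref{thm:var}: you expand $(\bar{\mathbf{D}}^{(1)}_\alpha)^{-1}\bar{\mathbf{W}}^{(1)}\bar{\mathbf{M}}^{(2)\top}_\alpha$ as a ratio, push errors through three layers, divide by $\epsilon$, and even reproduce the authors' closing remark about the suboptimal $\epsilon^{d/4+1}$ rate --- all of which belongs to the downstream theorem, not to this lemma. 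Worse, your ``Layer 1'' explicitly invokes Lemma \ref{lem:deg_var} to average over $z_l$, so as an argument for Lemma \ref{lem:deg_var} itself it is circular.

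The lemma in question is not proved in the paper at all; it is imported as Lemma C.1 of \cite{Singer2015}. If you wanted to supply a self-contained proof, the correct argument is one layer of your machinery, not three: fix $z_k$ and view $F:=\epsilon^{-d}K^{(2)}_{\lan,\epsilon}(z_k,Z)$ with $Z\sim\nu^{\Z}$ as a bounded random variable. Since $K^{(2)}_{\lan,\epsilon}(z_k,\cdot)\asymp\epsilon^{d/2}$ near $z_k$ and decays exponentially (compare \eqref{eq:lanker_decay}), one gets $\|F\|_\infty\asymp\epsilon^{-d/2}$, $\mathbb{E}[F]=\epsilon^{-d}d^{(2)}_{\lan,\epsilon}(z_k)\asymp 1$, and $\operatorname{Var}(F)\asymp\epsilon^{-d/2}$; Bernstein's inequality applied to the i.i.d.\ average over the $m=\lceil n^\beta\rceil$ landmarks, with $t$ chosen so that $mt^2/\epsilon^{-d/2}\asymp\log n$, gives $t\asymp\sqrt{\log n}/(n^{\beta/2}\epsilon^{d/4})$, and a union bound over the $m$ choices of $z_k$ (respectively the $n$ choices of $x_i$ for the second display, where the average runs over the $n$ data points and yields the $n^{1/2}$ rate) keeps the failure probability at $\mathcal{O}(1/n^2)$. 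This is exactly the template the paper itself uses later when it \emph{does} write out a Bernstein argument (for the bound on $\tilde{K}_{\lan,\epsilon,\alpha,m}-\epsilon^{-d/2+d\alpha}K_{\lan,\epsilon,\alpha}$ in the proof of Theorem \ref{thm:var}), so you should model a proof of Lemma \ref{lem:deg_var} on that passage rather than on the theorem's overall decomposition.
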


The following lemma shows the large deviation bound of original diffusion operator.
\begin{lemma}[\cite{singer2006}]\label{lem:diff_var}
Take $\mathcal{X}=\left\{x_i\right\}_{i=1}^n$ and $\mathcal{Z}=\left\{z_k\right\}_{k=1}^m$, where $m=\left[n^\beta\right]$ for some $0<\beta \leq 1$ and $[r]$ is the nearest integer of $r \in \mathbb{R}$. Let $\epsilon=\epsilon(n)$ so that $\frac{\sqrt{\log n}}{n^{\beta / 2} \epsilon^{d / 2+1 / 2}} \rightarrow 0$ and $\epsilon \rightarrow 0$ when $n \rightarrow \infty$. Then with probability higher than $1-\mathcal{O}\left(1 / n^2\right)$, we have
\begin{equation*}
\frac{\frac{1}{n}\sum_{j=1}^n K_{\lan, \epsilon,\alpha}(x_i, x_j) (f(x_i)-f(x_j)) }{\frac{1}{n}\sum_{j=1}^n K_{\lan, \epsilon,\alpha}(x_i, x_j) }=f(x_i)-T_{\lan,\epsilon,\alpha}f(x_i)+\mathcal{O}\left(\frac{\sqrt{\log n}}{n^{1/2}\epsilon^{d/4-1/2}}\right)
\end{equation*}
\end{lemma}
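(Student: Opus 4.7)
The plan is to express $\left[(I_n-\bar{\mathbf{M}}_\alpha)\boldsymbol{f}\right](i)$ as an empirical ratio approximating $f(x_i)-T_{\lan,\epsilon,\alpha}f(x_i)$ and to control the stochastic fluctuations of the numerator and denominator separately via the landmark and dataset large‑deviation estimates already collected (Lemmas \ref{lem:ker_var}, \ref{lem:deg_var}, \ref{lem:diff_var}). Recalling $\bar{\mathbf{M}}_\alpha=(\bar{\mathbf{D}}^{(1)}_\alpha)^{-1}\bar{\mathbf{W}}^{(1)}(\bar{\mathbf{D}}^{(2)})^{-\alpha}\bar{\mathbf{W}}^{(2)\top}$, I would first write, for each fixed index $i$,
\[
\bigl(\bar{\mathbf{M}}_\alpha\boldsymbol{f}\bigr)_i=\frac{N_i(n,m)}{D_i(n,m)},
\]
where $N_i$ and $D_i$ are the double empirical averages
\[
N_i=\frac{1}{m}\sum_{k=1}^{m}\frac{K^{(1)}_\epsilon(x_i,z_k)}{[\hat d^{(2)}_{\lan,\epsilon}(z_k)]^{\alpha}}\cdot\frac{1}{n}\sum_{j=1}^{n}K^{(2)}_\epsilon(z_k,x_j)f(x_j),\qquad \hat d^{(2)}_{\lan,\epsilon}(z_k):=\tfrac{1}{nm}\bar{\mathbf{D}}^{(2)}_{kk},
\]
and $D_i$ is the analogous quantity with $f\equiv 1$. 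Their continuous counterparts are $\int_\M K_{\lan,\epsilon,\alpha}(x_i,y)f(y)d\nu(y)$ and $d_{\lan,\epsilon,\alpha}(x_i)$, and the target is $T_{\lan,\epsilon,\alpha}f(x_i)=(\text{num})/(\text{den})$.

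The bulk of the argument is a three‑tier Monte‑Carlo replacement. First, Lemma \ref{lem:deg_var} gives, uniformly over $k$, $\hat d^{(2)}_{\lan,\epsilon}(z_k)=d^{(2)}_{\lan,\epsilon}(z_k)+O(\sqrt{\log n}\,n^{-1/2}\epsilon^{-d/4})$ with probability at least $1-O(n^{-2})$; by Theorem \ref{thm:rose} the limit is bounded below by a constant multiple of $\epsilon^{d}$, so a first‑order Taylor expansion of $t\mapsto t^{-\alpha}$ transfers the bound to $[\hat d^{(2)}_{\lan,\epsilon}(z_k)]^{-\alpha}$. Second, for the inner sum $\tfrac{1}{n}\sum_j K^{(2)}_\epsilon(z_k,x_j)f(x_j)$, a Bernstein/Hoeffding step analogous to Lemma \ref{lem:ker_var} gives, uniformly in $k$, concentration around $\int K^{(2)}_\epsilon(z_k,y)f(y)d\nu(y)$ with the same rate $\sqrt{\log n}\,n^{-1/2}\epsilon^{-d/4}$. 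Substituting these two replacements into the $\tfrac{1}{m}\sum_k$ outer sum yields, for each $i$, an unbiased (up to deterministic Taylor error) empirical average of an $O(1)$ function of $z_k$ whose variance is again controlled by $\epsilon^{-d/2}$; a Hoeffding bound conditioned on $\mathcal{X}$ then delivers the dominant error $\sqrt{\log n}\,n^{-\beta/2}\epsilon^{-d/4}$. Dividing numerator by denominator and using $D_i\to d_{\lan,\epsilon,\alpha}(x_i)\gtrsim\epsilon^{d}$ (bounded below by Theorem \ref{thm:rose}) translates this into an error of the same order for $N_i/D_i - \tilde N_i/\tilde D_i$. Finally, multiplying by $\epsilon^{-1}$ from the left‑hand side of the statement produces the stated rate $\sqrt{\log n}/(n^{\beta/2}\epsilon^{d/4+1})$. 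A union bound over $i=1,\ldots,n$ keeps the exceptional probability at $O(n^{-2})$ because each individual deviation event is of order $n^{-3}$ by the subgaussian tails used.

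The hard part, which the paper explicitly flags at the end of the subsection, is that $\hat d^{(2)}_{\lan,\epsilon}(z_k)$ shares the random sample $\{x_j\}$ with both the inner sum $\tfrac{1}{n}\sum_j K^{(2)}_\epsilon(z_k,x_j)f(x_j)$ and the outer landmarks with $z_k$ appearing in $K^{(1)}_\epsilon(x_i,z_k)$; strictly, these are not independent. The approach I am proposing decouples them by concentrating each quantity uniformly on a grid and substituting the deterministic continuous objects in between — this is why the rate comes out as $\epsilon^{-d/4-1}$ rather than the optimal $\epsilon^{-d/4-1/2}$ one would expect from a single joint Bernstein bound. Executing such a joint argument would require carefully computing the covariance between the inner‑sum fluctuation and the denominator‑power fluctuation (a $U$‑statistic of order three involving $x_i,z_k,x_j$), which is left to future work per the remark after the theorem.
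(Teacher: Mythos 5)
Your proposal proves the wrong statement. Lemma \ref{lem:diff_var} concerns the \emph{fixed, deterministic} kernel $K_{\lan,\epsilon,\alpha}(x_i,x_j)$ --- the continuous object of Definition \ref{def:lanADkernel}, in which the landmark measure $\nu^{\Z}$ and the degree $d^{(2)}_{\lan,\epsilon}$ have already been integrated out --- so the only randomness present is the i.i.d.\ sampling of $\{x_j\}_{j=1}^n$. No landmark concentration (Lemmas \ref{lem:ker_var}, \ref{lem:deg_var}), no decoupling of $\hat d^{(2)}_{\lan,\epsilon}(z_k)$ from the inner sums, and no three-tier Monte--Carlo replacement is needed; indeed the delicate dependence issue you spend your last paragraph on simply does not arise here. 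What you have sketched is essentially the paper's proof of Theorem \ref{thm:var} (the matrix-level statement with rate $\sqrt{\log n}/(n^{\beta/2}\epsilon^{d/4+1})$), and in doing so you invoke Lemma \ref{lem:diff_var} itself as one of your "already collected" estimates, which is circular. The giveaway is your final rate: you land on $\sqrt{\log n}/(n^{\beta/2}\epsilon^{d/4+1})$, whereas the lemma asserts $\sqrt{\log n}/(n^{1/2}\epsilon^{d/4-1/2})$, a rate your argument never produces.

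The actual proof is the classical pointwise argument of \cite{singer2006} applied to the random variables $G:=\epsilon^{-d+d\alpha}K_{\lan,\epsilon,\alpha}(x_i,Y)\left(f(x_i)-f(Y)\right)$ and $H:=\epsilon^{-d+d\alpha}K_{\lan,\epsilon,\alpha}(x_i,Y)$ with $Y\sim p^{(2)}$: one needs $\mathbb{E}[G]=\mathcal{O}(\epsilon)$ (the zeroth-order term cancels and the first-order term vanishes by the near-symmetry $A_{1,\epsilon}(F,v)=-A_{1,\epsilon}(F,-v)$ of Lemma \ref{lem:refkernel}), $\operatorname{Var}(G)=\mathcal{O}(\epsilon^{1-d/2})$, $\mathbb{E}[H]$ bounded below away from zero, and $\operatorname{Var}(H)=\mathcal{O}(\epsilon^{-d/2})$ --- exactly the moment computations the paper carries out for $\mathbb{E}[G]$, $\mathbb{E}[G^2]$, $\mathbb{E}[H]$, $\mathbb{E}[H^2]$ inside the proof of Theorem \ref{thm:var}. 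Bernstein's inequality at confidence $1-\mathcal{O}(n^{-3})$ plus a union bound then gives a numerator fluctuation of order $\sqrt{\operatorname{Var}(G)\log n/n}\asymp\sqrt{\log n}/(n^{1/2}\epsilon^{d/4-1/2})$, while the denominator fluctuation contributes only $\mathcal{O}\left(\epsilon\cdot\sqrt{\log n}/(n^{1/2}\epsilon^{d/4})\right)$, which is of lower order; the ratio of the two expectations is $f(x_i)-T_{\lan,\epsilon,\alpha}f(x_i)$ by definition. If you want to salvage your write-up, the material you produced belongs to the proof of Theorem \ref{thm:var}; for the lemma itself you must supply the moment bounds on $G$ and $H$ and the single Bernstein step above.
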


Before we prove Theorem \ref{thm:var}, we define some functions for convenience.
\begin{align*}
\hat{K}^{(2)}_{\lan,\epsilon,n}(z_k,z_l)&:=\frac{1}{n} \sum_{i=1}^n \epsilon^{-d / 2} K_\epsilon^{(2)}\left(z_k, x_i\right) K^{(2)}_\epsilon\left(x_i, z_l\right),\\
\hat{d}^{(2)}_{\lan,\epsilon,m}(z_k)&:=\frac{1}{m} \sum_{l=1}^m \epsilon^{-d / 2} \hat{K}^{(2)}_{\lan,\epsilon,m}(z_k,z_l),\\
\hat{K}_{\lan,\epsilon,\alpha,m}(x_i,x_j)&:=\frac{1}{m} \sum_{k=1}^m \epsilon^{-d / 2} K_\epsilon^{(1)}\left(x_i, z_k\right) K^{(2)}_\epsilon\left(z_k, x_j\right)\hat{d}^{(2)}_{\lan,\epsilon,m}(z_k)^{-\alpha},\\
\hat{d}_{\lan,\epsilon,\alpha,n}(x_i)&:=\frac{1}{n} \sum_{j=1}^n \epsilon^{-d / 2} \hat{K}_{\lan,\epsilon,\alpha,m}(x_i,x_j),
\end{align*}
and 
\begin{align*}
\tilde{d}^{(2)}_{\lan,\epsilon,m}(z_k)&:=\frac{1}{m} \sum_{l=1}^m \epsilon^{-d} K^{(2)}_{\lan,\epsilon}(z_k,z_l),\\
\tilde{K}_{\lan,\epsilon,\alpha,m}(x_i,x_j)&:=\frac{1}{m} \sum_{k=1}^m \epsilon^{-d / 2+d\alpha} K_\epsilon^{(1)}\left(x_i, z_k\right) K^{(2)}_\epsilon\left(z_k, x_j\right)d^{(2)}_{\lan,\epsilon}(z_k)^{-\alpha}\,.
\end{align*}

Now, it is sufficient to prove the Theorem \ref{thm:var}.
\begin{proof}
Fix $x_i\in\M$ and $\alpha\in [0,1]$. For convenience, define
\begin{equation*}
[L\boldsymbol{f}](i):=\frac{1}{\epsilon}\left[\left(I_n-\left(\mathbf{D}^{(1)}_\alpha\right)^{-1} \mathbf{W}^{(1)}_\alpha\mathbf{M}^{(2)}_\alpha\right) \boldsymbol{f}\right](i)
\end{equation*}
for all $i=1,2, \ldots, n$. By definition, we have
\begin{align*}
&\epsilon[L\boldsymbol{f}](i)=\frac{\frac{1}{n}\sum_{j=1}^n \hat{K}_{\lan, \epsilon,\alpha,m}(x_i, x_j) (f(x_i)-f(x_j)) }{\frac{1}{n}\sum_{j=1}^n \hat{K}_{\lan, \epsilon,\alpha,m}(x_i, x_j) }\\
=&\frac{\frac{1}{n}\sum_{j=1}^n \left(\frac{1}{m}\sum_{k=1}^mK^{(1)}_\epsilon(x_i,z_k)K^{(2)}_\epsilon(z_k,x_j)\hat{d}^{(2)}_{\lan,\epsilon,m}(z_k)^{-\alpha}\right) (f(x_i)-f(x_j)) }{\frac{1}{n}\sum_{j=1}^n \left(\frac{1}{m}\sum_{k=1}^mK^{(1)}_\epsilon(x_i,z_k)K^{(2)}_\epsilon(z_k,x_j)\hat{d}^{(2)}_{\lan,\epsilon,m}(z_k)^{-\alpha}\right)}\\
=&\frac{\frac{1}{n}\sum_{j=1}^n \left(\frac{1}{m}\sum_{k=1}^mK^{(1)}_\epsilon(x_i,z_k)K^{(2)}_\epsilon(z_k,x_j)\left(\frac{1}{m}\sum_{l=1}^m\hat{K}^{(2)}_{\lan,\epsilon,n}(z_k,z_l)\right)^{-\alpha}\right) (f(x_i)-f(x_j)) }{\frac{1}{n}\sum_{j=1}^n \left(\frac{1}{m}\sum_{k=1}^mK^{(1)}_\epsilon(x_i,z_k)K^{(2)}_\epsilon(z_k,x_j)\left(\frac{1}{m}\sum_{l=1}^m\hat{K}^{(2)}_{\lan,\epsilon,n}(z_k,z_l)\right)^{-\alpha}\right)}
\end{align*}

First, we control the term $\hat{K}^{(2)}_{\lan,\epsilon,m}$. For convenience, we rewrite
\begin{align*}
\hat{K}^{(2)}_{\lan,\epsilon,n}(z_k,z_l)&=\epsilon^{-d/2}K^{(2)}_{\lan,\epsilon}(z_k,z_l)-\left(\epsilon^{-d/2}K^{(2)}_{\lan,\epsilon}(z_k,z_l)-\hat{K}^{(2)}_{\lan,\epsilon,n}(z_k,z_l)\right)\\
&:=\epsilon^{-d/2}K^{(2)}_{\lan,\epsilon}(z_k,z_l)-D(z_k,z_l)\,.
\end{align*}
Calculate directly,
\begin{equation*}
\epsilon[L\boldsymbol{f}](i)=J_1+J_2+J_3
\end{equation*}
where
\begin{equation*}
J_1:=\frac{\frac{1}{n}\sum_{j=1}^n \left(\frac{1}{m}\sum_{k=1}^mK^{(1)}_\epsilon(x_i,z_k)K^{(2)}_\epsilon(z_k,x_j)\left(\frac{1}{m}\sum_{l=1}^mK^{(2)}_{\lan,\epsilon,n}(z_k,z_l)\right)^{-\alpha}\right) (f(x_i)-f(x_j)) }{\frac{1}{n}\sum_{j=1}^n \left(\frac{1}{m}\sum_{k=1}^mK^{(1)}_\epsilon(x_i,z_k)K^{(2)}_\epsilon(z_k,x_j)\left(\frac{1}{m}\sum_{l=1}^m K^{(2)}_{\lan,\epsilon,n}(z_k,z_l)\right)^{-\alpha}\right)}\,,
\end{equation*}
\begin{equation*}
J_2:=\frac{\frac{1}{n}\sum_{j=1}^n \left(\frac{1}{m}\sum_{k=1}^mK^{(1)}_\epsilon(x_i,z_k)K^{(2)}_\epsilon(z_k,x_j)\left(\frac{1}{m}\sum_{l=1}^m D(z_k,z_l)\right)^{-\alpha}\right) (f(x_i)-f(x_j)) }{\frac{1}{n}\sum_{j=1}^n \left(\frac{1}{m}\sum_{k=1}^mK^{(1)}_\epsilon(x_i,z_k)K^{(2)}_\epsilon(z_k,x_j)\left(\frac{1}{m}\sum_{l=1}^m \epsilon^{-d/2}K^{(2)}_{\lan,\epsilon,\alpha}(z_k,z_l)\right)^{-\alpha}\right)}
\end{equation*}
and
\begin{align*}
J_3:=&\frac{1}{n}\sum_{j=1}^n \left(\frac{1}{m}\sum_{k=1}^mK^{(1)}_\epsilon(x_i,z_k)K^{(2)}_\epsilon(z_k,x_j)\left(\frac{1}{m}\sum_{l=1}^m\hat{K}^{(2)}_{\lan,\epsilon,n}(z_k,z_l)\right)^{-\alpha}\right) (f(x_i)-f(x_j))\\
&\hspace*{0.5cm}\times\bigg[\left(\frac{1}{n}\sum_{j=1}^n \left(\frac{1}{m}\sum_{k=1}^mK^{(1)}_\epsilon(x_i,z_k)K^{(2)}_\epsilon(z_k,x_j)\left(\frac{1}{m}\sum_{l=1}^m \epsilon^{-d/2}K^{(2)}_{\lan,\epsilon,\alpha}(z_k,z_l)\right)^{-\alpha}\right)\right)^{-1}\\
&\hspace*{1.cm}-\left(\frac{1}{n}\sum_{j=1}^n \left(\frac{1}{m}\sum_{k=1}^mK^{(1)}_\epsilon(x_i,z_k)K^{(2)}_\epsilon(z_k,x_j)\left(\frac{1}{m}\sum_{l=1}^m \hat{K}^{(2)}_{\lan,\epsilon,\alpha}(z_k,z_l)\right)^{-\alpha}\right)\right)^{-1}\bigg]
\end{align*}

By Lemma \ref{lem:ker_var}, with probability $1-\mathcal{O}(n^{-2})$, we have
\begin{equation}\label{eq:event_1}
\hat{K}^{(2)}_{\lan,\epsilon,n}(z_k,z_l)=\epsilon^{-d/2}K^{(2)}_{\lan,\epsilon}(z_k,z_l)+\mathcal{O}\left(\frac{\sqrt{\log (n)}}{n^{1 / 2} \epsilon^{d / 4}}\right)
\end{equation}
Denote by $\Omega_1$ the event space that Equation (\ref{eq:event_1}) holds. Under $\Omega_1$, we can easily control $J_2$ and $J_3$, and hence we have
\begin{align*}
\epsilon[L\boldsymbol{f}](i)=&\frac{\frac{1}{n}\sum_{j=1}^n \left(\frac{1}{m}\sum_{k=1}^mK^{(1)}_\epsilon(x_i,z_k)K^{(2)}_\epsilon(z_k,x_j)\tilde{d}^{(2)}_{\lan,\epsilon,m}(z_k)^{-\alpha}\right) (f(x_i)-f(x_j)) }{\frac{1}{n}\sum_{j=1}^n \left(\frac{1}{m}\sum_{k=1}^mK^{(1)}_\epsilon(x_i,z_k)K^{(2)}_\epsilon(z_k,x_j)\tilde{d}^{(2)}_{\lan,\epsilon,m}(z_k)^{-\alpha}\right)}\\
&\hspace*{1cm}+\mathcal{O}\left(\alpha \frac{\sqrt{\log (n)}}{n^{1/ 2} \epsilon^{d / 4}}\right)
\end{align*}
where the implied constant depends on $\|f\|_\infty$.

Second, again, calculate directly,
\begin{align*}
&\frac{\frac{1}{n}\sum_{j=1}^n \left(\frac{1}{m}\sum_{k=1}^mK^{(1)}_\epsilon(x_i,z_k)K^{(2)}_\epsilon(z_k,x_j)\tilde{d}^{(2)}_{\lan,\epsilon,m}(z_k)^{-\alpha}\right) (f(x_i)-f(x_j)) }{\frac{1}{n}\sum_{j=1}^n \left(\frac{1}{m}\sum_{k=1}^mK^{(1)}_\epsilon(x_i,z_k)K^{(2)}_\epsilon(z_k,x_j)\tilde{d}^{(2)}_{\lan,\epsilon,m}(z_k)^{-\alpha}\right)}\\
=&J_4+J_5+J_6
\end{align*}
where
\begin{equation*}
J_4:=\frac{\frac{1}{n}\sum_{j=1}^n \left(\frac{1}{m}\sum_{k=1}^mK^{(1)}_\epsilon(x_i,z_k)K^{(2)}_\epsilon(z_k,x_j)d^{(2)}_{\lan,\epsilon}(z_k)^{-\alpha}\right) (f(x_i)-f(x_j)) }{\frac{1}{n}\sum_{j=1}^n \left(\frac{1}{m}\sum_{k=1}^mK^{(1)}_\epsilon(x_i,z_k)K^{(2)}_\epsilon(z_k,x_j)d^{(2)}_{\lan,\epsilon}(z_k)^{-\alpha}\right)}
\end{equation*}
\begin{equation*}
J_5:=\frac{\frac{1}{n}\sum_{j=1}^n \left(\frac{1}{m}\sum_{k=1}^mK^{(1)}_\epsilon(x_i,z_k)K^{(2)}_\epsilon(z_k,x_j)(\tilde{d}^{(2)}_{\lan,\epsilon,m}(z_k)^{-\alpha}-\epsilon^{d\alpha}d^{(2)}_{\lan,\epsilon}(z_k)^{-\alpha}\right) (f(x_i)-f(x_j)) }{\frac{1}{n}\sum_{j=1}^n \left(\frac{1}{m}\sum_{k=1}^mK^{(1)}_\epsilon(x_i,z_k)K^{(2)}_\epsilon(z_k,x_j)\epsilon^{d\alpha}d^{(2)}_{\lan,\epsilon}(z_k)^{-\alpha}\right)}
\end{equation*}
and
\begin{align*}
J_6:=&\frac{1}{n}\sum_{j=1}^n \left(\frac{1}{m}\sum_{k=1}^mK^{(1)}_\epsilon(x_i,z_k)K^{(2)}_\epsilon(z_k,x_j)\tilde{d}^{(2)}_{\lan,\epsilon,m}(z_k)^{-\alpha}\right) (f(x_i)-f(x_j))\\
&\hspace*{5mm}\times\bigg[\left(\frac{1}{n}\sum_{j=1}^n \left(\frac{1}{m}\sum_{k=1}^mK^{(1)}_\epsilon(x_i,z_k)K^{(2)}_\epsilon(z_k,x_j)\epsilon^{d\alpha}d^{(2)}_{\lan,\epsilon}(z_k)^{-\alpha}\right)\right)^{-1}\\
&\hspace*{10mm}-\left(\frac{1}{n}\sum_{j=1}^n \left(\frac{1}{m}\sum_{k=1}^mK^{(1)}_\epsilon(x_i,z_k)K^{(2)}_\epsilon(z_k,x_j)\tilde{d}^{(2)}_{\lan,\epsilon,m}(z_k)^{-\alpha}\right)\right)^{-1}\bigg]
\end{align*}
By Lemma \ref{lem:deg_var}, with probability $1-\mathcal{O}(n^{-2})$, we have
\begin{equation}\label{eq:event_2}
\tilde{d}^{(2)}_{\lan,\epsilon,m}=\epsilon^{-d}d^{(2)}_{\lan, \epsilon}\left(z_k\right)+\mathcal{O}\left(\frac{\sqrt{\log (n)}}{n^{\beta / 2} \epsilon^{d / 4}}\right)
\end{equation}
Denote by $\Omega_2$ the event space that Equation (\ref{eq:event_2}) holds. Thus, putting the above together, under $\Omega_1\cap\Omega_2$, we can control $J_5$ and $J_6$ and hence we have
\begin{align*}
\epsilon[L\boldsymbol{f}](i)=&\frac{\frac{1}{n}\sum_{j=1}^n \left(\frac{1}{m}\sum_{k=1}^mK^{(1)}_\epsilon(x_i,z_k)K^{(2)}_\epsilon(z_k,x_j)d^{(2)}_{\lan,\epsilon}(z_k)^{-\alpha}\right) (f(x_i)-f(x_j)) }{\frac{1}{n}\sum_{j=1}^n \left(\frac{1}{m}\sum_{k=1}^mK^{(1)}_\epsilon(x_i,z_k)K^{(2)}_\epsilon(z_k,x_j)d^{(2)}_{\lan,\epsilon}(z_k)^{-\alpha}\right)}\\
&\hspace*{1cm}+\mathcal{O}\left(\alpha \frac{\sqrt{\log (n)}}{n^{\beta / 2} \epsilon^{d / 4}}\right)
\end{align*}
where $\frac{\sqrt{\log (n)}}{n^{\beta / 2} \epsilon^{d / 4}}\geq \frac{\sqrt{\log (n)}}{n^{1 / 2} \epsilon^{d / 4}}$ and the implied constant depends on $\|f\|_\infty$.

Third, again, calculate directly,
\begin{align}
&\frac{\frac{1}{n}\sum_{j=1}^n \tilde{K}_{\lan,\epsilon,\alpha,m}(x_i,x_j) (f(x_i)-f(x_j)) }{\frac{1}{n}\sum_{j=1}^n \tilde{K}_{\lan,\epsilon,\alpha,m}(x_i,x_j)}\label{proof expansion to 3 terms example}\\
=&\frac{\frac{1}{n}\sum_{j=1}^n K_{\lan,\epsilon,\alpha}(x_i,x_j) (f(x_i)-f(x_j)) }{\frac{1}{n}\sum_{j=1}^n K_{\lan,\epsilon,\alpha}(x_i,x_j)}\nonumber\\
&+\frac{\frac{1}{n}\sum_{j=1}^n (\tilde{K}_{\lan,\epsilon,\alpha,m}(x_i,x_j)-\epsilon^{-d/2+d\alpha}K_{\lan,\epsilon,\alpha}(x_i,x_j)) (f(x_i)-f(x_j)) }{\frac{1}{n}\sum_{j=1}^n \epsilon^{-d/2+d\alpha}K_{\lan,\epsilon,\alpha}(x_i,x_j)}\nonumber\\
&+\frac{1}{n}\sum_{j=1}^n \tilde{K}_{\lan,\epsilon,\alpha,m}(x_i,x_j) (f(x_i)-f(x_j))\nonumber\\
&\hspace*{1cm}\times\bigg[\left(\frac{1}{n}\sum_{j=1}^n \epsilon^{-d/2+d\alpha}K_{\lan,\epsilon,\alpha}(x_i,x_j)\right)^{-1}-\left(\frac{1}{n}\sum_{j=1}^n \tilde{K}_{\lan,\epsilon,\alpha,m}(x_i,x_j)\right)^{-1}\bigg]\,.\nonumber
\end{align}

We need to control the term  $|\tilde{K}_{\lan,\epsilon,\alpha,m}(x_i,x_j)-\epsilon^{-d/2+d\alpha}K_{\lan,\epsilon,\alpha}(x_i,x_j)|$, which is generalized the proof of Lemma \ref{lem:ker_var}. Let a random variable 
\begin{equation*}
F:=\epsilon^{-d/2+d\alpha}K^{(1)}_\epsilon(x_i,Z)K^{(2)}_\epsilon(Z,x_j)d^{(2)}_{\lan,\epsilon}(Z)^{-\alpha}
\end{equation*}
where $Z\sim p^{(2)}_\Z$. Denote by $F_k$ one realization of $F$ when the realization of random variable $Z$ is $z_k$. Now, we need to find the convergence rate of
\begin{equation*}
\frac{1}{m}\sum_{k=1}^mF_k\rightarrow\mathbb{E}[F]\,.
\end{equation*}
By Lemma \ref{lem:refkernel}, let $y=\exp_{x_i}^{(2)} v$ where $v\in T_{x_i}\M$ and we have
\begin{align*}
\mathbb{E}[F]&=\epsilon^{-d/2+d\alpha}\tilde{K}_{\lan,\epsilon,\alpha,m}(x_i,x_j)\\
&=\epsilon^{-d/2+d\alpha}\int_\M K^{(1)}_\epsilon(x_i,z)K^{(2)}_\epsilon(z,x_j)d^{(2)}_{\lan,\epsilon}(z)^{-\alpha}p^{(2)}_\Z(z)dV(z)\\
&=p^{(2)}(z)^{-\alpha}p^{(2)}_\Z(z)^{1-\alpha}A_0(v)+\mathcal{O}(\epsilon)
\end{align*}
Furthermore, by Equation (\ref{eq:lanker_decay}), we know $A_0(v)$ decay fast as $\|v\|_{g^{(2)}}$ small. On the other hand, we have
\begin{equation*}
    \mathbb{E}[F^2]=\epsilon^{-d+2d\alpha}\int_\M K^{(1)}_\epsilon(x_i,z)^2 K^{(2)}_\epsilon(z,x_j)^2 d^{(2)}_{\lan,\epsilon}(z)^{-2\alpha}p^{(2)}_\Z(z)dV(z)\asymp \epsilon^{-d/2}\,.
\end{equation*}
Hence, we have $\operatorname{Var}(F)\asymp \epsilon^{-d/2}$ and $F_k\asymp \epsilon^{-d/2}$. By Berstein inequality, we know
\begin{equation*}
\mathbb{P}\left(\frac{1}{m} \sum_{k=1}^m F_k-\mathbb{E}(F)>t\right) \leq \exp \left(-\frac{m t^2}{2 \epsilon^{-d / 2}+\frac{2}{3} \epsilon^{-d / 2} t}\right)\,.
\end{equation*}
Since we ask $\frac{t}{\mathbb{E}[F]} \rightarrow 0$ when $m \rightarrow \infty$, the exponent becomes
\begin{equation*}
\frac{m t^2}{2 \epsilon^{-d / 2}+\frac{2}{3} \epsilon^{-d / 2} t} \geq \frac{m t^2}{3 \epsilon^{-d / 2}} .
\end{equation*}

Then if we choose $m$ such that $\frac{m t^2}{3 \epsilon^{-d / 2}}=4 \log (n)$, we have
\begin{equation*}
t\asymp\frac{\sqrt{\log n}}{n^{\beta / 2} \epsilon^{d / 4}},
\end{equation*}
which goes to 0 by our assumption $\frac{\sqrt{\log n}}{n^{\beta / 2} \epsilon^{d /4}} \rightarrow 0$.
By union bound and get for all pairs $(x_i,x_j)$, $i,j=1,\cdots,n$, with probability $1-\mathcal{O}(n^{-2})$
\begin{equation}\label{eq:event_3}
    \tilde{K}_{\lan,\epsilon,\alpha,m}(x_i,x_j)=\epsilon^{-d/2+d\alpha}K_{\lan,\epsilon,\alpha}(x_i,x_j)+\mathcal{O}\left(\frac{\sqrt{\log n}}{n^{\beta / 2} \epsilon^{d /4}}\right)
\end{equation}
Denote by $\Omega_3$ the event space that Equation (\ref{eq:event_3}) holds. Thus, putting the above together, under $\Omega_1\cap\Omega_2\cap\Omega_3$, we have
\begin{align*}
\epsilon[L\boldsymbol{f}](i)=&\frac{\frac{1}{n}\sum_{j=1}^n K_{\lan,\epsilon,\alpha}(x_i,x_j) (f(x_i)-f(x_j)) }{\frac{1}{n}\sum_{j=1}^n K_{\lan,\epsilon,\alpha}(x_i,x_j)}+\mathcal{O}\left( \frac{\sqrt{\log (n)}}{n^{\beta / 2} \epsilon^{d / 4}}\right)
\end{align*}
where the implied constant depends on $\|f\|_\infty$.

Finally, we need to apply Lemma \ref{lem:diff_var} to get the convergence rate. Let random variables
\begin{align*}
&G:=\epsilon^{-d+d\alpha}K_{\lan,\epsilon,\alpha}(x_i,Y)(f(x_i)-f(Y))\\
&H:=\epsilon^{-d+d\alpha}K_{\lan,\epsilon,\alpha}(x_i,Y)
\end{align*}
where $Y\sim p^{(2)}$.

By Equation (\ref{eq:numer}), we have
\begin{align*}
\mathbb{E}[G] =&\frac{\epsilon \mu_{2,0}^{(1)}}{2d\operatorname{det}(\Lambda_x)}2q_{\alpha}(x_i)\left( \sum_{i=1}^d \lambda_i\left[\nabla^{(2)}_{E_i} p^{(2)}(x)\nabla^{(2)}_{E_i} (f(x_i)-f(x))|_{x=x_i}\right]\right)\\
&+\frac{\epsilon \mu_{2,0}^{(1)}}{2d\operatorname{det}(\Lambda_x)}2p^{(2)}(x_i)\left( \sum_{i=1}^d \lambda_i\left[\nabla^{(2)}_{E_i} q_{\alpha}(x)\nabla^{(2)}_{E_i} (f(x_i)-f(x))|_{x=x_i}\right]\right)\\
&+\frac{\epsilon \mu_{2,0}^{(2)}}{2d\operatorname{det}(\Lambda_x)}2q_{\alpha}(x_i)\nabla^{(2)} p^{(2)}(x)\cdot\nabla^{(2)}(f(x_i)-f(x))|_{x=x_i}\\
&+\frac{\epsilon \mu_{2,0}^{(2)}}{2d\operatorname{det}(\Lambda_x)}q_{\alpha}(x_i)p^{(2)}(x_i)\Delta^{(2)}(f(x_i)-f(x))|_{x=x_i}\\
&+\frac{\epsilon \mu_{2,0}^{(1)}}{2d\operatorname{det}(\Lambda_x)}q_{\alpha}(x_i)p^{(2)}(x_i)\sum^{d}_{i=1}\lambda_i\nabla^{(2)^2}_{E_iE_i}(f(x_i)-f(x))|_{x=x_i}+\mathcal{O}(\epsilon^{3/2})
\end{align*}
and
\begin{align*}
\mathbb{E}[G^2] =&\frac{\epsilon^{1-d/2} \mu_{2,0,2}^{(2)}}{2d\operatorname{det}(\Lambda_x)}q_{\alpha}(x_i)p^{(2)}(x_i)\Delta^{(2)}(f(x_i)-f(x))^2|_{x=x_i}\\
&+\frac{\epsilon^{1-d/2} \mu_{2,0,2}^{(1)}}{2d\operatorname{det}(\Lambda_x)}q_{\alpha}(x_i)p^{(2)}(x_i)\sum^{d}_{i=1}\lambda_i\nabla^{(2)^2}_{E_iE_i}(f(x_i)-f(x))^2|_{x=x_i}+\mathcal{O}(\epsilon^{3/2})\\
=&\frac{\epsilon^{1-d/2} \mu_{2,0,2}^{(2)}}{2d\operatorname{det}(\Lambda_x)}q_{\alpha}(x_i)p^{(2)}(x_i)\|\nabla^{(2)}(f(x_i)-f(x))\|^2|_{x=x_i}\\
&+\frac{\epsilon^{1-d/2} \mu_{2,0,2}^{(1)}}{2d\operatorname{det}(\Lambda_x)}q_{\alpha}(x_i)p^{(2)}(x_i)\sum^{d}_{i=1}\lambda_i|\nabla^{(2)}_{E_i}(f(x_i)-f(x))|^2|_{x=x_i}+\mathcal{O}(\epsilon^{3/2})
\end{align*}
where $\mu^{(\ell)}_{2,0,2}=\int_{\mathbb{R}^d}\|x\|^2 \tilde{K}^{(\ell)}(\|x\|)^2 d x$. By Equation (\ref{eq:denom}), we have
\begin{equation*}
\mathbb{E}[H]=\frac{p^{(2)}(x)q_{\alpha}(x)}{\operatorname{det}(\Lambda_x)}+\mathcal{O}(\epsilon)
\end{equation*}
and
\begin{equation*}
\mathbb{E}[H^2]=\epsilon^{-d/2}\mu^{(1)}_{0,0,2}\mu^{(2)}_{0,0,2}\frac{p^{(2)}(x)q_{\alpha}(x)}{\operatorname{det}(\Lambda_x)}+\mathcal{O}(\epsilon^{-d/2+1})
\end{equation*}
where $\mu^{(\ell)}_{0,0,2}=\int_{\mathbb{R}^d} \tilde{K}^{(\ell)}(\|x\|)^2 d x$. Now, it is sufficient to apply Lemma \ref{lem:diff_var} to conclude the proof and further look closer to what the implied constant depends. Thus, with probability $1-\mathcal{O}(n^{-2})$, we have
\begin{align}\label{eq:event_4}
&\frac{\frac{1}{n}\sum_{j=1}^n K_{\lan, \epsilon,\alpha}(x_i, x_j) (f(x_i)-f(x_j)) }{\frac{1}{n}\sum_{j=1}^n K_{\lan, \epsilon,\alpha}(x_i, x_j) }\\
=&\,f(x_i)-T_{\lan,\epsilon,\alpha}f(x_i)+\mathcal{O}\left(\frac{\sqrt{\log n}}{n^{1/2}\epsilon^{d/4-1/2}}\right)\nonumber
\end{align}
where the implied constant depends on
\begin{equation*}
\|\nabla^{(2)}(f(x_i)-f(x))\|^2|_{x=x_i}+\sum^{d}_{i=1}\lambda_i|\nabla^{(2)}_{E_i}(f(x_i)-f(x))|^2|_{x=x_i}\,.
\end{equation*}
Denote by $\Omega_4$ the event space that Equation (\ref{eq:event_4}) holds. Thus, putting the above together, under $\Omega_1\cap\Omega_2\cap\Omega_3\cap\Omega_4$,
\begin{equation}\label{eq:main_var}
\frac{1}{\epsilon}[L\boldsymbol{f}](i)=\frac{f(x_i)-T_{\lan,\epsilon,\alpha}f(x_i)}{\epsilon}+\mathcal{O}\left(\frac{\sqrt{\log n}}{n^{\beta/2}\epsilon^{d/4+1}}\right)
\end{equation}
where $\frac{\sqrt{\log n}}{n^{1/2}\epsilon^{d/4-1/2}}\leq \frac{\sqrt{\log n}}{n^{\beta/2}\epsilon^{d/4}}$. That is, with probability $1-\mathcal{O}(n^{-2})$, Equation (\ref{eq:main_var}) holds.
\end{proof}

Before closing this section, we provide a numerical exploration of the obtained convergence rate in the variance analysis by studying the relation between $n$ and $\epsilon$. We conduct two experiments following \cite{singer2006} to examine the variance term in Theorem \ref{thm:var}. First, consider the canonical $\mathbb{S}^1$ and the scaled circle $S$, defined as
\begin{align*}
\mathbb{S}^1 &= \{(\cos \theta, \, \sin \theta)\} \subset\mathbb{R}^2, \\
S_1 &= \{(1.5\cos \theta, \, 1.5\sin \theta)\} \subset\mathbb{R}^2\,,
\end{align*}
where $\theta \in [-\pi, \pi)$. We randomly select 3000 points $\{(s_i,r_i):s_i\in S_1, r_i\in\mathbb{S}^1\}$ uniformly and we select 1500 landmark points followed the non-uniform p.d.f. $p_\Z^{(2)}(\theta)=\frac{58}{50}[0.48\cos\theta+0.52]$. Consider the function $f(x,y)=x$ and evaluate at $(\frac{1}{\sqrt{2}},\frac{1}{\sqrt{2}})$. We have $\Delta^{(2)}f\vert_{(\frac{1}{\sqrt{2}},\frac{1}{\sqrt{2}})}=-\frac{1}{\sqrt{2}}$, $\lambda_1 \nabla^{(2)^2}_{E_1,E_1}f\vert_{(\frac{1}{\sqrt{2}},\frac{1}{\sqrt{2}})}=\frac{-2}{3\sqrt{2}}$ and $\lambda_1\frac{\nabla_{E_1}^{(2)} p^{(2)^{1-\alpha}}_\Z}{p^{(2)^{1-\alpha}}_\Z} \nabla_{E_1}^{(2)} f\vert_{(\frac{1}{\sqrt{2}},\frac{1}{\sqrt{2}})}\neq 0$. Then, for fix $x_i$, we conduct $T=50$ independent trials and compute the deviation 
\begin{equation*}
\sqrt{\frac{1}{T}\sum^{T}_{t=1}\left|\frac{1}{\epsilon}\left[\left(I_n-\left(\mathbf{D}^{(1)}_{\alpha,t}\right)^{-1} \mathbf{W}^{(1)}_{\alpha,t}\mathbf{M}^{(2)}_{\alpha,t}\right) \boldsymbol{f}\right](i)-\frac{f\left(x_i\right)-T_{\lan,\epsilon,\alpha}f(x_i)}{\epsilon}\right|^2}\,,
\end{equation*}
where $\mathbf{D}^{(1)}_{\alpha,t}$, $\mathbf{D}^{(1)}_{\alpha,t}$ and $\mathbf{M}^{(2)}_{\alpha,t}$ are constructed in the $t$-th trial. The relationship between the deviation and $\epsilon$ is shown in the left subfigure of Figure \ref{fig:var}. According to Theorem \ref{thm:var}, we expect the convergence rate to be $\epsilon^{5/4}$. However, a linear fit yields slopes $0.75, 0.71, 0.73, 0.71$ and $0.73$ when $\alpha = 0, 0.25, 0.5, 0.75, 1.0$ respectively. This numerical result is faster than our analysis.

Second, consider the canonical $\mathbb{S}^2$ and the scaled sphere $S_2$, defined as
\begin{align*}
\mathbb{S}^2 &= \{(\sin\theta\cos \phi, \, \sin \theta\sin\phi,\, \cos\theta)\} \subset\mathbb{R}^3, \\
S_2 &= \{(0.8\sin\theta\cos \phi, \, 0.8\sin \theta\sin\phi,\, 0.8\cos\theta)\} \subset\mathbb{R}^3\,,
\end{align*}
where $\phi\in[-2\pi,2\pi)$ and $\theta \in [-\pi, \pi)$. We randomly select 3000 points $\{(s_i,r_i):s_i\in S_2, r_i\in\mathbb{S}^2\}$ uniformly and we select 2000 landmark points uniformly. Consider the function $f(x,y,z)=x+z$ and evaluate at $(0,0,1)$. The relationship between the deviation and $\epsilon$ is shown in the right subfigure of Figure \ref{fig:var}. According to Theorem \ref{thm:var}, we expect the convergence rate to be $\epsilon^{3/2}$. However, a linear fit yields slopes $0.92, 0.94, 0.94, 0.96$ and $0.96$ for $\alpha = 0, 0.25, 0.5, 0.75, 1.0$, respectively, which is also faster than our analysis.

\begin{figure}[hbt!]
\begin{center}
\includegraphics[trim=50 0 50 0, width=0.9\textwidth]{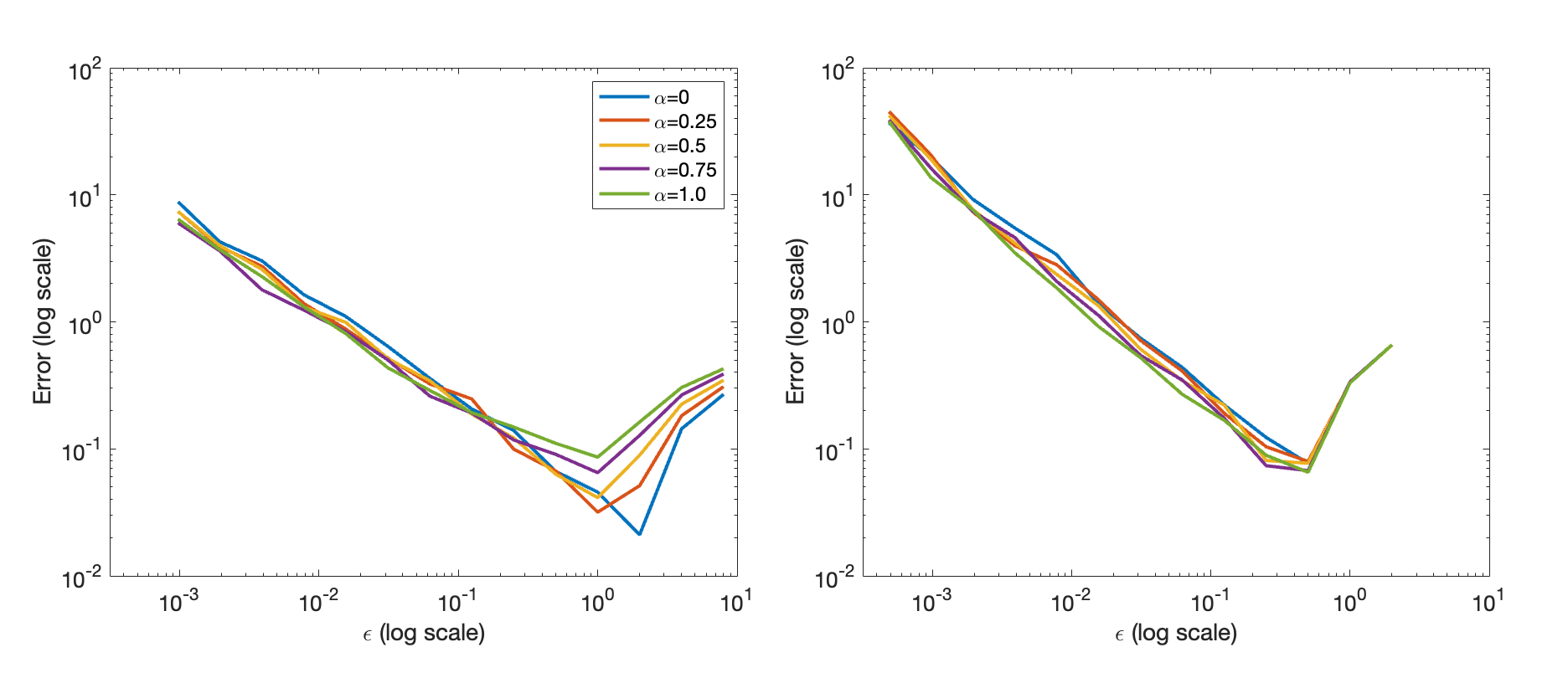}
\end{center}
\caption{Left: The deviation between the discrete and continuous $\alpha$-LAD operator on $\mathbb{S}^1$ and $S_1$. Right: The deviation between the discrete and continuous $\alpha$-LAD operator on $\mathbb{S}^2$ and $S_2$.}
\label{fig:var}
\end{figure}

According to Figure \ref{fig:var}, we anticipate that the convergence rate should be $\epsilon^{d/4+1/2}$ but it does not match the theoretical convergence rate we derive. The main reason is that we control the stochastic fluctuation by controlling the datasets and landmarks separately. For example, in \eqref{proof expansion to 3 terms example}, the control is divided into three terms, where the last two terms contribute to this slower theoretical convergence rate. We hypothesize that by properly handling the dependence induced by the diffusion between datasets and landmarks, we could improve the theoretical convergence rate.

\section{Conclusion}
We introduce a novel algorithm called Landmark Alternating Diffusion (LAD) designed to enhance the performance of the commonly employed kernel sensor fusion method, Alternating Diffusion (AD). Furthermore, we integrate an $\alpha$-normalization parameter to mitigate the influence of landmark sampling schemes. Apart from significantly improved computational efficiency, LAD exhibits asymptotic behavior akin to that of AD within a manifold framework.

\nocite{*}
\bibliographystyle{siamplain}
\bibliography{main}

\end{document}